\DeclareRobustCommand{\parhead}[1]{\textbf{#1}~}
\newtheorem{theorem}{Theorem}
\newtheorem{lemma}{Lemma}
\crefname{equation}{eq.}{eqs.}  
\Crefname{equation}{Eq.}{Eqs.}
\icmltitlerunning{Order Matters: Probabilistic Modeling of Node Sequence for Graph Generation}
\begin{document}

\twocolumn[
\icmltitle{Order Matters: Probabilistic Modeling of Node Sequence\\for Graph Generation}



\icmlsetsymbol{equal}{*}

\begin{icmlauthorlist}
\icmlauthor{Xiaohui Chen}{equal,to1}
\icmlauthor{Xu Han}{equal,to1}
\icmlauthor{Jiajing Hu}{to1}
\icmlauthor{Francisco J.\ R.\ Ruiz}{to2}
\icmlauthor{Liping Liu}{to1}
\end{icmlauthorlist}

\icmlaffiliation{to1}{Department of Computer Science, Tufts University, Medford, MA, USA.}

\icmlaffiliation{to2}{DeepMind, London, UK}

\icmlcorrespondingauthor{Xiaohui Chen}{xiaohui.chen@tufts.edu}

\icmlkeywords{Machine Learning, ICML}

\vskip 0.3in
]



\printAffiliationsAndNotice{\icmlEqualContribution} 

\begin{abstract}

A graph generative model defines a distribution over graphs. One type of generative model is constructed by autoregressive neural networks, which sequentially add nodes and edges to generate a graph. However, the likelihood of a graph under the autoregressive model is intractable, as there are numerous sequences leading to the given graph; this makes maximum likelihood estimation challenging. Instead, in this work we derive the exact joint probability over the graph and the node ordering of the sequential process. From the joint, we approximately marginalize out the node orderings and compute a lower bound on the log-likelihood using variational inference. We train graph generative models by maximizing this bound, without using the ad-hoc node orderings of previous methods. Our experiments show that the log-likelihood bound is significantly tighter than the bound of previous schemes. Moreover, the models fitted with the proposed algorithm can generate high-quality graphs that match the structures of target graphs not seen during training. We have made our code publicly available at \hyperref[https://github.com/tufts-ml/graph-generation-vi]{https://github.com/tufts-ml/graph-generation-vi}.
\end{abstract}

\section{Introduction}

Random graphs have been a prominent topic in statistics and graph theory for decades. An early and influential model of random graphs is the Erd\H{o}s–R\'enyi model \citep{erdHos1960evolution}. Since then, various models have been proposed to characterize different global statistics of graphs or networks in the real world \citep{watts1998collective,nowicki2001estimation,cai2016edge}. However, these models are usually not designed for capturing local structures of a graph, such as bonds in a molecule graph.

Autoregressive generative models \citep{you2018graphrnn, li2018learning, liao2019efficient, dai2020scalable, goyal2020graphgen, yuan2020xgnn,shi2020graphaf} are designed to learn fine structures in graph data. These models generate a graph by sequentially adding nodes and edges. Since a graph is invariant to node permutations \citep{veitch2015class}, there are multiple sequences of actions leading to the same graph. When fitting an autoregressive model to data, a particular node ordering $\pi$ of the graph $G$ (called ``generation order'') is used to pin down a single generation sequence of $G$, such as depth-first search (DFS) or breadth-first search (BFS) ordering. The model is then fitted assuming the graph was generated under such ordering $\pi$. Autoregressive models of graphs typically use deep learning tools \citep{guo2020systematic}, such as recurrent neural networks (RNNs), to learn flexible and complex patterns from data.

Choosing a specific ordering $\pi$ does not rigorously correspond to maximum likelihood estimation (MLE). Indeed, to fit the parameters of an autoregressive model via MLE, we need the likelihood of $G$ under the model. One approach for computing $p(G)$ is to sum over all possible node orderings $\pi$, $p(G)=\sum_{\pi} p(G, \pi)$. However, this approach presents some challenges. First, a generation sequence of $G$ corresponds to multiple node orderings when $G$ has non-trivial automorphisms \citep{you2018graphrnn,liao2019efficient}, which require us to carefully derive the joint $p(G, \pi)$ from the model's distribution of generation sequences. Second, the marginalization is intractable in practice due to the number of terms in the sum. As a consequence, $p(G)$ cannot be easily obtained. This does not only make MLE intractable, but also implies that generative models cannot be evaluated in terms of log-likelihood. Instead, other evaluation metrics such as degree distribution are used, but these metrics exhibit some issues for complex graphs \citep{liu2019auto}.


In this work, we provide a method to estimate the marginal log-likelihood, enabling standard statistical model checking and comparison. It also opens the door for other learning tasks that require the log-likelihood of graph data, such as density-based anomaly detection.

We aim at consolidating the foundation of autoregressive graph generative models. In particular, we examine two types of models: one that generates a graph through an evolving graph sequence and one that generates an adjacency matrix. Then we derive the joint $p(G, \pi)$ from each type. Our analysis reveals a relationship between graph generation and graph automorphism.

To fit large graphs via MLE, we avoid the intractable marginalization by performing
approximate posterior inference over the node ordering $\pi$. In particular, we use variational inference (VI) and maximize a lower bound of $\log p(G)$. We design a neural network that infers the probability over $\pi$ for a given graph $G$. Thus, the generative model is trained with node orderings that are likely to generate $G$, avoiding the need to define ad-hoc orderings.

For evaluation, we estimate the graph log-likelihood via importance sampling. Our empirical study indicates that the variational lower bound is relatively tight. We also find that generative models fitted with the proposed method perform better than existing methods according to various metrics, including log-likelihood. Models trained with our method are able to generate new graphs with higher similarity to training graphs than existing approaches.

\parhead{Contributions.}
Our main contributions are as follows:
\begin{compactitem}
    \item we give a rigorous definition of the probability of  node orderings in autoregressive graph generative models;
    \item we analyze the relation between the calculation of graph probabilities and graph automorphism;
    \item we introduce VI to infer node orderings; and 
    \item our training method with VI improves the performance of the model both quantitatively and qualitatively.
\end{compactitem}

\parhead{Related work.}
Autoregressive graph generation models have gained attention due to both the quality of generated graphs and their generation efficiency \citep{you2018graphrnn,li2018learning,liao2019efficient,dai2020scalable,shi2020graphaf}.
In these works, $\pi$ is often decided by DFS or BFS, or it can be a specially designed canonical order. \citet{liao2019efficient} justify this approach by showing that these methods optimize a variational bound on $\log p(G)$. However, when the node orderings $\pi$ are either randomly sampled from a uniform distribution or limited to a small range of canonical orders, these bounds are likely to be loose.

One model that considers a single canonical node ordering $\pi^\star$ is GraphGEN \citep{goyal2020graphgen}. That is, for a given graph $G$, GraphGEN obtains $p(G)$ by considering that the graph was generated according to $\pi^\star$. However, when generating a graph from the model, GraphGEN does not guarantee the canonical order. This design raises a theoretical issue: the frequency of a generation sequence may not converge to the model's probability of that sequence.

\section{Autoregressive Graph Generation}
\label{sec:autoregressive_graph_generation}

In \Cref{subsec:problem}, we introduce the two formulations of an autoregressive generative model---based on either a graph sequence or an adjacency matrix. In \Cref{subsec:generation_order}, we provide an explicit relationship between each formulation and the node ordering $\pi$ to obtain the exact joint $p(G,\pi)$. 

\begin{figure*}[t]
    \begin{center}
    \includegraphics[width=0.98\textwidth]{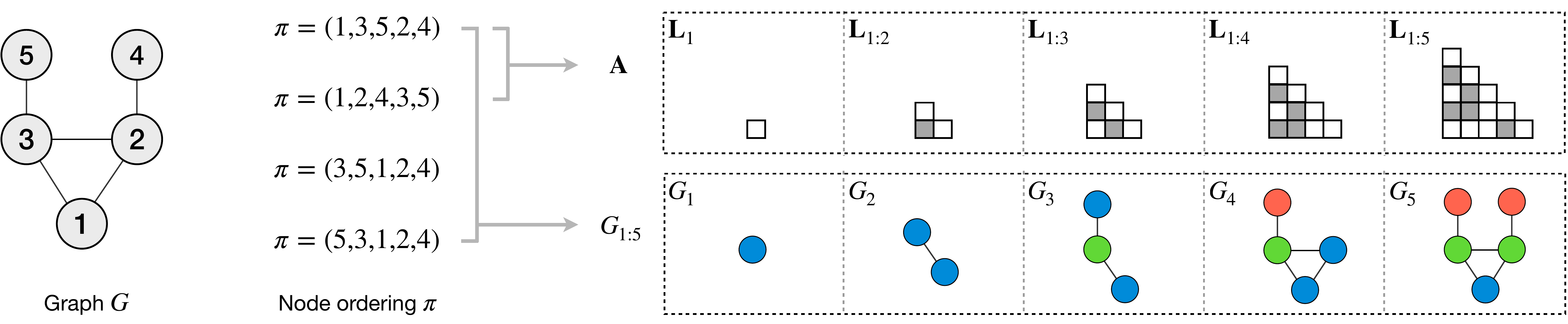}
    \end{center}
    \caption{An overview of the relationship between the node ordering $\pi$ and the adjacency matrix $\bA$ or the graph sequence $G_{1:n}$. Given a graph $G$ (left), several node orderings $\pi$ (middle) specify the same adjacency matrix $\bA$ or graph sequence $G_{1:n}$, so we cannot uniquely identify $\pi$ from either $\bA$ or $G_{1:n}$. The node orderings that give the same $\bA$ give the same $G_{1:n}$, but not vice-versa. In the plot of $G_{1:n}$, for each subgraph $G_t$, nodes in the same orbit are labeled with the same color.}
    \label{fig:framework}
\end{figure*}

\subsection{Problem definition}
\label{subsec:problem}

Let $V = \{1, \ldots, n\}$ and $E$ be the node set and edge set of a graph $G = (V, E)$ with $|V| = n$ nodes. A node ordering $\pi = (\pi_1, \ldots, \pi_n)$ is a permutation of the elements in $V$. We consider $G$ is unlabeled: permuting the nodes does not change the graph. The graph has a class $\calA(G)$ of adjacency matrices corresponding to different node orderings---for each $\pi$, there is a unique adjacency matrix $\bA \in \{0, 1\}^{n \times n}$ that indicates which nodes are connected. We only consider finite graphs without self-loops and multi-edges, so $\bA$ is symmetric and its diagonal elements are zero. Let $\calG$ denote the space of such graphs.

A generative model of unlabeled graphs defines a distribution $p(G)$ over $\calG$. The distribution must be invariant to permutation of graph nodes. In this work, we focus on autoregressive generative models. We next review two formulations of autoregressive generative models.

The autoregressive model\footnote{%
The formulation by \citet{liao2019efficient} generates graph nodes in batches, but it can also be expressed as an autoregressive model in this form. Similarly, GraphGEN \citep{goyal2020graphgen}, which generates the sparse form of each row of $\bL$, is also in this form.}
by \citet{you2018graphrnn,liao2019efficient,shi2020graphaf,goyal2020graphgen} operates with the adjacency matrix $\bA$. In particular, the model generates a lower triangular matrix $\bL$ by sequentially generating each row of $\bL$. After every row is generated, it may stop with a special termination symbol, denoted by $\otimes$. Since an adjacency matrix $\bA = \bL + \bL^\top$, each $\bL$ uniquely determines $\bA$ and vice-versa; thus $p(\bA) = p(\bL)$ and 
\begin{align}
    \label{eq:graph-gen}
    p(\bA)= p(\otimes | \bL) \prod_{t = 2}^{n} p(L_{t,:} | \bL_{1:(t - 1)}).
\end{align}
Here, $\bL_{1:(t - 1)}$ denotes the submatrix formed from the first $t-1$ rows of $\bL$, and $L_{t,:}$ is the $t$-th row of $\bL$. The probability $p(L_{1, :})=1$ is left out here. The adjacency matrix $\bA$ fully defines a graph $G$. 

The deep generative model of graphs (DeepGMG) \citep{li2018learning} defines the sequential process as follows. It starts with a graph $G_1$ with one node, and at each step $t=2,3,\ldots$, it obtains a graph $G_t$ by adding a new node as well as some edges connecting the new node to the previously generated graph $G_{t-1}$. The probability of the sequence $G_{1:n} = (G_1, \ldots, G_n)$ is
\begin{align}
  p(G_{1:n})  = p( \otimes | G_n)  \prod_{t=2}^{n} p(G_t | G_{t - 1}).
\end{align}
The probability $p(G_1)=1$ is left out here as well. Note that, after $n$ steps, the graph $G_n$ is the generated graph $G$.

A given graph $G$ does not naturally have either a unique adjacency matrix $\bA$ or a unique graph sequence $G_{1:n}$. Therefore, when fitting these models, we need to specify a node ordering $\pi$ to pin down a single adjacency matrix $\bA$ or sequence $G_{1:n}$. We depart from these two formulations and consider a formal treatment of the node ordering.

\subsection{The generation order as a random variable}
\label{subsec:generation_order}

Here, we relate the sequential processes from \Cref{subsec:problem} with the node ordering $\pi$. First, we consider the marginal likelihood $p(G)$. Under the first formulation, we obtain $p(G)$ by marginalizing over all adjacency matrices of $G$,
\begin{align}
    \label{eq:marg-adj}
    p(G) = \sum_{\bA \in \calA(G)} p(\bA).
\end{align}
Under the second formulation, the marginalization is over all graph sequences that lead to $G$, i.e.,
\begin{align}
    \label{eq:marg-seq}
    p(G) = \sum_{G_{1:n}:~ G_n = G} p(G_{1:n}).
\end{align}

In both cases, the likelihood is intractable because the marginalization space is hard to specify---it involves finding all unique adjacency matrices or graph sequences \citep{liao2019efficient}. To obtain $p(G)$, many works use instead the node ordering $\pi$ as the marginalization variable since the space of $\pi$ is easier to characterize than that of $\bA$ or $G_{1:n}$ for a graph $G$. To obtain $p(G,\pi)$, we need to clarify the relationship between $\bA$ or $G_{1:n}$ and $\pi$, as we discuss next.

The sequential process from \Cref{subsec:generation_order} generates an adjacency matrix or graph sequence; however in general we cannot identify $\pi$ from either of these variables. To see this, consider first the relation between $\bA$ and the node ordering $\pi$. Given the graph $G$, $\pi$ determines $\bA$ because the $t$-th row of $\bA$ correponds to node $\pi_t$. However, the converse is not necessarily true: a matrix $\bA$ corresponds to multiple node orderings if $G$ has non-trivial automorphism \citep{liao2019efficient}. We provide an example in \Cref{fig:framework}, where each of the first two node orderings ($\pi=(1,3,5,2,4)$ and $\pi=(1,2,4,3,5)$) determines $\bA$, but we cannot uniquely identify one of them from $\bA$ (in particular, we cannot distinguish the node pairs $(2,4)$ and $(3,5)$). The same is true for the graph sequence $G_{1:n}$: a node ordering $\pi$ defines a graph sequence $G_{1:n}$, but not vice-versa (see \Cref{fig:framework}).

Similarly, the relation between $\bA$ and $G_{1:n}$ is not unique. An adjacency matrix $\bA$ determines a graph sequence $G_{1:n}$, but a graph sequence does not determine a unique $\bA$. As an example, in \Cref{fig:framework} all four node orderings generate the same $G_{1:n}$, but the last two node orderings determine two adjacency matrices different from the shown matrix $\bA$.

In summary, $(G, \pi)$ determines $\bA$, which determines $G_{1:n}$, but the reverse is not true in general. This implies that \emph{an autoregressive generative model (which generates $\bA$ or $G_{1:n}$) does not specify a distribution over $\pi$}.

We next make $\pi$ a random variable and formally specify the joint $p(G, \pi)$. 
Given the graph $G = (V, E)$, let $\Pi[\bA]$ be the set of all possible node orderings $\pi$ that give the same adjacency $\bA$; similarly, let $\Pi[G_{1:n}]$ be the set of all node orderings that give the same graph sequence $G_{1:n}$, i.e., 
\begin{align}
\Pi[\bA] &= \{\pi: A_{\pi_i, \pi_j} = \mathbbm{1}[(\pi_i, \pi_j) \in E], \forall i, j \in V \} \nonumber \\
\Pi[G_{1:n}] &= \{\pi: G[\pi_{1:t}] = G_{t}, \forall t = 1, \ldots, n\}. \nonumber 
\end{align}
Here, $\mathbbm{1}[\cdot]$ is $1$ or $0$ depending on whether the condition in the bracket is true or false, and $G[\pi_{1:t}]$ is the induced subgraph of $G$ from the first $t$ nodes in the ordering $\pi$. 
Then we let the conditional distribution $p(\pi | \bA)$ be uniform, i.e.,
\begin{align}
    p(\pi | \bA) = \frac{1}{\big| ~\Pi[\bA] ~\big|}.
\end{align}
The set $\Pi[\bA]$ turns out to be the set of automorphisms\footnote{%
A function  $f:V \rightarrow V$ is an automorphism of $G=(V,E)$ if $(u, v) \in E \Longleftrightarrow (f(u), f(v)) \in E$.}
of the graph $G$. This is because every node ordering $\pi \in \Pi[\bA]$ permutes rows and columns of $\bA$ but does not change $\bA$; that is, each $\pi$ creates an automorphism. Therefore, obtaining $p(\pi | \bA)$ amounts to finding the number of automorphisms of a graph. Fortunately, this is a well-studied classic problem in graph theory. The time complexity of computing $|\Pi[\bA]|$ is $\exp{\left(\mathcal{O}(\sqrt{n \log n})\right)}$ \citep{beals1999finding}. The Nauty package \citep{mckay2013nauty} uses various heuristics and can efficiently find this number for most graphs. In practice, it can compute $|\Pi[\bA]|$ for a graph with thousands of nodes within $10^{-3}$ seconds.

For the formulation with graph sequences, the analysis is more involved. We define the conditional $p(\pi | G_{1:n})$ as a uniform distribution,
\begin{align}
    p(\pi | G_{1:n}) = \frac{1}{\big|~\Pi[G_{1:n}]~\big|}.
\end{align}
We discuss below how to obtain $|\Pi[G_{1:n}]|$ in practice, but first we formally specify the joint $p(G, \pi)$ and the likelihood $p(G)$. The joint can be obtained from $p(\bA)$ or $p(G_{1:n})$ as
\begin{align}
    \label{eq:joint_p_G_pi}
    p(G, \pi) =  \frac{1}{\big|~\Pi[\bA]~\big|} p(\bA) = \frac{1}{\big|~\Pi[G_{1:n}]~\big|} p(G_{1:n}).
\end{align}
(This expression assumes that $\bA\in\calA(G)$ and that $G_n = G$.) The marginal likelihood $p(G)$ of a graph can be obtained by marginalizing out the node ordering $\pi$ from \Cref{eq:joint_p_G_pi},
\begin{align}
    \label{objective marginal}
    p(G) =\sum_{\pi} p(G, \pi).
\end{align}
Obtaining $p(G)$ from \Cref{objective marginal} is easier than from \Cref{eq:marg-adj} or \Cref{eq:marg-seq} because the marginalization space is easier to characterize, but it remains intractable because of the large number of terms in the sum. In \Cref{subsec:VI}, we derive a variational bound on $p(G)$ by approximating the posterior distribution $p(\pi|G)$, for which we use the definition of the joint in \Cref{eq:joint_p_G_pi}.

\parhead{Obtaining $|\Pi[G_{1:n}]|$.}
We now discuss the practical calculation of $|\Pi[G_{1:n}]|$. Like $|\Pi[\bA]|$, it is also closely related to graph automorphism. Let $\mathrm{Aut}(G)$ denote the set of all automorphisms of $G$, then the \textit{orbit} of a node $u\in V$ is $r(G, u) = \{v\in V: \exists f \in \mathrm{Aut}(G), v = f(u)\}$ \citep{godsil2001algebraic}. Intuitively, the orbit of $u$ contains all nodes that are ``symmetric'' to $u$. In \Cref{fig:framework}, the orbit of node $3$ is $\{2, 3\}$, and the orbit of node $5$ is $\{4, 5\}$. The theorem below expresses $|\Pi[G_{1:n}]|$ in terms of the cardinality of the orbits produced during the sequential generative process.

\begin{theorem} For a graph sequence $G_{1:n}$, we have
    \begin{align}
        \label{eq:repetition}
        \big|~\Pi[G_{1:n}]~\big| = \prod_{t=1}^n ~ \big|~r(G_t, \pi_t)~\big|.
    \end{align}
\end{theorem}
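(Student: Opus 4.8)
The plan is to prove the identity by induction on the sequence length $n$, peeling off the last node of the ordering and showing that each peeling contributes exactly one factor $|r(G_n,\pi_n)|$. The base case $n=1$ is immediate: $G_1$ is a single node, its orbit is itself, and $\Pi[G_1]$ contains only the trivial ordering, so $|\Pi[G_1]|=1=|r(G_1,\pi_1)|$. For the inductive step I would establish the one-step recursion
\[
\big|~\Pi[G_{1:n}]~\big| = \big|~r(G_n,\pi_n)~\big|\cdot\big|~\Pi[G_{1:(n-1)}]~\big|,
\]
after which the claim follows by unrolling the recursion down to $t=1$.

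To prove the recursion, I would partition $\Pi[G_{1:n}]$ according to the value of the last entry $\pi_n$. Two facts suffice: (i) the set of nodes $w$ that occur as the last entry of \emph{some} valid ordering is exactly the orbit $r(G_n,\pi_n)$; and (ii) for each such $w$, the number of valid orderings ending in $w$ equals $|\Pi[G_{1:(n-1)}]|$. Granting these, summing over $w\in r(G_n,\pi_n)$ immediately gives the recursion.

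For (i), fix a reference valid ordering $\pi\in\Pi[G_{1:n}]$. If $\pi'$ is another valid ordering, the permutation $g$ defined by $g(\pi_s)=\pi'_s$ is an automorphism of $G_n$: the constraints $G[\pi_{1:t}]=G_t=G[\pi'_{1:t}]$ force $(\pi_s,\pi_{s'})\in E \iff (\pi'_s,\pi'_{s'})\in E$, so $g$ preserves adjacency, and $g(\pi_n)=\pi'_n$ shows every achievable last node lies in $r(G_n,\pi_n)$. Conversely, given $f\in\mathrm{Aut}(G_n)$ with $f(\pi_n)=w$, the ordering $f\circ\pi$ is again valid, because $f$ carries each induced subgraph $G[\pi_{1:t}]$ isomorphically onto $G[f(\pi_{1:t})]$ while sending the step-$s$ node to the step-$s$ node; and it ends in $w$. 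This equivalence is the crux of the argument, and the step I expect to be most delicate: it requires tracking not merely isomorphism of induced subgraphs but the labelling that matches the $t$-th node of $G_t$ with $\pi_t$, so that transporting a valid ordering by an automorphism again satisfies every equality $G[\pi'_{1:t}]=G_t$.

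For (ii), a valid ordering ending in $w$ is determined by its prefix $\pi'_{1:(n-1)}$, an ordering of $V(G_n)\setminus\{w\}$ satisfying $G[\pi'_{1:t}]=G_t$ for $t\le n-1$; since induced subgraphs avoiding $w$ are identical whether computed in $G_n$ or in $G_n-w$, such prefixes are exactly the realizing orderings of the subsequence $G_{1:(n-1)}$ on the graph $G_n-w$. Because $w$ lies in the orbit, $G_n-w$ is isomorphic to $G_{n-1}$ compatibly with the whole subsequence, and as $|\Pi[\cdot]|$ depends only on the isomorphism type of the sequence, this count equals $|\Pi[G_{1:(n-1)}]|$ for every such $w$. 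Combining (i) and (ii) yields the recursion and hence the theorem; the same reasoning also shows that the product $\prod_t|r(G_t,\pi_t)|$ does not depend on which valid $\pi$ is chosen, since in a valid ordering each $\pi_t$ plays the role of the newest node of $G_t$.
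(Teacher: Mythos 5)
Your proposal follows the same skeleton as the paper's proof: induction on $n$ via the recursion $|\Pi[G_{1:n}]| = |r(G_n,\pi_n)|\cdot|\Pi[G_{1:(n-1)}]|$, obtained by partitioning valid orderings according to their last node. Your step (ii) (counting prefixes through the isomorphism $G_n - w \cong G_{n-1}$) and the easy half of your step (i) (automorphic images of $\pi_n$ are achievable last nodes) are exactly the paper's argument. The problem is the hard half of (i) --- that \emph{every} achievable last node lies in $r(G_n,\pi_n)$ --- which is precisely the step you flagged as delicate, and your argument for it does not work.

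You claim that for two valid orderings $\pi,\pi'$ the position-wise map $g(\pi_s)=\pi'_s$ is an automorphism because the constraints $G[\pi_{1:t}]=G_t=G[\pi'_{1:t}]$ ``force $(\pi_s,\pi_{s'})\in E \iff (\pi'_s,\pi'_{s'})\in E$.'' They do not: equality with $G_t$ here is equality of \emph{unlabeled} graphs (isomorphism of the induced prefix subgraphs), not edge-by-edge agreement under the positional correspondence. Concretely, let $G$ be the path with edges $(1,2),(2,3)$ and take $\pi=(1,2,3)$, $\pi'=(2,3,1)$; both have graph sequence (single node, single edge, path), yet $g$ maps the edge $(2,3)$ to the non-edge $(3,1)$, so $g$ is not an automorphism. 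More structurally, if your claim were true, any two orderings with the same graph sequence would determine the same adjacency matrix (orderings related by an automorphism give equal $\bA$), i.e.\ $\Pi[G_{1:n}]=\Pi[\bA]$; but the paper's Figure~1 --- and the path example, where $|\Pi[\bA]|=2$ while $|\Pi[G_{1:3}]|=1\cdot2\cdot2=4$ --- shows that $\Pi[G_{1:n}]$ is generally strictly larger than $\Pi[\bA]$, and under a reading in which your forcing step is valid the theorem itself would be false. What fills this gap in the paper is its Lemma~1: $u$ and $v$ are in the same orbit if and only if $G[V\setminus\{u\}]$ and $G[V\setminus\{v\}]$ are isomorphic. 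From validity of $\pi'$ one gets only $G_n-\pi'_n \cong G_{n-1} \cong G_n-\pi_n$, and it is the nontrivial ``if'' direction of that lemma --- which the paper proves by a separate induction in the appendix --- that places $\pi'_n$ in $r(G_n,\pi_n)$. Your proposal contains no substitute for this ingredient, so the factor $|r(G_n,\pi_n)|$ in your recursion, and with it the well-definedness of the product over choices of $\pi$, is left unjustified.
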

We show an example before providing the proof. Suppose that $G_n$ is the complete graph with $n$ nodes, then each $G_t$ in the sequence is a complete graph with $t$ nodes. Applying the theorem with $r(G_t, \pi_t) = t$ gives $|\Pi[G_{1:n}]| = n!$, which means that all $n!$ permutations use the same graph sequence.
\begin{proof}
    The proof of the theorem needs the following lemma, whose proof is in \Cref{app:proof_lemma1}.
    \begin{lemma}
        \label{lma1}
        Let $G[V\backslash\{u\}]$ and  $G[V\backslash\{v\}]$ respectively denote the subgraphs induced by $V\backslash\{u\}$ and   $V\backslash\{v\}$, then $u$ and $v$ are in the same orbit if and only if $G[V\backslash\{u\}]$ and $G[V\backslash\{v\}]$ are isomorphic.
    \end{lemma}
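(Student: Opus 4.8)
The plan is to prove the two implications of the biconditional separately. The forward direction (same orbit $\Rightarrow$ isomorphic vertex-deleted subgraphs) is a direct restriction argument, whereas the backward direction (isomorphic vertex-deleted subgraphs $\Rightarrow$ same orbit) requires extending an isomorphism between two subgraphs to an automorphism of the whole graph. I expect essentially all of the difficulty to live in this second implication.

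For the forward direction, suppose $u$ and $v$ lie in the same orbit, so there is an automorphism $f \in \mathrm{Aut}(G)$ with $f(u) = v$. Since $f$ is a bijection on $V$ and $f(u)=v$, its restriction $f|_{V\setminus\{u\}}$ is a bijection from $V\setminus\{u\}$ onto $V\setminus\{v\}$, and because $f$ preserves the edge relation this restriction preserves adjacency among the remaining vertices. Hence $f|_{V\setminus\{u\}}$ is an isomorphism $G[V\setminus\{u\}] \to G[V\setminus\{v\}]$, and the ``only if'' direction follows without obstruction.

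For the backward direction, suppose $g : V\setminus\{u\} \to V\setminus\{v\}$ is an isomorphism of the deleted subgraphs. The natural candidate automorphism is the extension $f$ defined by $f(u) = v$ and $f(w) = g(w)$ for $w \neq u$. By construction $f$ is a bijection of $V$ and preserves every edge and non-edge not incident to $u$, so the only remaining condition is adjacency across the newly mapped vertex, namely $\{u,w\}\in E \iff \{v, g(w)\}\in E$ for every $w \neq u$. \textbf{This verification is exactly the step I expect to be the main obstacle}, because an arbitrary isomorphism $g$ of the deleted subgraphs carries no information about how $u$ and $v$ attach to the rest of the graph. One necessary consistency condition does come for free: counting edges gives $|E(G)|-\deg_G(u)=|E(G[V\setminus\{u\}])|=|E(G[V\setminus\{v\}])|=|E(G)|-\deg_G(v)$, so $\deg_G(u)=\deg_G(v)$. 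But matching degrees is far weaker than matching neighborhoods.

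The crux, therefore, is to argue that one can always \emph{choose} (or repair) the isomorphism $g$ so that it additionally respects the neighborhoods of the deleted vertices. I would try to exploit invariants preserved by $g$ (degrees, distances, counts of local subgraphs) to force $N(u)$ and $N(v)$ into correspondence, and, failing that, to adjust $g$ by post-composing with automorphisms of $G[V\setminus\{v\}]$. I should flag, however, that this direction is genuinely delicate: the backward implication as stated is equivalent to the claim that $G$ has no \emph{pseudo-similar} vertices (distinct vertices with isomorphic vertex-deleted subgraphs but lying in different orbits), and such vertices do exist for some graphs. I would therefore scrutinize this step most carefully, expecting that a fully rigorous argument needs either an extra hypothesis on $G$ (or on $g$) or a structural feature specific to the subgraphs $G_t$ arising in the sequential generative process.
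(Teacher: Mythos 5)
Your forward direction is exactly the paper's: restrict an automorphism carrying one vertex to the other to the complement of the deleted vertex and observe that the restriction is an isomorphism $G[V\setminus\{u\}]\to G[V\setminus\{v\}]$. The substance of your proposal, though, is your treatment of the backward direction, and your suspicion there is correct: the lemma as stated is \emph{false} in general. The implication ``$G[V\setminus\{u\}]\cong G[V\setminus\{v\}]$ implies that $u$ and $v$ lie in the same orbit'' is precisely the assertion that no graph contains \emph{pseudosimilar} vertices, and graphs with pseudosimilar vertices are well documented: they were first exhibited by Harary and Palmer (1966), and small explicit examples (on as few as eight vertices) together with general constructions are given by Kimble, Schwenk and Stockmeyer (1981) and by Godsil and Kocay (1982). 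Consequently, no amount of ``repairing'' $g$ by composing with automorphisms of $G[V\setminus\{v\}]$, and no invariant-matching argument, can close the gap you isolated; the obstruction is intrinsic, not a defect of your strategy.

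It is instructive to compare with the paper's own proof (\Cref{app:proof_lemma1}), which attempts the backward direction by induction on the number of vertices and founders at exactly the step you predicted, in two places. First, from $G[V\setminus\{u'\}]\cong G[V\setminus\{v\}]$ it infers $G[V\setminus\{u,u'\}]\cong G[V\setminus\{u,v\}]$ by ``removing $u$ from both graphs''; this tacitly assumes the isomorphism fixes $u$, which nothing guarantees. Second, it takes $g\in\mathrm{Aut}(G[V\setminus\{u\}])$ with $g(u')=v$, extends it by setting $g(u)=u$, and asserts $g\circ f\in\mathrm{Aut}(G)$; the edges incident to $u$ are never checked, which is verbatim the unverified condition $\{u,w\}\in E \iff \{f(u),g(w)\}\in E$ that you flagged as the crux. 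The closing construction suffers the same defect: it extends an isomorphism $h\colon V\setminus\{v\}\to V\setminus\{u\}$ by $h(v)=u$ and reasons via powers of $h$, but edge preservation at the extension point is again not established---and had that argument been sound, it would prove the lemma in one stroke, making the induction superfluous, which is itself a warning sign.

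Your closing remark is also the right one. Because the ``if'' direction fails on graphs with pseudosimilar vertices, the product formula of \Cref{eq:repetition} can undercount $\big|\Pi[G_{1:n}]\big|$ (two admissible final vertices need not share an orbit), and then the inequalities \Cref{eq:beta} and \Cref{eq:approx_pgpi} are no longer guaranteed; the statement would need an explicit hypothesis excluding pseudosimilar pairs in every $G_t$, and nothing in the sequential generative process supplies such a feature automatically. In summary: as a blind attempt your proposal proves the only true direction by the same argument as the paper, does not (and cannot) complete the other direction, and correctly diagnoses why---the genuine gap lies in the lemma and in the paper's proof of it, not in your approach.
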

    We prove \Cref{eq:repetition} by induction. Let $\pi \in \Pi[G_{1:n}]$, and consider the number of node orderings that give the same graph sequence as $\pi$. When $n = 1$, there is only one node in the graph, and then the base case is true: $|\Pi[G_{1}]| = |r(G_1, 1)| = 1$. Then, we show the induction rule $|\Pi[G_{1:n}]| = |\Pi[G_{1:(n-1)}]| \cdot |r(G_n, \pi_n)|$. If a node ordering $\pi'$ of $G_n$ gives the same graph sequence $G_{1:n}$ as $\pi$, then nodes $\pi'_n$ and $\pi_n$ must be in the same orbit by the lemma. There are $|r(G_n, \pi_n)|$ choices of $\pi'_n$. Then, consider the number of choices for $\pi'_{1:(n-1)}$. Since removing $\pi_n$ and removing  $\pi'_n$ give two isomorphic graphs, $\pi'_{1:(n-1)}$ can take any node ordering in $\Pi[G_{1:(n-1)}]$ and thus has $|\Pi[G_{1:(n-1)}]|$ possible values. Together, $\pi'$ has $|\Pi[G_{1:n}]|$ possible values, which implies the induction rule. 
\end{proof}

To compute $r(G_t, \pi_t)$, we need to identify the orbit of the node $\pi_t$, which can be expensive for some graphs. Thus, we resort instead to an approximation of $r(G_t, \pi_t)$ that ultimately results in a lower bound of $p(G)$. The approximation is based on the color refinement algorithm (1-Weisfeiler-Lehman), which approximately obtains the orbit of a node. The algorithm uses node colors to partition nodes and always assigns the same color to nodes in the same orbit \citep{arvind2017graph}. Let $\bc = \mathrm{CR}(G)$ be node colors from the color refinement algorithm; then $r_{\mathrm{CR}}(G, u) = \{v \in V: c_v=c_u\} \supseteq r(G, u)$ (the two sets are equal for most cases since the color refinement algorithm is very effective in practice). Then, we can use the result of the algorithm to obtain a bound of \Cref{eq:repetition},
\begin{align}
    \beta(G_{1:n}) \triangleq  \prod_{t=1}^n |r_{\mathrm{CR}}(G_t, \pi_t)| \geq \big|~\Pi[G_{1:n}] ~\big|. 
    \label{eq:beta}
\end{align}
This implies a bound on the joint $p(G, \pi)$ from \Cref{eq:joint_p_G_pi},
\begin{align}
    \widehat{p}(G,\pi) \triangleq  \frac{1}{\beta(G_{1:n})} p(G_{1:n}) \leq p(G,\pi).
\label{eq:approx_pgpi} 
\end{align}
This bound is tight in practice because of the effectiveness of the color refinement algorithm. In \Cref{subsec:VI}, we optimize a variational bound on the marginal $\sum_{\pi} \widehat{p}(G,\pi)\approx p(G)$, but we write $p(G)$ and $p(G,\pi)$ for simplicity.

\parhead{Can we avoid the marginalization by using a single generation order for a graph?} GraphGEN \citep{goyal2020graphgen} defines a single canonical node ordering $\pi^\star$ for a given graph $G$. Then, there is only one adjacency matrix $\bA^\star$ corresponding to $\pi^\star$, and GraphGEN defines $p(G)=p(\bA^\star)$, therefore avoiding the marginalization over $\pi$. However, GraphGEN does not restrict the generation order when sampling from the model; in fact there is not a straightforward way to control the generation order because the canonical order is computed retrospectively after $G$ is generated. As a result, a sample from GraphGEN may be generated with a node ordering that is different from the canonical order of the resulting graph. Thus, the sampling probability of $G$ is likely to be inconsistent with the probability $p(G)$ that the model assigns to $G$. That is, the sampling frequency of $G$ will not converge to the model's $p(G)$, which is a severe problem for a statistical model. To estimate how different the sampling and the model probabilities are, we tested the generation procedure of GraphGEN, and we found that only 9.1\% of the generated graphs use the canonical order that is used for the calculation of $p(G)$ during training.

\section{Training a Generative Model using VI}
\label{subsec:VI}

Here we present a method to fit an autoregressive graph generation model that does not rely on any constraints on the node ordering. We use the notation $p_{\theta}(G, \pi)$ to explicitly indicate that the joint depends on the parameters $\theta$ of the generative model---either $p_{\theta}(\bA)$ or $p_{\theta}(G_{1:n})$. For moderately large graphs, the MLE of $\theta$ is computationally intractable because the marginalization of $\pi$ from \Cref{objective marginal} involves $n!$ terms; we sidestep this issue with a VI method \citep{blei2017variational} that maximizes a lower bound on $\log p_{\theta}(G)$.

The variational lower bound $L(\theta,\phi, G) \leq \log p_{\theta}(G)$ is  
\begin{align}\label{eq:elbo}
    L(\theta,\phi, G) \!=\! \E{q_{\phi}(\pi | G)}{\log p_{\theta}(G, \pi) \!-\! \log q_{\phi}(\pi | G)}\!.
\end{align}
Here $q_{\phi}(\pi|G)$ is a variational distribution to approximate the posterior $p_{\theta}(\pi|G)$. Its parameters  are denoted by $\phi$. 
We fit the model parameters $\theta$ and the variational parameters $\phi$ by maximizing \Cref{eq:elbo} w.r.t.\ both parameters. 
We discuss the form of the variational distribution $q_{\phi}(\pi|G)$ in \Cref{subsec:variational_distribution} and the optimization algorithm in  \Cref{subsec:maximizing_elbo}. 

\begin{figure}[t]
    \begin{center}
    \includegraphics[width=0.48\textwidth]{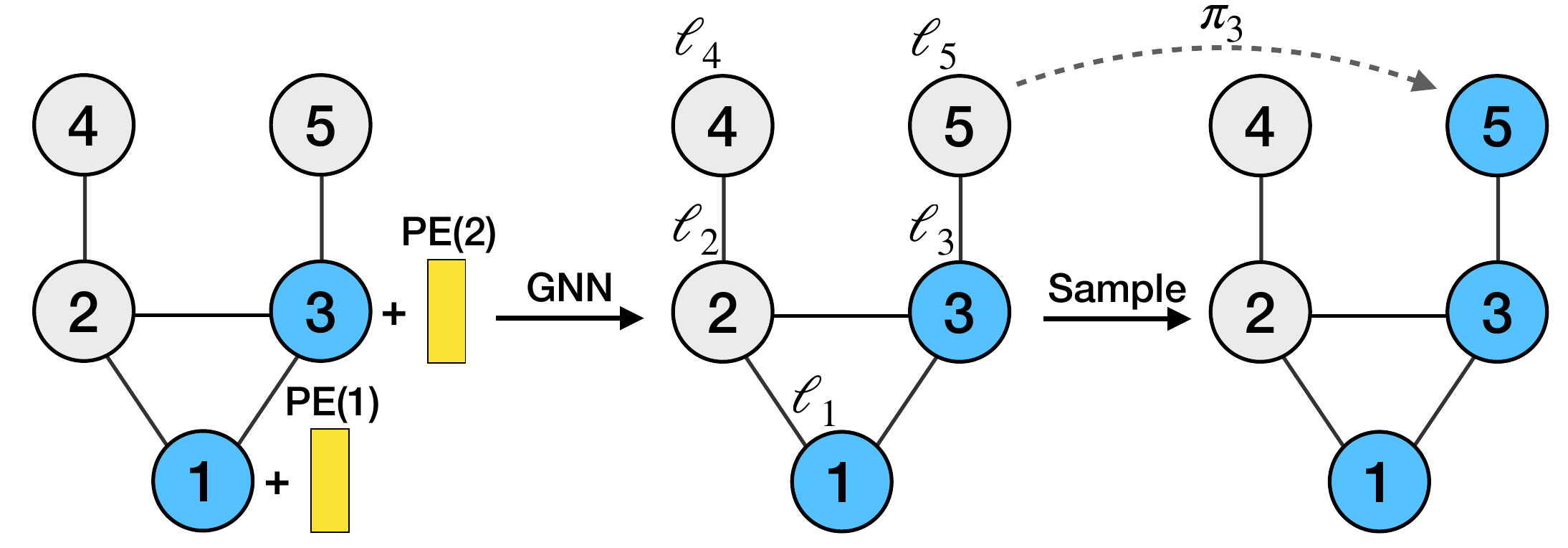}
    \end{center}
    \vspace*{-8pt}
    \caption{Illustration of the sampling procedure from the variational distribution. (Left) To sample node $\pi_3=5$ given $\pi_1=1$ and $\pi_2=3$, we first augment the initial node features with positional embeddings using \Cref{eq:p-e}. (Middle)
    The GNN obtains the logits for each node using \Cref{eq:logits}. (Right) We sample $\pi_3$ from the categorical distribution (\Cref{eq:q-t}).}
    \label{fig:vi}
\end{figure}

\subsection{The variational distribution}
\label{subsec:variational_distribution}


The variational distribution $q_{\phi}(\pi|G)$ approximates the intractable posterior $p_{\theta}(\pi|G)$. To obtain a good approximation, we let $q_{\phi}(\pi|G)$ incorporate both graph topological information as well as the information from partially generated graphs according to the order $\pi$. We use a Recurrent Ordering Structure (ROS) to specify $q_{\phi}(\pi|G)$,
\begin{align}
    q_{\phi}(\pi|G) = \prod_{t=1}^n q_{\phi}(\pi_t|G,\pi_{1:(t-1)}).
\end{align}
At each step, the distribution of the $t$-th node $\pi_t$ depends on both $G$ and the partial order $\pi_{1:(t-1)}$. In particular, the conditional $q_{\phi}(\pi_t|G,\pi_{1:(t-1)})$ is a categorical distribution over $\pi_t$; we denote its logits by $\{\ell_{k}^t\}$, then
\begin{align}
    \label{eq:q-t}
    q_{\phi}(\pi_t| G,\pi_{1:(t-1)}) = \frac{\exp\left\{\ell_{\pi_t}^t\right\}}{\sum_{k \notin \pi_{1:(t-1)}}\!\!\! \exp\left\{\ell_{k}^t\right\}}, ~\pi_t\notin \pi_{1:(t-1)}.
\end{align}
The logits are functions of $(G,\pi_{1:(t-1)})$. We use a graph neural network (GNN) as the recurrent unit that outputs the logits $\{\ell_{k}^t\}$ of the conditional $q_{\phi}(\pi_t|G,\pi_{1:(t-1)})$, since GNNs are powerful tools to extract information from graphs. The input of a GNN usually consists of the graph $G$ and its node features; in our case the input is  $\pi_{1:(t-1)}$ and $G$. To encode $\pi_{1:(t-1)}$ into an initial set of node features $\{\bh_1^t,...,\bh_n^t\}$, we use a positional embedding $\mathrm{PE}(\cdot)$ \citep{vaswani2017attention}, such that
\begin{align}
    \label{eq:p-e}
    \bh_{j}^t = 
    \begin{cases}
    \bh_0 + \mathrm{PE}(t), &  \mbox{if  $j = \pi_{t'}$ for $t' < t$,} \\
    \bh_0, &  \mbox{otherwise.}
    \end{cases}
\end{align}
Here, $\bh_0$ is a learnable vector used globally for all steps and nodes. (If the graph data contains node features, we can use these node features to replace $\bh_0$.)
Then, the GNN computes the logits  for all nodes. 
\begin{align}
    \label{eq:logits}
    (\ell_1^t, \ldots, \ell_n^t) = \mathrm{GNN}_{\phi}(G, (\bh_1^t, \ldots, \bh_n^t)).
\end{align}

\begin{algorithm}[t]
  \caption{VI algorithm for training a graph model based on the adjacency matrix $\bA$ }
  \label{alg:training_algorithm_graphrnn}
  \begin{algorithmic}
    \STATE {\bfseries Input:} Dataset of graphs $\calG = \{G_1,\ldots,G_n\}$, 
    model $p_{\theta}$, variational distribution $q_{\phi}$, sample size $S$ 
    \STATE {\bfseries Output:} Learned parameters $\theta$ and $\phi$
    \REPEAT
      \FOR{$G \in \calG$}
        \STATE Sample $\pi^{(1)},\ldots, \pi^{(S)}  \overset{\textrm{iid}}{\sim} q_{\phi}(\pi|G)$
        \STATE Obtain $\bA^{(s)}$ from $(G, \pi^{(s)})$
        \STATE Set $p_{\theta}(G,\pi^{(s)}) = \frac{1}{|\Pi[\bA^{(s)}]|} p_{\theta}(\bA^{(s)})$
        \STATE Compute $\nabla_{\phi} \leftarrow \nabla_{\phi} L(\theta,\phi,G)$
        \STATE Compute $\nabla_{\theta} \leftarrow \nabla_{\theta} L(\theta,\phi,G)$
        \STATE Update $\phi$, $\theta$ using the gradients $\nabla_{\phi}$, $\nabla_{\theta}$
      \ENDFOR
    \UNTIL{convergence of the parameters ($\theta$, $\phi$)}
  \end{algorithmic}
\end{algorithm}
Only logits for nodes not in $\pi_{1:(t-1)}$ are used for the calculation of \eqref{eq:q-t}. 
\Cref{fig:vi} illustrates the process to sample from the conditional $q_{\phi}(\pi_t|G, \pi_{1:(t-1)})$.

The choice of the specific GNN is flexible. In our experiments, the graph attention network (GAT) \citep{velivckovic2017graph} performed better than the graph convolutional network (GCN) \citep{wu2019simplifying} and the approximate personalized propagation of neural predictions (APPNP) \citep{klicpera2018predict}. All results in \Cref{subsec:exp} use the GAT.

\subsection{Maximizing the variational lower bound}
\label{subsec:maximizing_elbo}

To maximize the lower bound $L(\theta,\phi,G)$ in \Cref{eq:elbo}, we need its gradients w.r.t.\ both $\theta$ and $\phi$, which are intractable. We obtain the gradient w.r.t.\ $\theta$ via Monte Carlo estimation. We obtain the gradient w.r.t.\ $\phi$ using the score function estimator \citep{williams1992simple,Carbonetto2009,Paisley2012,Ranganath2014}.
The estimators are obtained with $S$ samples $\pi^{(s)}\sim q_{\phi}(\pi|G)$ for $s=1,\ldots,S$, yielding
\begin{align}
    \nabla_{\theta} L(\theta,\phi,G) & \approx \frac{1}{S} \sum_{s=1}^S  \nabla_{\theta} \log p_{\theta}(G, \pi^{(s)}),
    \label{eq:theta-grad}\\
    \nabla_{\phi} L(\theta,\phi,G) & \approx \frac{1}{S}  \sum_{s=1}^S \Big[\log p_{\theta}(G, \pi^{(s)})  \label{eq:phi-grad} \\
    & \hspace{0.4cm} - \log q_{\phi}(\pi^{(s)} | G)\Big] \nabla_\phi \log q_{\phi}(\pi^{(s)} | G). \nonumber
\end{align}
\Cref{eq:theta-grad} shows that the parameters $\theta$ of the model are optimized under node sequences $\pi$ sampled from the approximate posterior. That is, fitting the model does not require to define ad-hoc orderings $\pi$; rather, the (approximately) most likely node orderings are used. As a comparison, a model trained with uniformly distributed random node orderings can be seen as using a uniform variational distribution, which in turn corresponds to a looser log-likelihood bound.

Although the score function estimator may exhibit large variance in general, in our experiments we found that this does not represent an issue.
In fact, $S=4$ samples were enough and allowed for stable optimization of the objective (see \Cref{app:variance}). We leave other gradient estimation techniques \citep{mohamed2019monte} for future work.

We present the training procedure in \Cref{alg:training_algorithm_graphrnn}. The algorithm can be applied to many autoregressive models operating with the adjacency matrix $\bA$, such as GraphRNN and GraphGEN. For models that operate with the graph sequence instead, such as DeepGMG, we only need to extract the graph sequence $G_{1:n}^{(s)}$ from each $(G,\pi^{(s)})$ and set $p_{\theta}(G,\pi^{(s)}) = \frac{1}{|\Pi[G_{1:n}^{(s)}]|} p_{\theta}(G_{1:n}^{(s)})$.

\parhead{Running time.}
To form the gradient estimators, each of the $S$ Monte Carlo samples requires $n$ evaluations of the GNN output, each taking $\mathcal{O}(|E|)$. For most graphs, the complexity of the gradient computation is dominated by these terms and is therefore $\mathcal{O}(Sn|E|)$. Counting automorphisms only takes a small fraction of the running time in practice. Similarly, the approximation of $|\Pi[G_{1:n}]|$ also takes a small fraction of the running time.
The resulting $\mathcal{O}(Sn|E|)$ complexity is a limitation of the proposed algorithm, and hence it is hard to scale to large graphs. However, since it provides better results than existing approaches (see \Cref{subsec:exp}), our algorithm can still be preferable for applications that are not sensitive to the training time. We leave for future work the exploration of ways to improve the computational efficiency, such as proposing the node ordering $\pi$ in one shot.

\begin{table*}[t]
    \small
    \centering
    \caption{Approximate test log-likelihood and variational lower bound (ELBO) of different graph generation models. For each model, we compare the default training algorithm with our method based on VI; the table shows that VI improves the model's predictive performance. Moreover, the variational bound is relatively tight. We used paired $t$-test to compare the results; the numbers in bold indicate that the method is better at the $5\%$ significance level.}
    \label{tab:marginal-likelihood}
    \scalebox{0.95}{
    \begin{tabular}{cc c c c c c c}
    \hline
        
         && Community-small &   Citeseer-small  &  Enzymes &   Lung &  Yeast & Cora\\
        && log-like/ELBO & log-like/ELBO & log-like/ELBO & log-like/ELBO & log-like/ELBO& log-like/ELBO\\
        \hline
        \multirow{2}{*}{DeepGMG}   
        & uniform & -206.2/-303.9 & \textbf{-60.9/-67}& -281.9/-290.8&\textbf{-146.7/-225.7}&-115.1/128.9&-283.7/-295.2\\
        & VI [ours] &\textbf{-124.8/-131.8}& \textbf{-59.6/-65.6}&\textbf{-145.8/-156.2}& \textbf{-146.1/-224.6}&\textbf{-105.4/-115.7}&\textbf{-227/-247.2}\\
        \hline
        \multirow{2}{*}{GraphRNN}
        & uniform & -154.6/-157.6 & -101.9/-105.7 & -340.3/-349.1 & -232.4/ -242.2 & -189.3/-200.1 &-380.6/-401.8\\
        & VI [ours] & \textbf{-53.7/-59.9} & \textbf{-89.6/-93.2} & \textbf{-274.9/-282.8} & \textbf{-155.9/-175.8} & \textbf{-109.1/-133.7} & \textbf{-345.3/-358.3}\\
    \hline
        \multirow{2}{*}{GraphGEN}
        & DFS &-263.74/NA &-73.0/NA & -574.2/NA &-140.1/NA & \textbf{-66.46/NA} & -199.5/NA\\
        & VI [ours] & \textbf{-26.6/-35.0} & \textbf{-64.3/-71.1} & \textbf{-189.7/-213.8}& \textbf{-117.3/-125.5}  &-\textbf{64.98/-72.39} & \textbf{-143.6/-152.3}\\
        \hline
    \end{tabular}

    }
    \vspace*{-4pt}
\end{table*}
\begin{figure*}[t]
    \centering
    \begin{tabular}{cc|cc|cc|cc}
    \multicolumn{2}{c}{Ground Truth} & \multicolumn{2}{c}{BFS} & \multicolumn{2}{c}{uniform} & \multicolumn{2}{c}{VI [ours]}\\
    \includegraphics[width=0.09\textwidth]{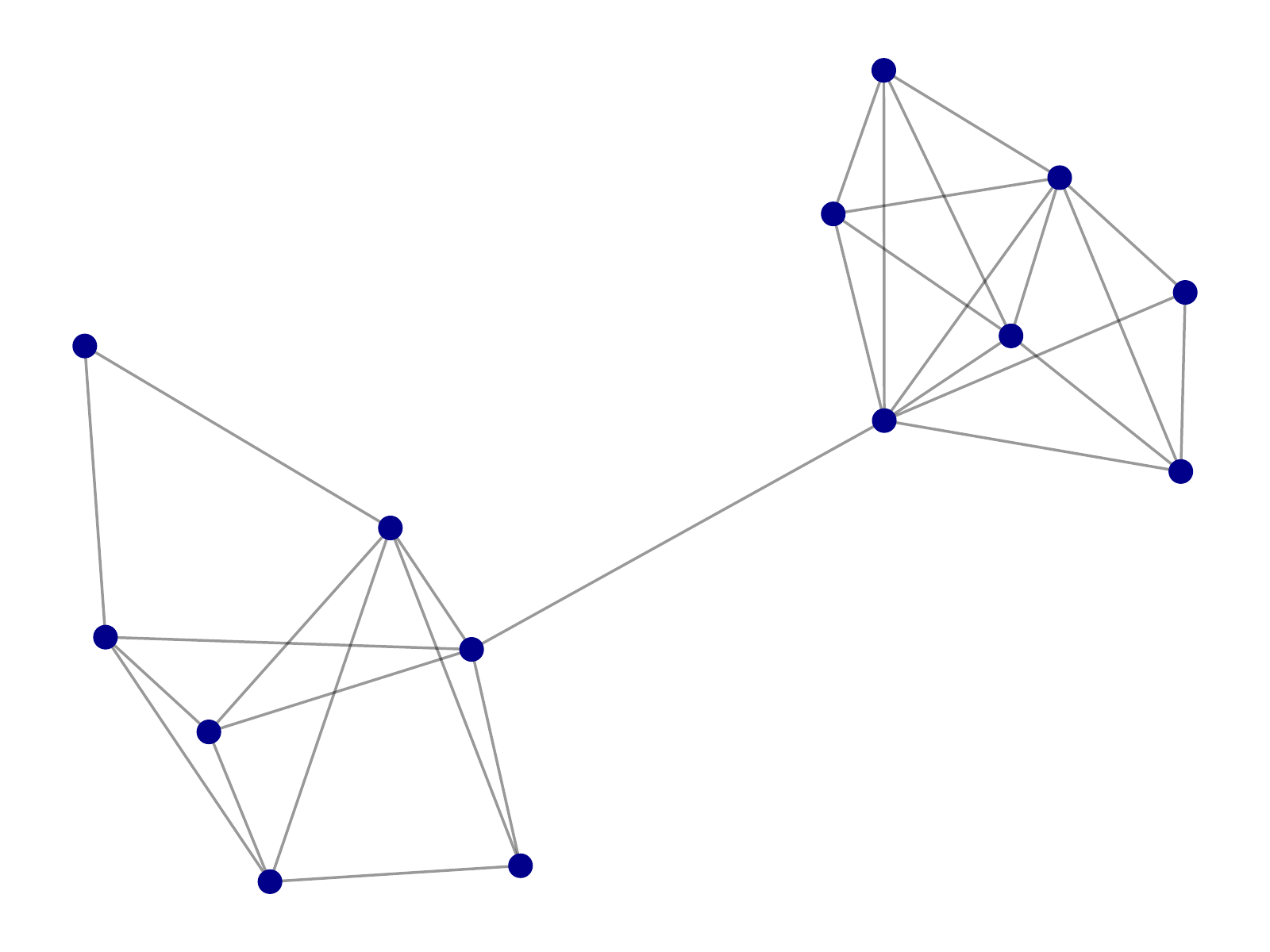}&\includegraphics[width=0.09\textwidth]{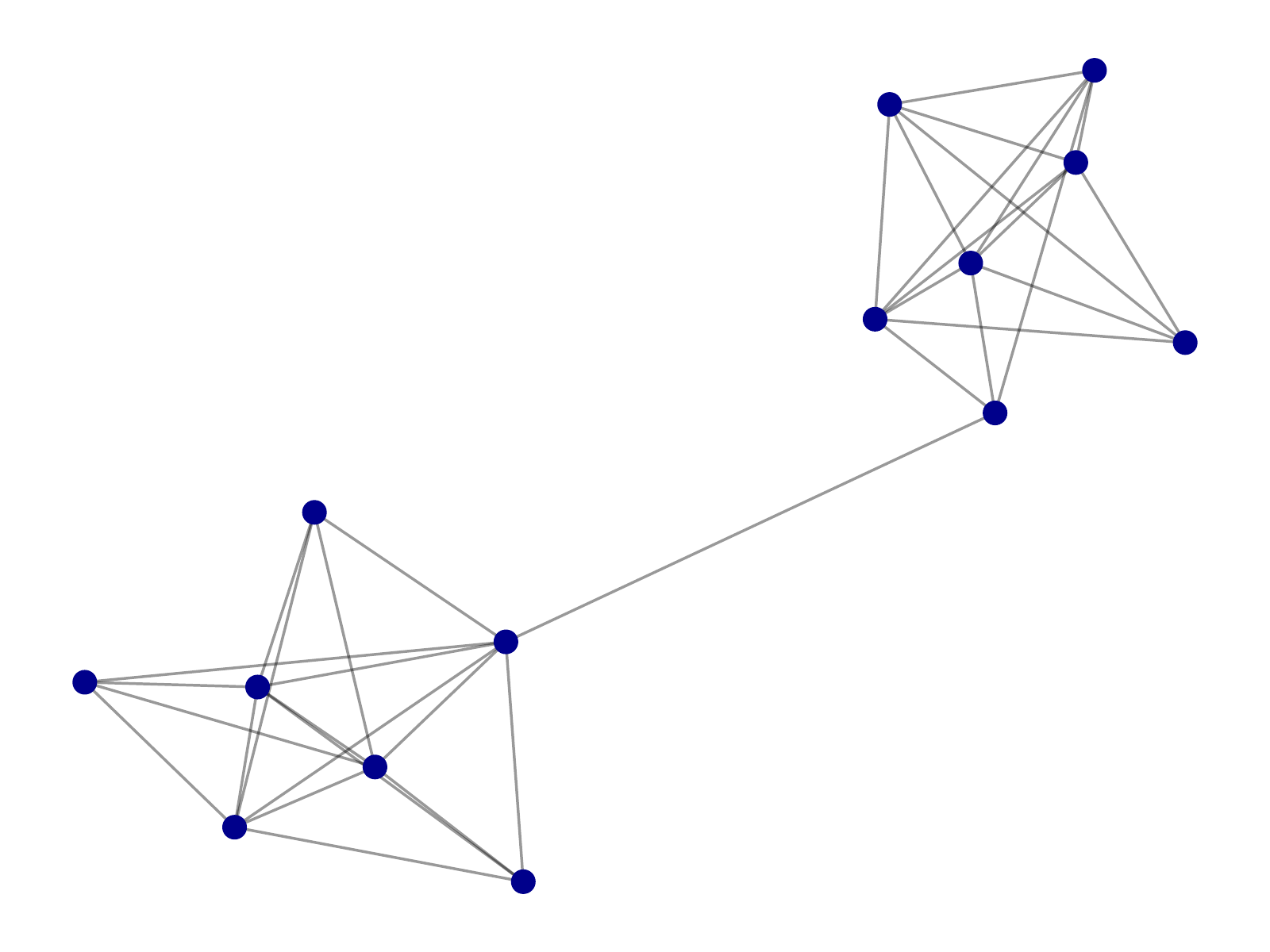}&
    \includegraphics[width=0.09\textwidth]{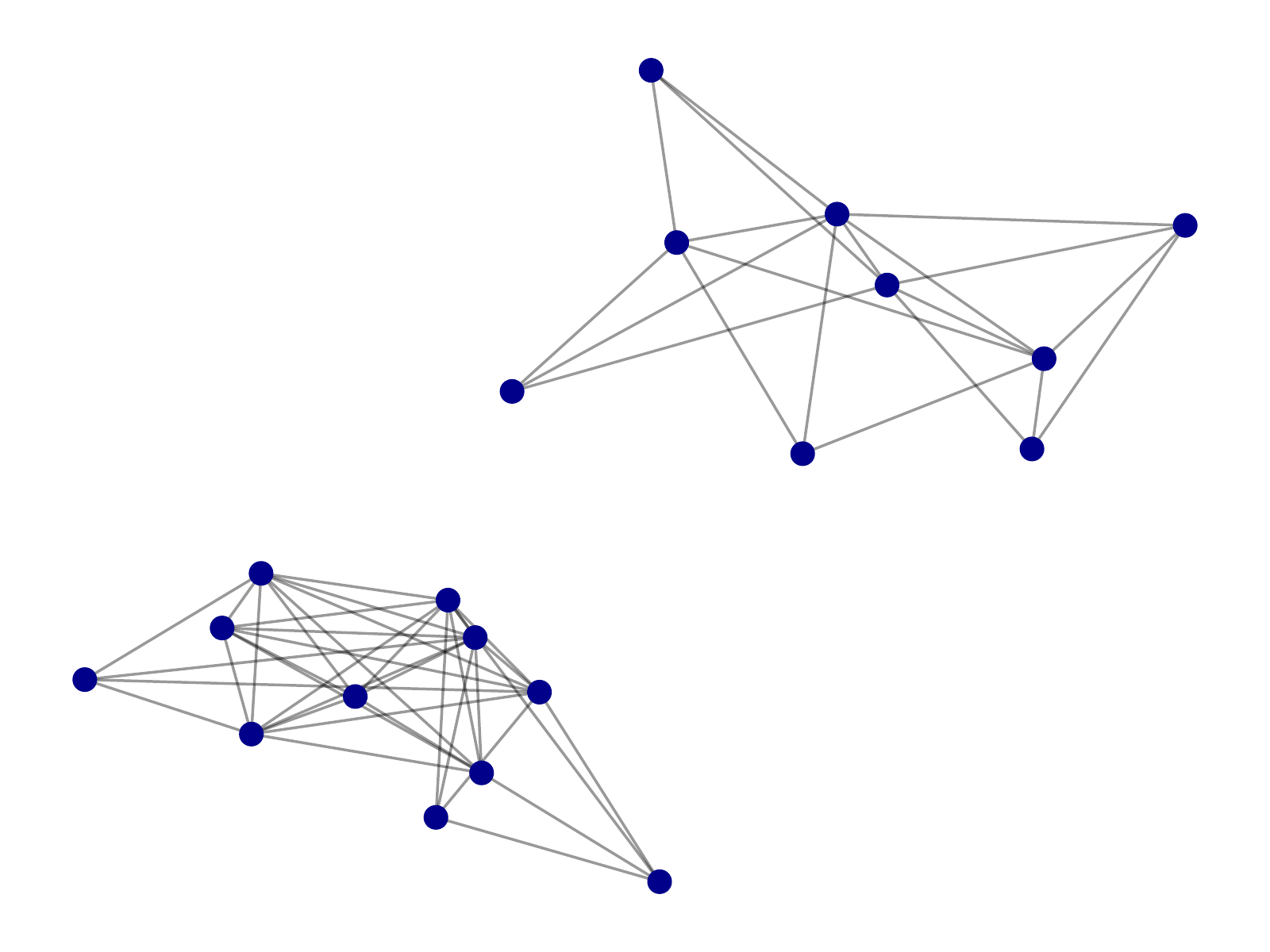}&\includegraphics[width=0.09\textwidth]{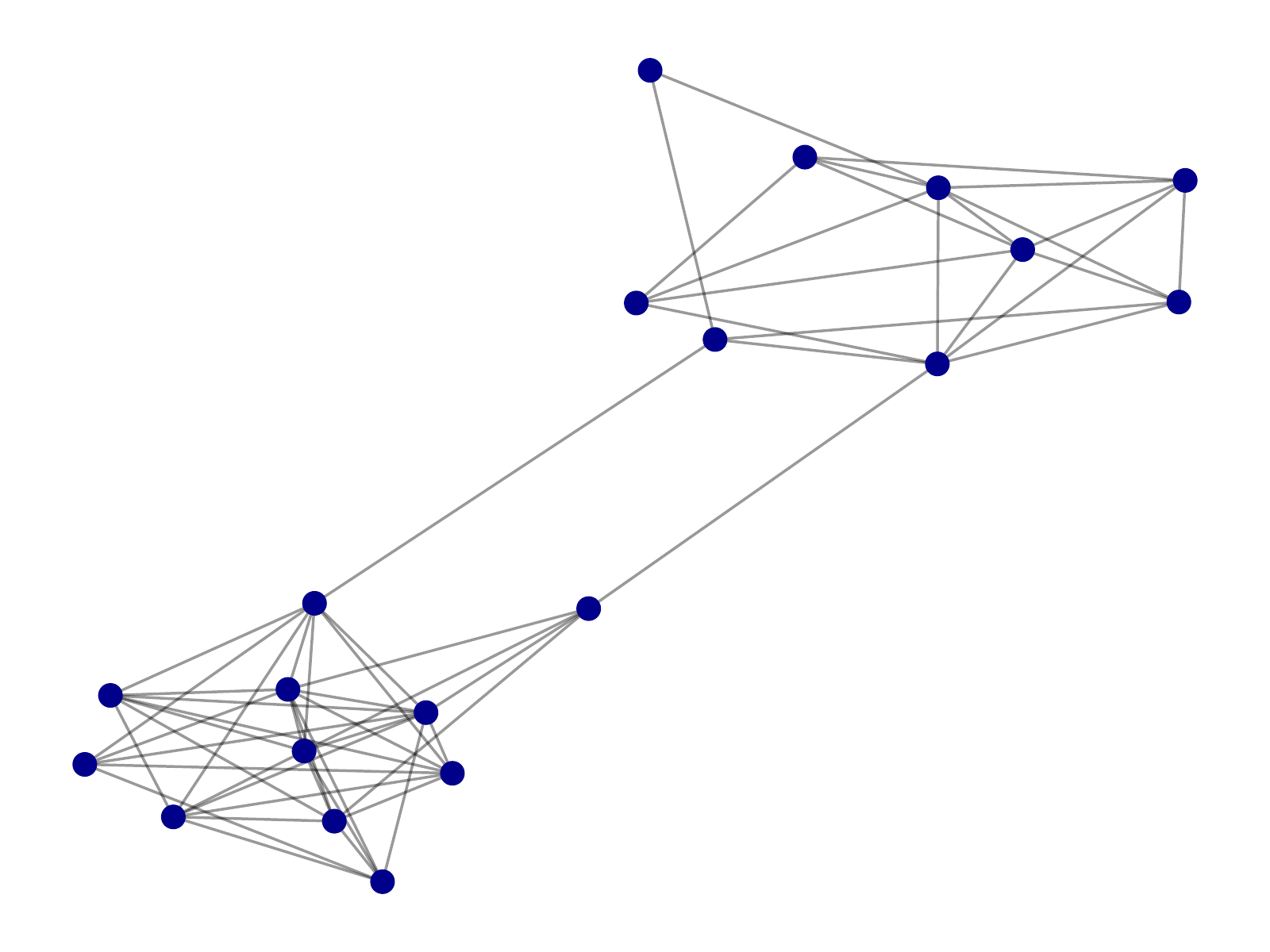}&
    \includegraphics[width=0.09\textwidth]{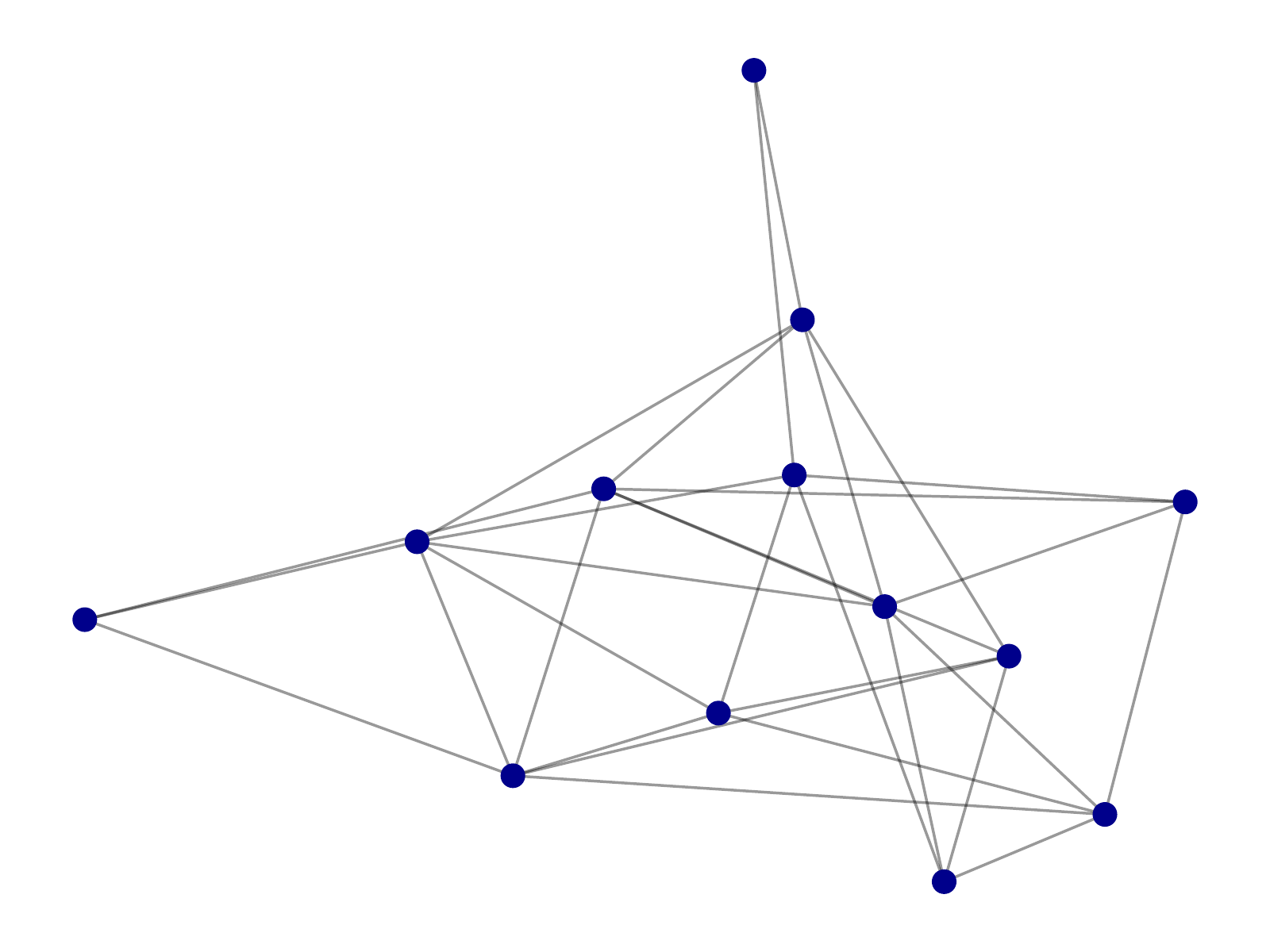}&\includegraphics[width=0.09\textwidth]{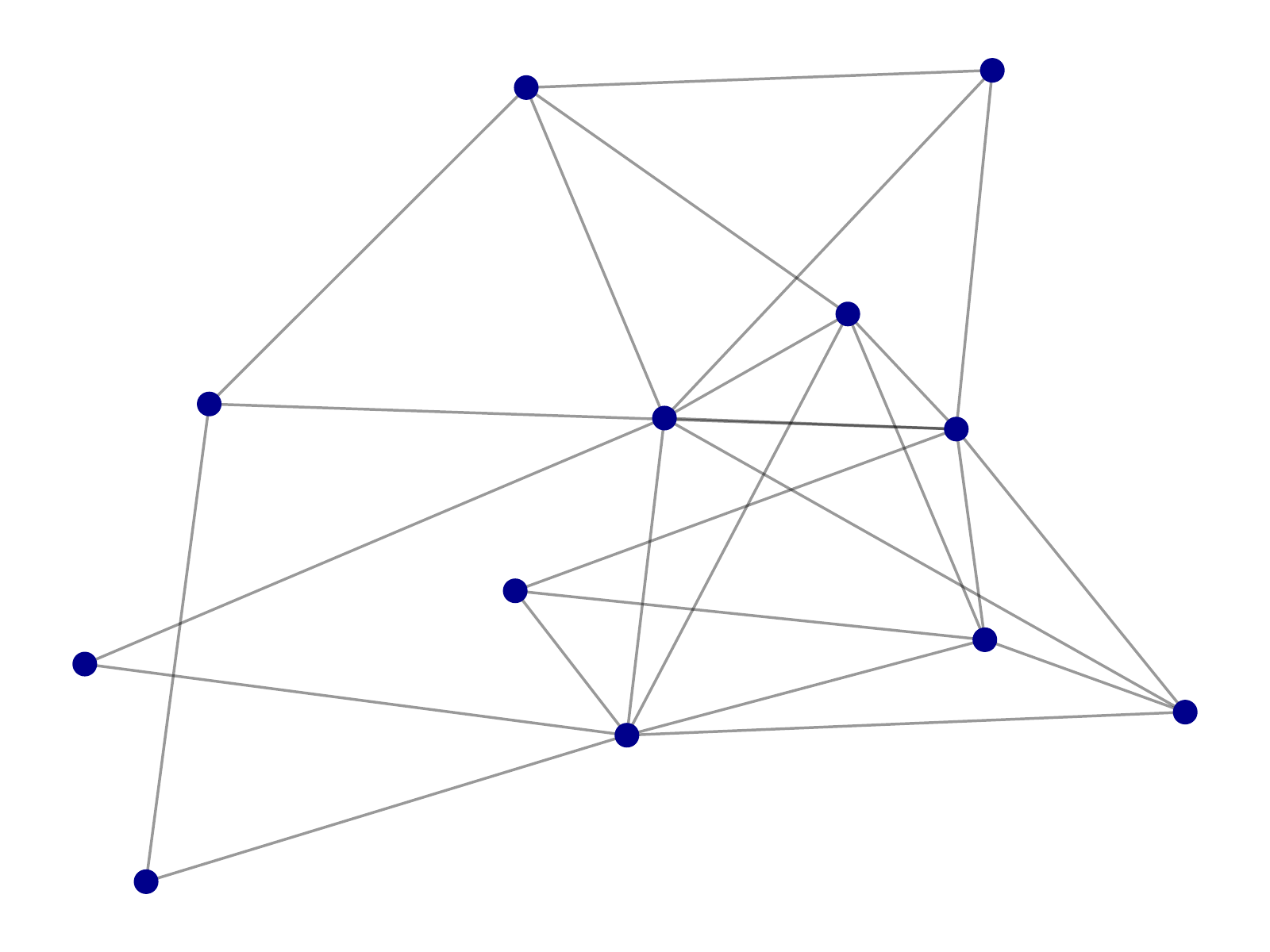}&
    \includegraphics[width=0.09\textwidth]{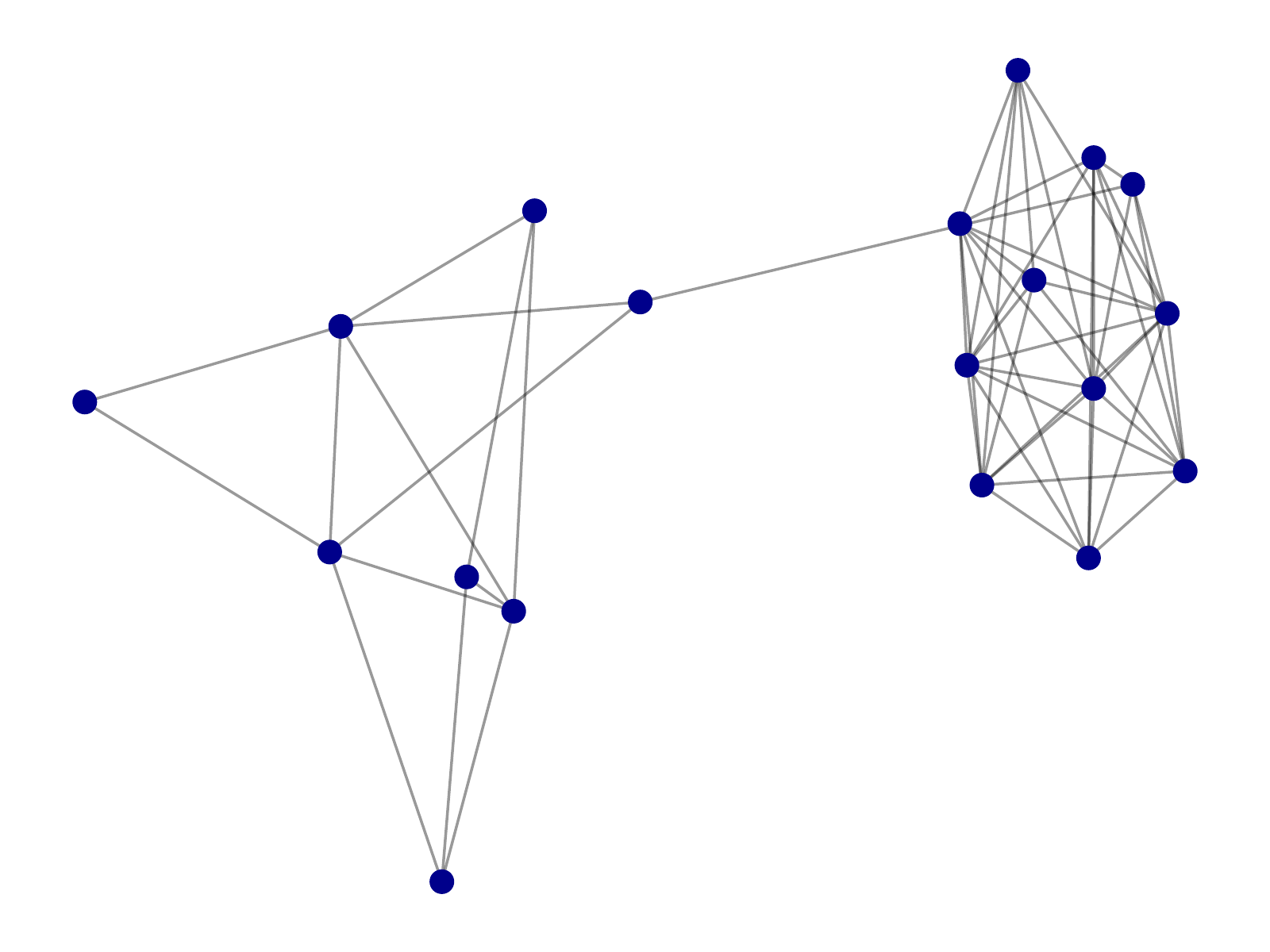}&\includegraphics[width=0.09\textwidth]{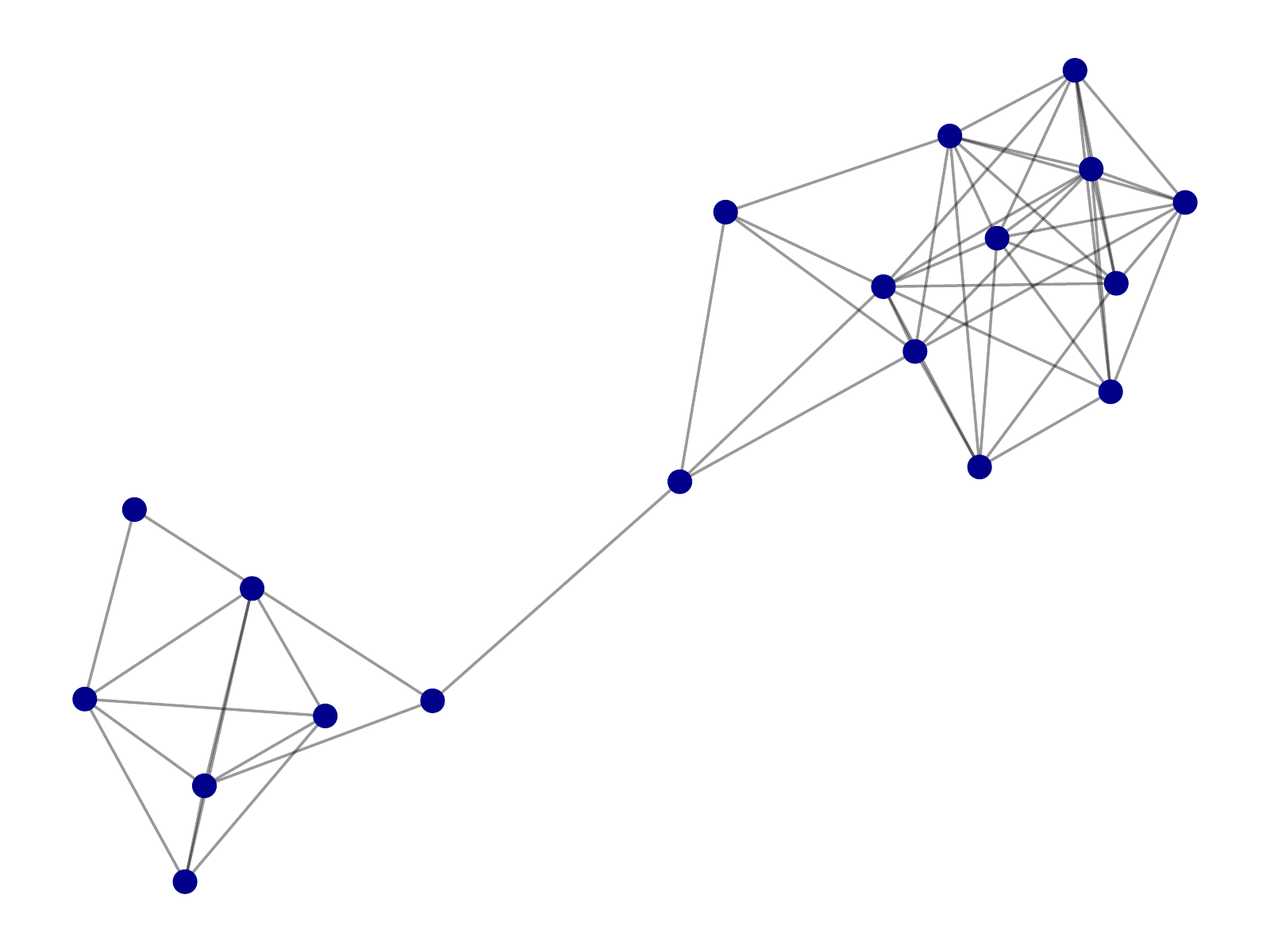}\\
    \includegraphics[width=0.09\textwidth]{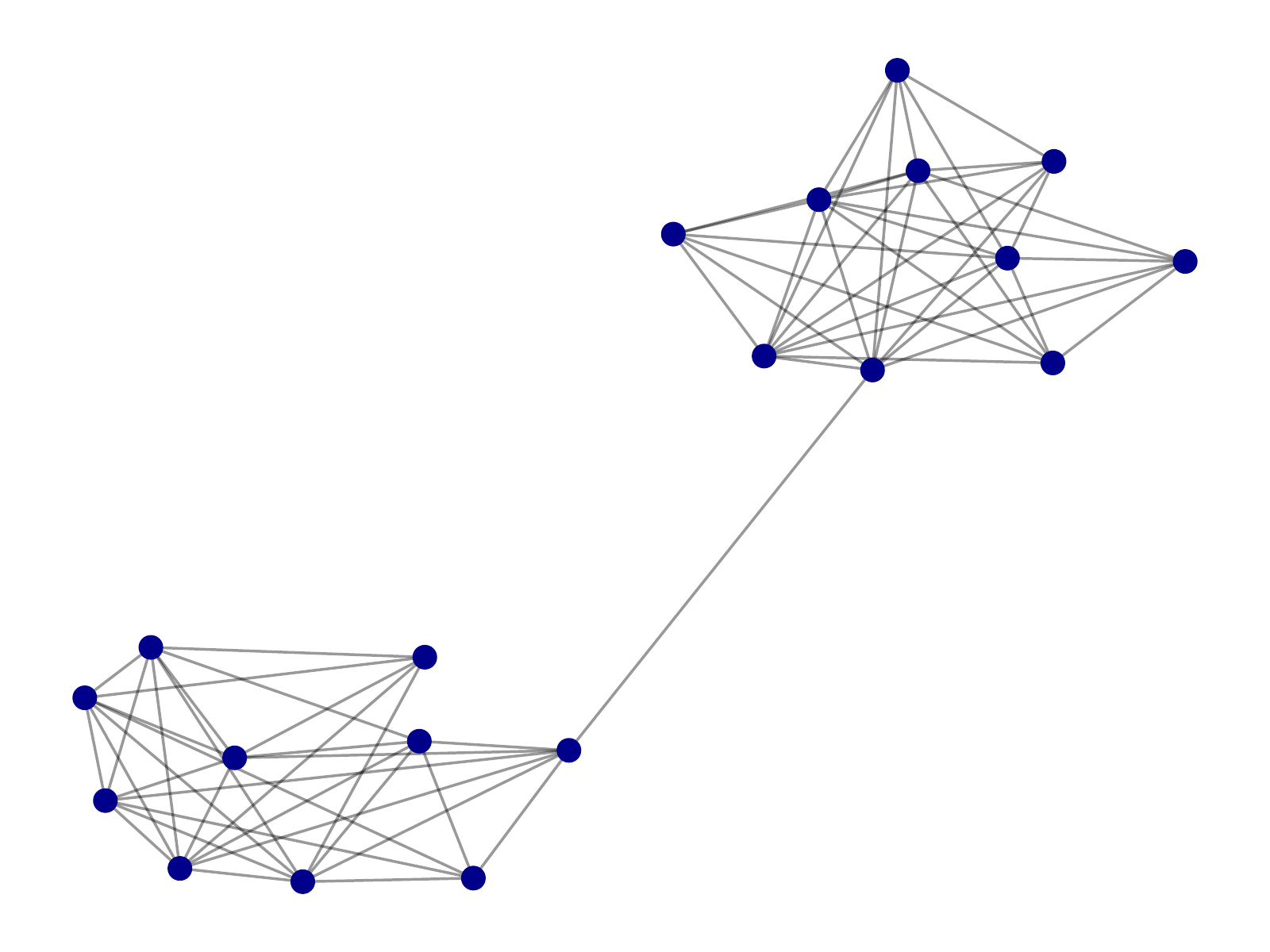}&\includegraphics[width=0.09\textwidth]{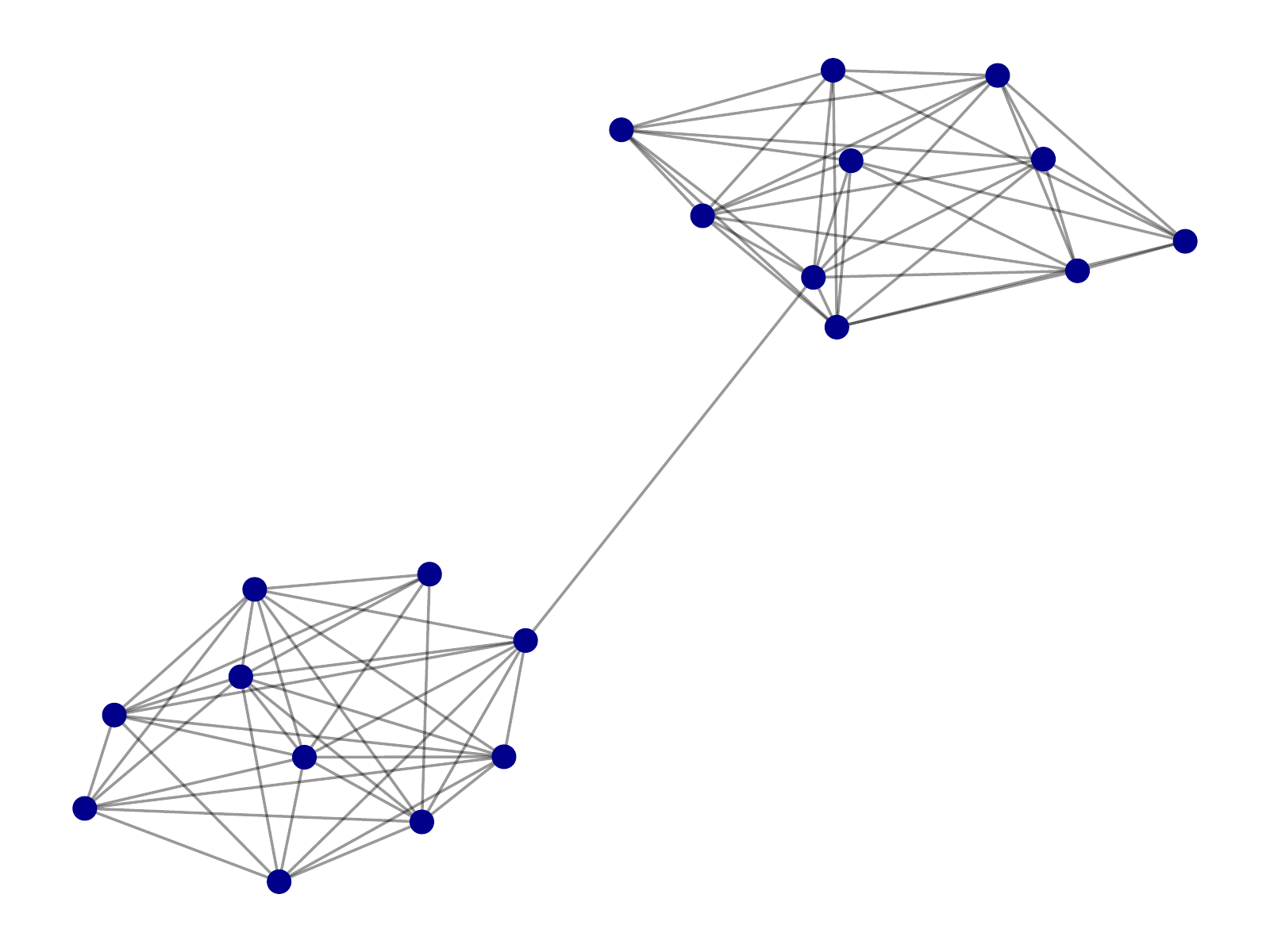}&
    \includegraphics[width=0.09\textwidth]{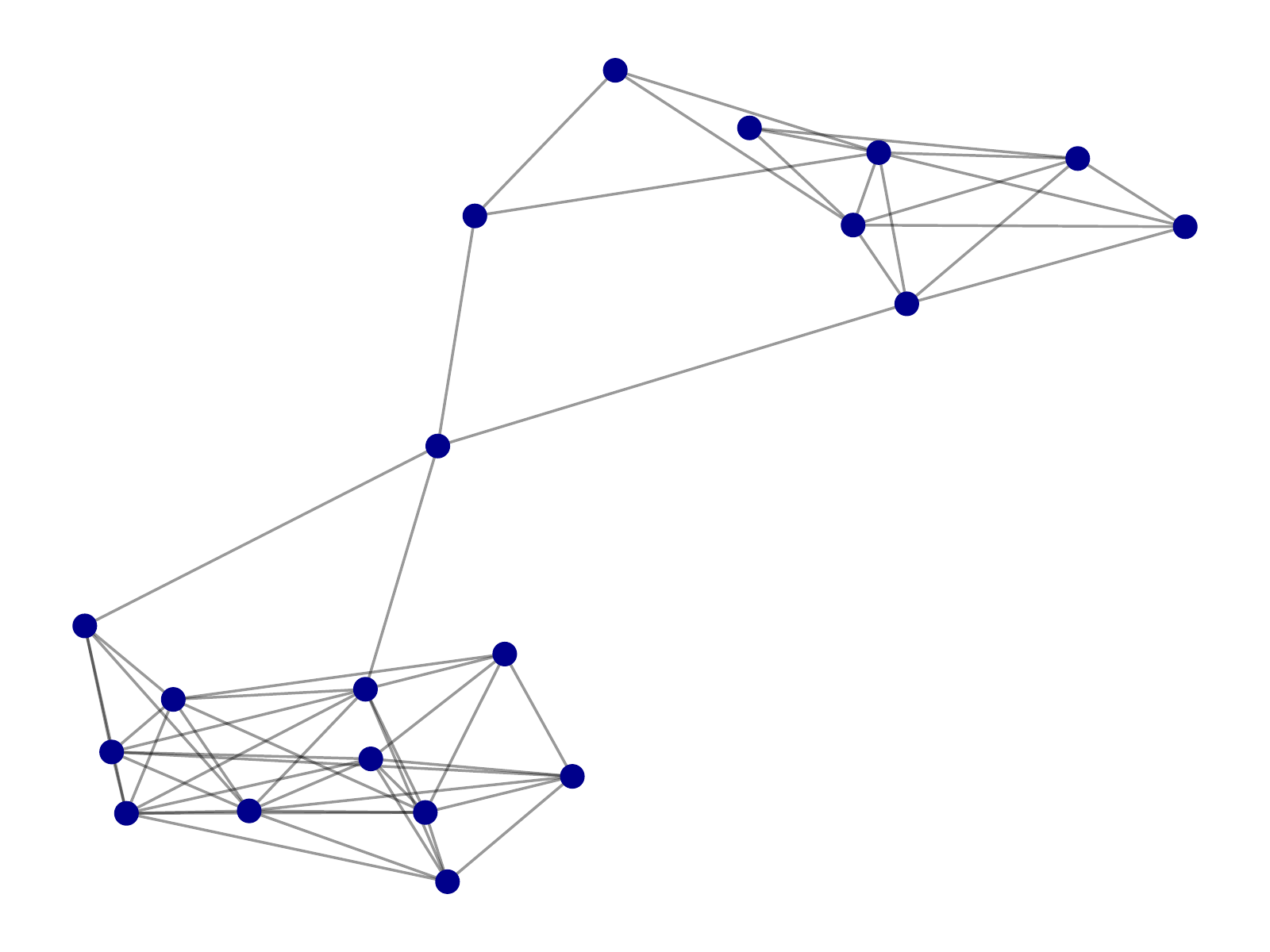}&\includegraphics[width=0.09\textwidth]{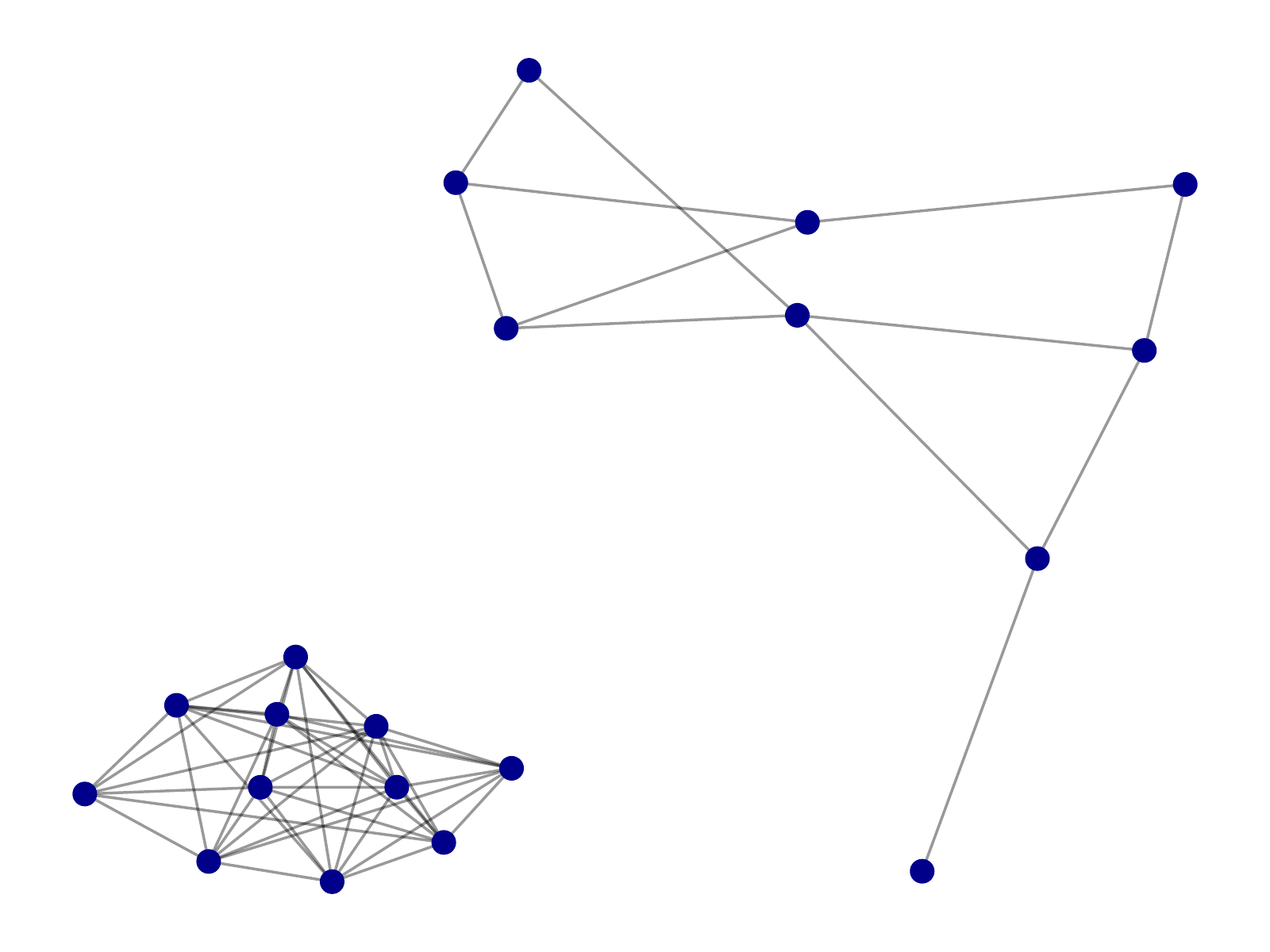}&
    \includegraphics[width=0.09\textwidth]{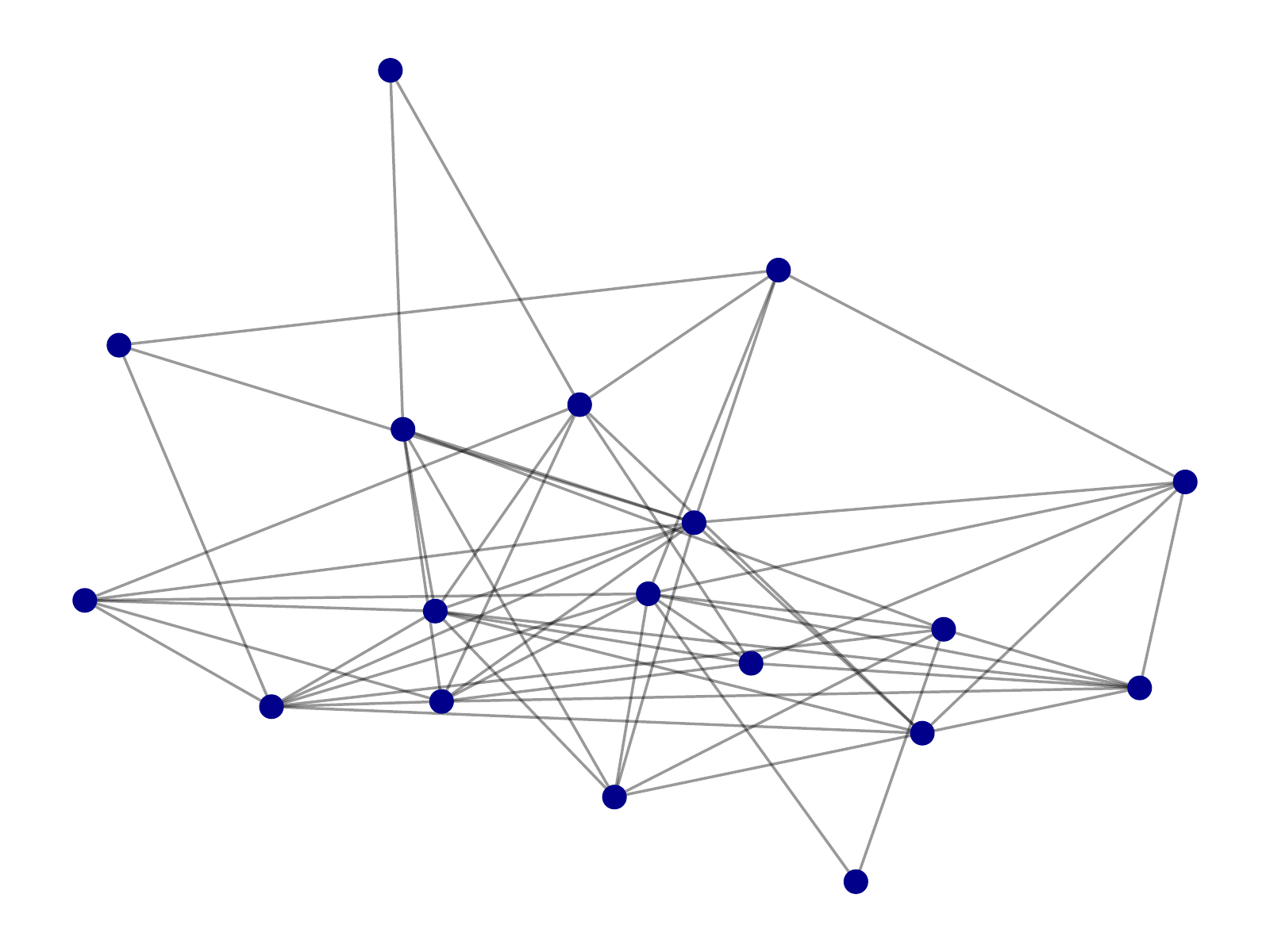}&\includegraphics[width=0.09\textwidth]{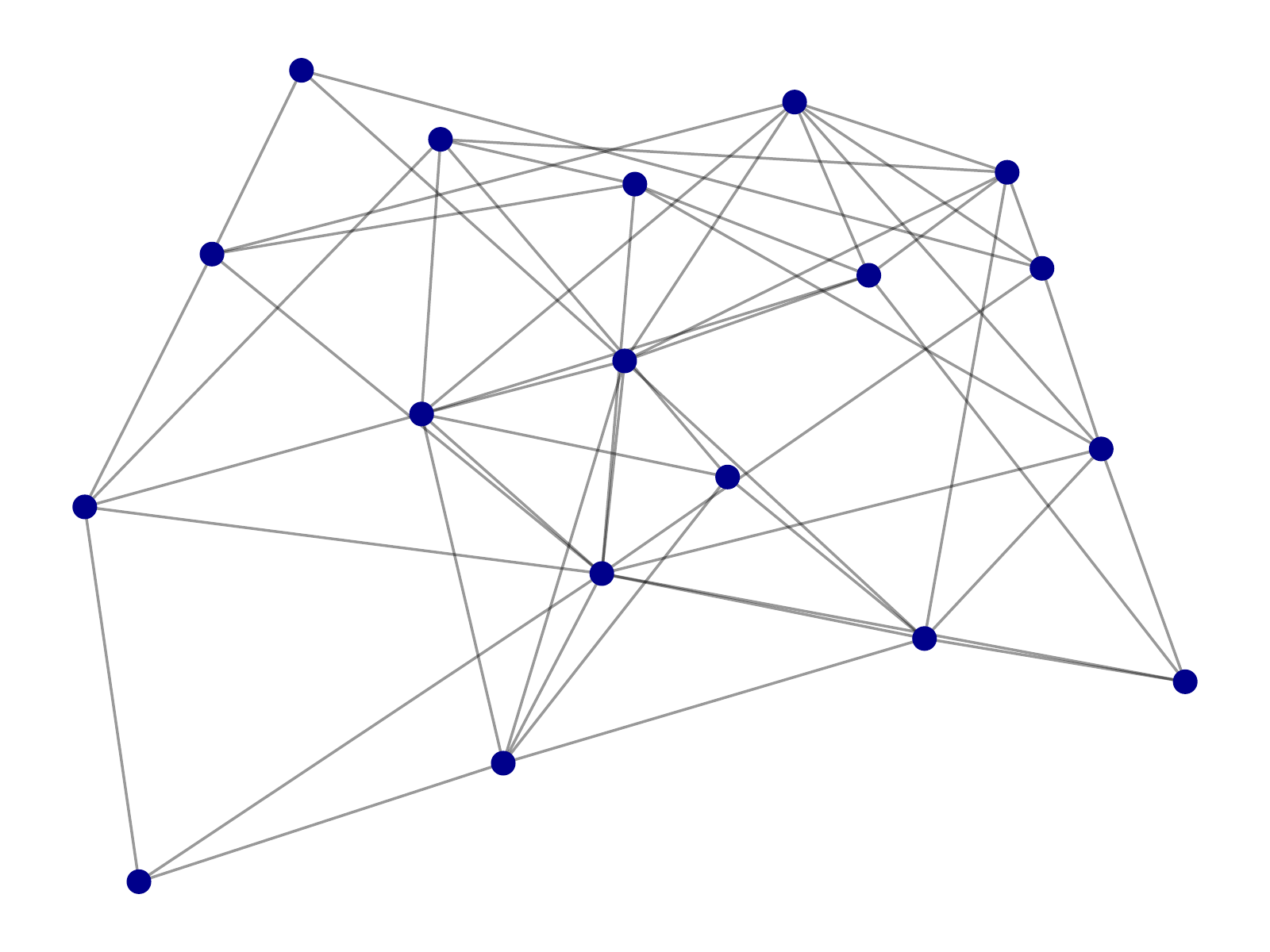}&
    \includegraphics[width=0.09\textwidth]{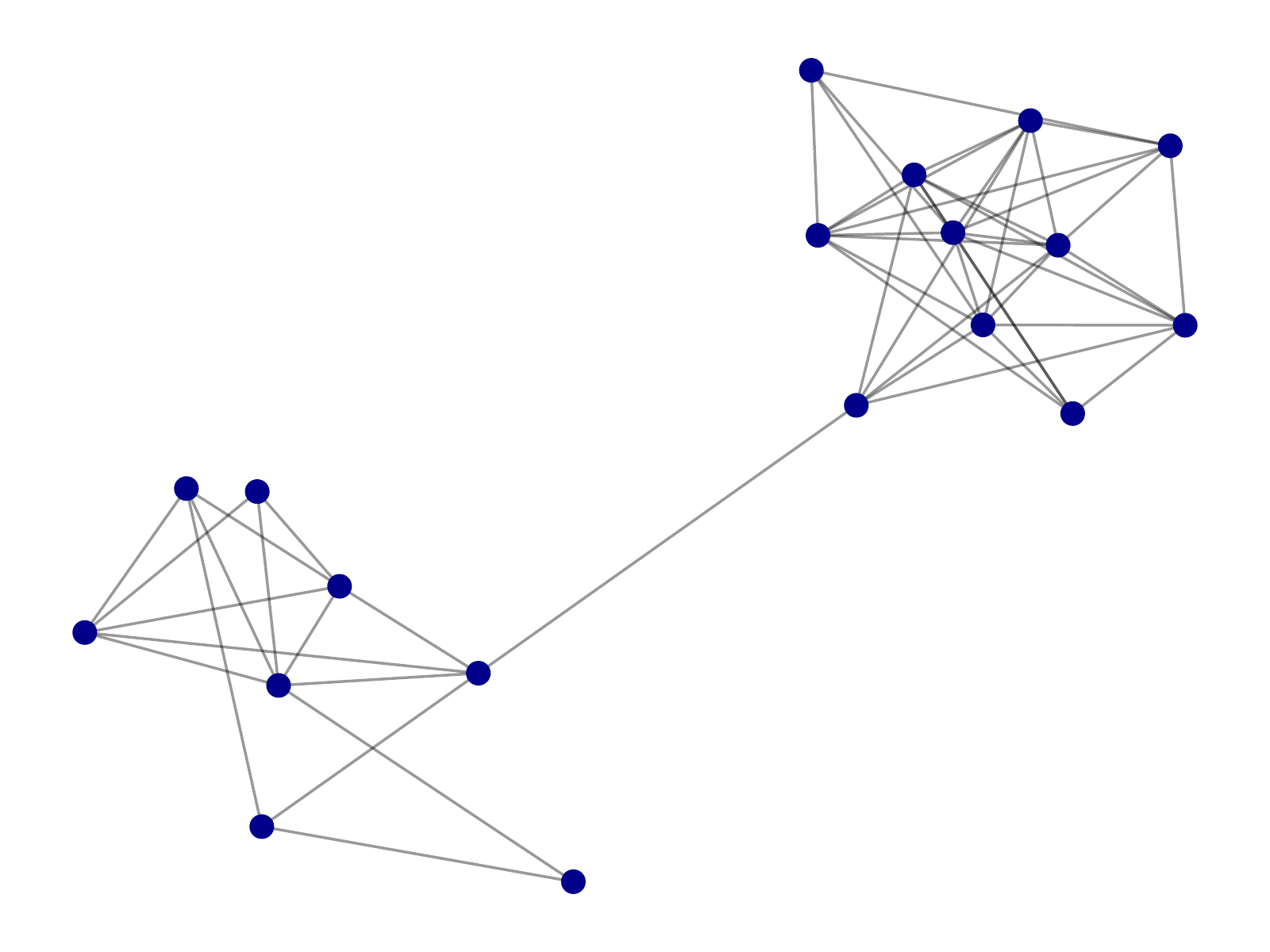}&\includegraphics[width=0.09\textwidth]{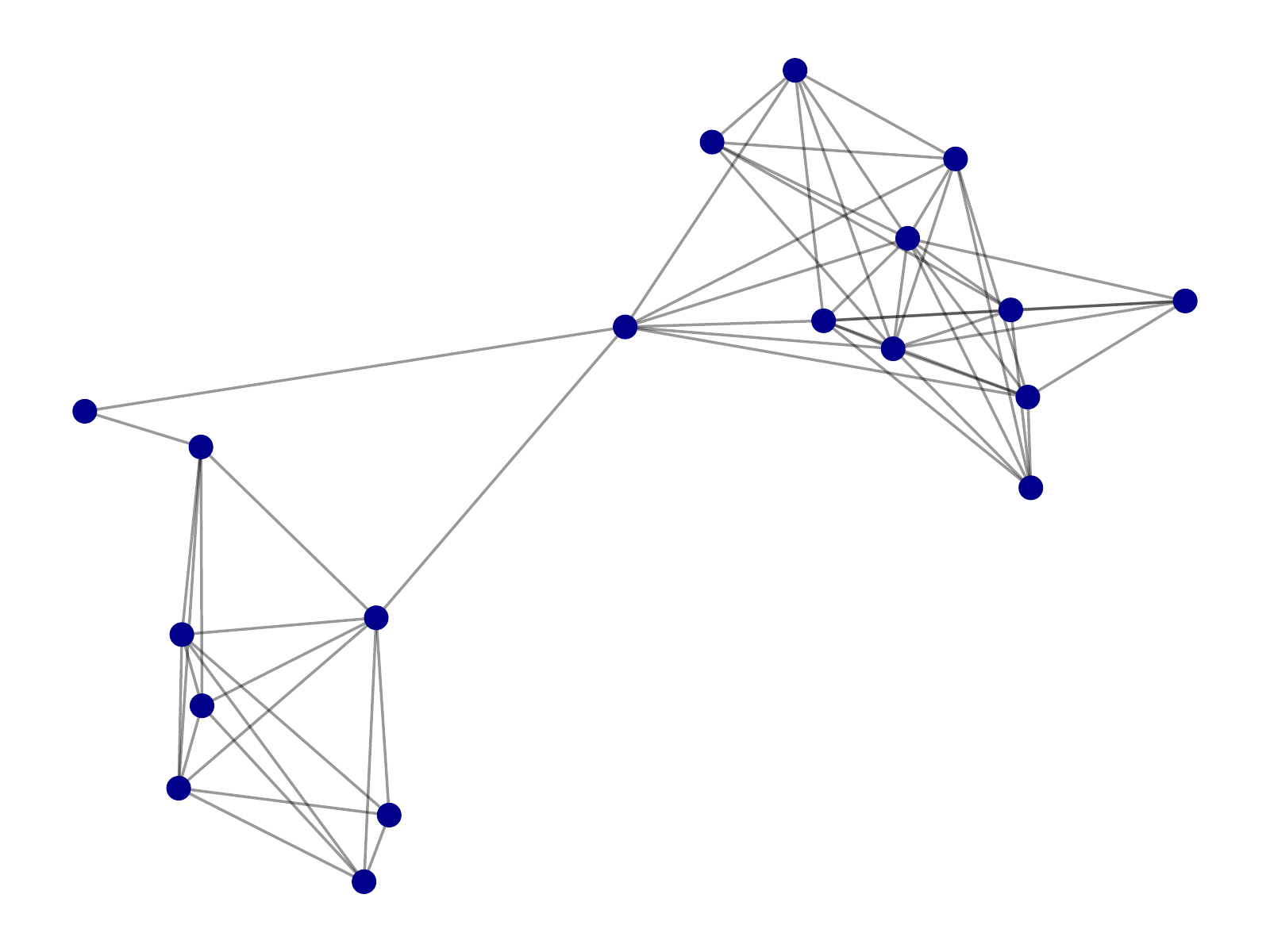}\\
    \multicolumn{8}{c}{Community-small}\\[10pt]
    \includegraphics[width=0.09\textwidth]{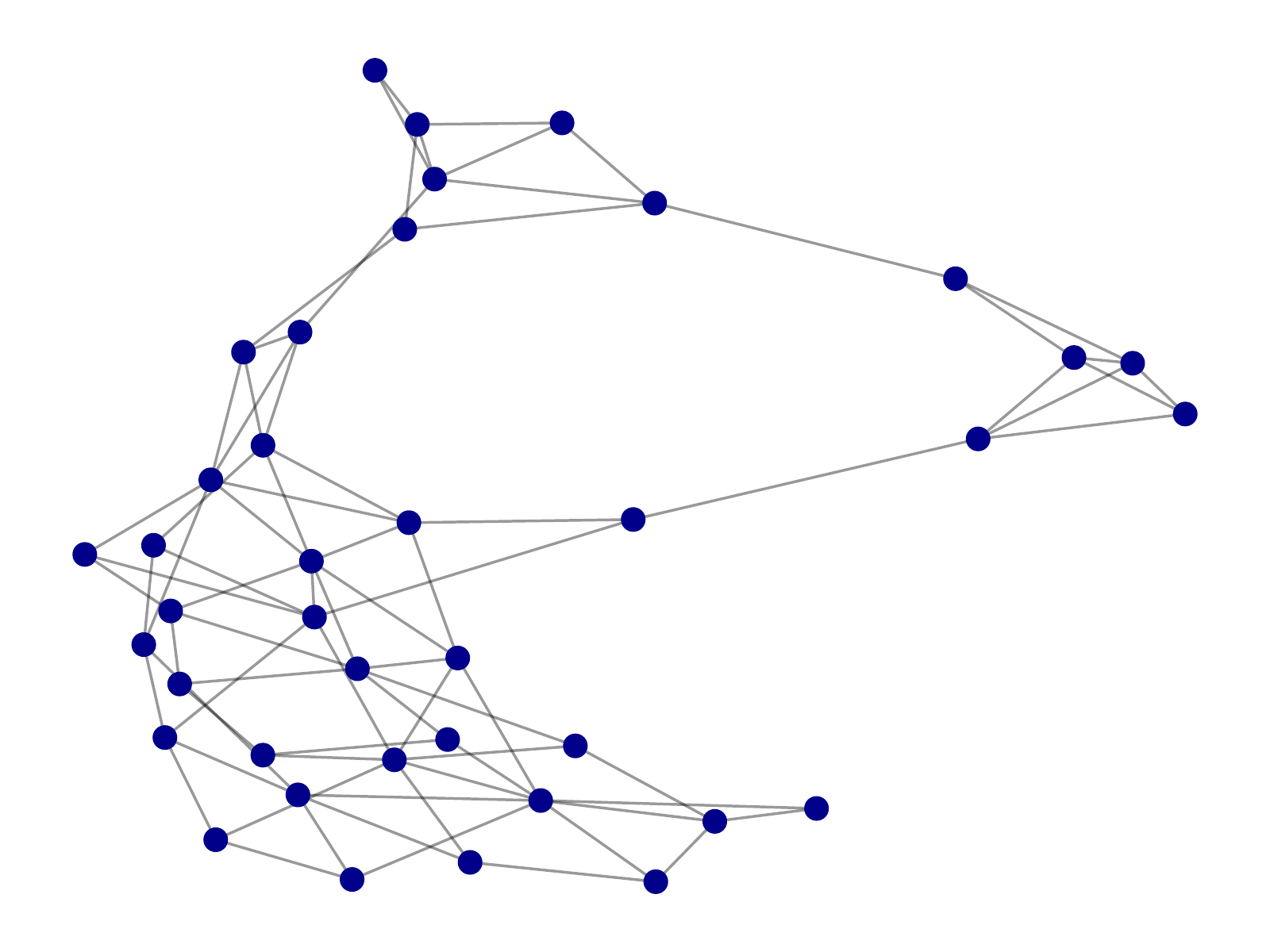}&\includegraphics[width=0.09\textwidth]{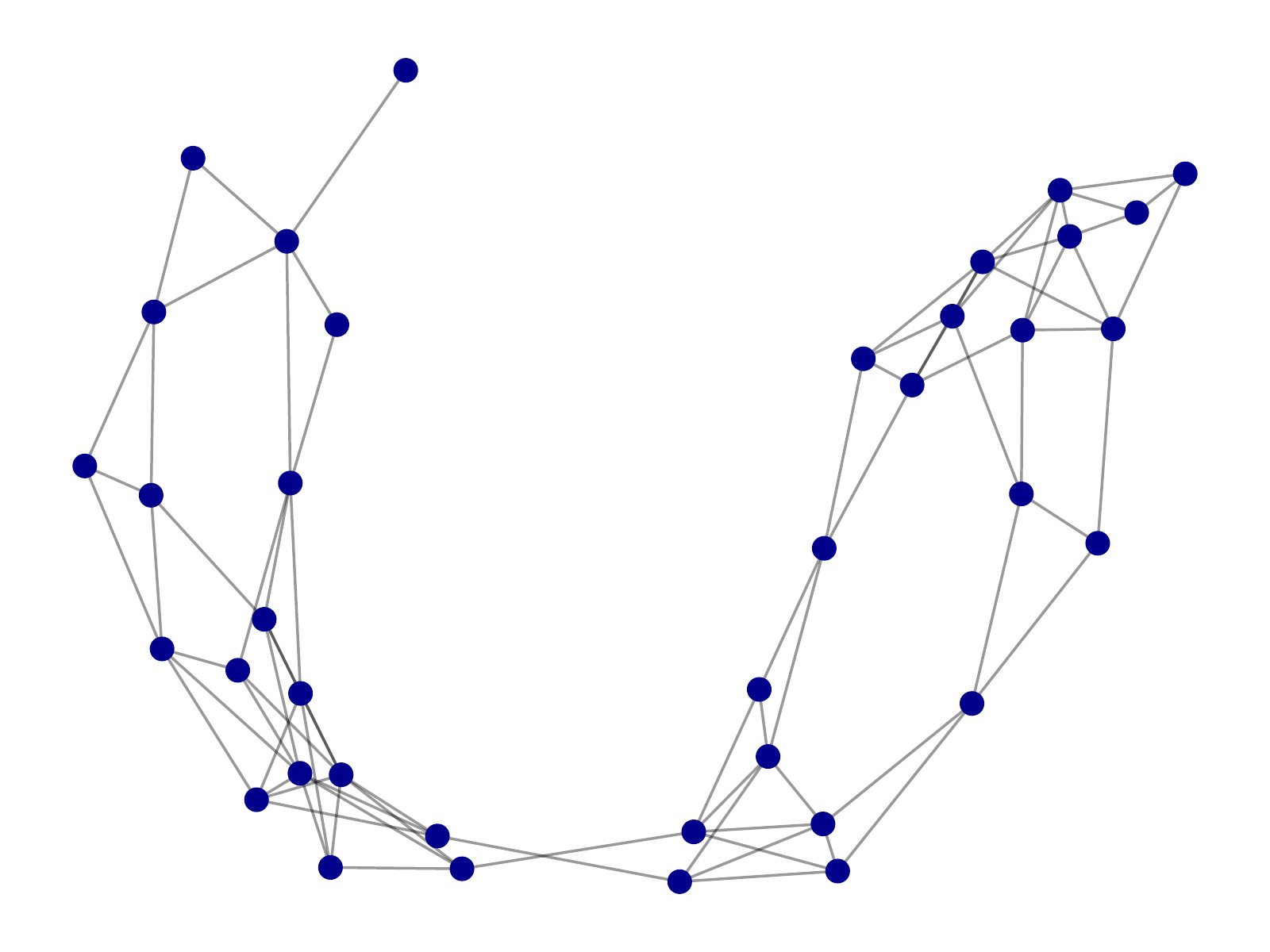}&
    \includegraphics[width=0.09\textwidth]{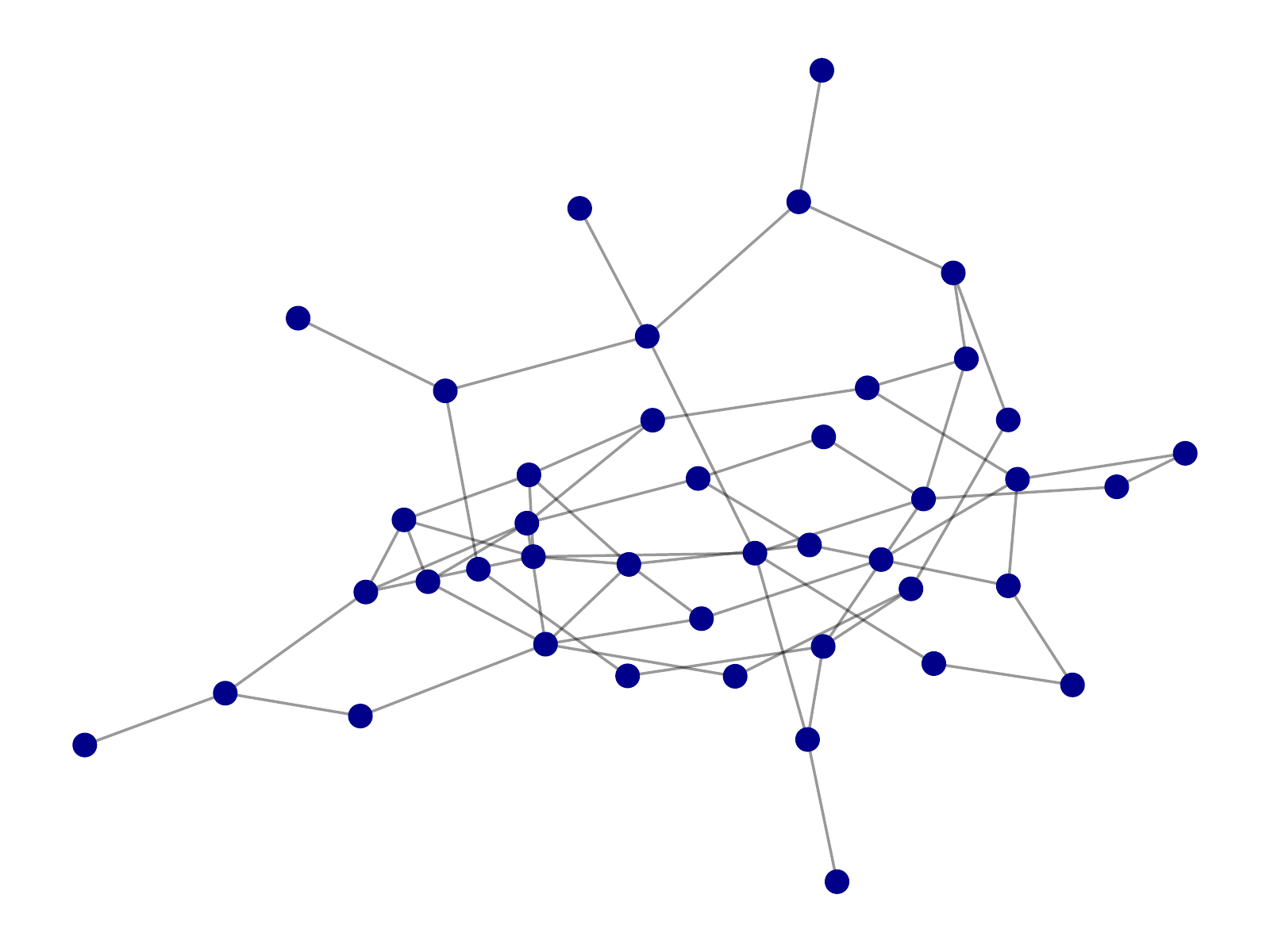}&\includegraphics[width=0.09\textwidth]{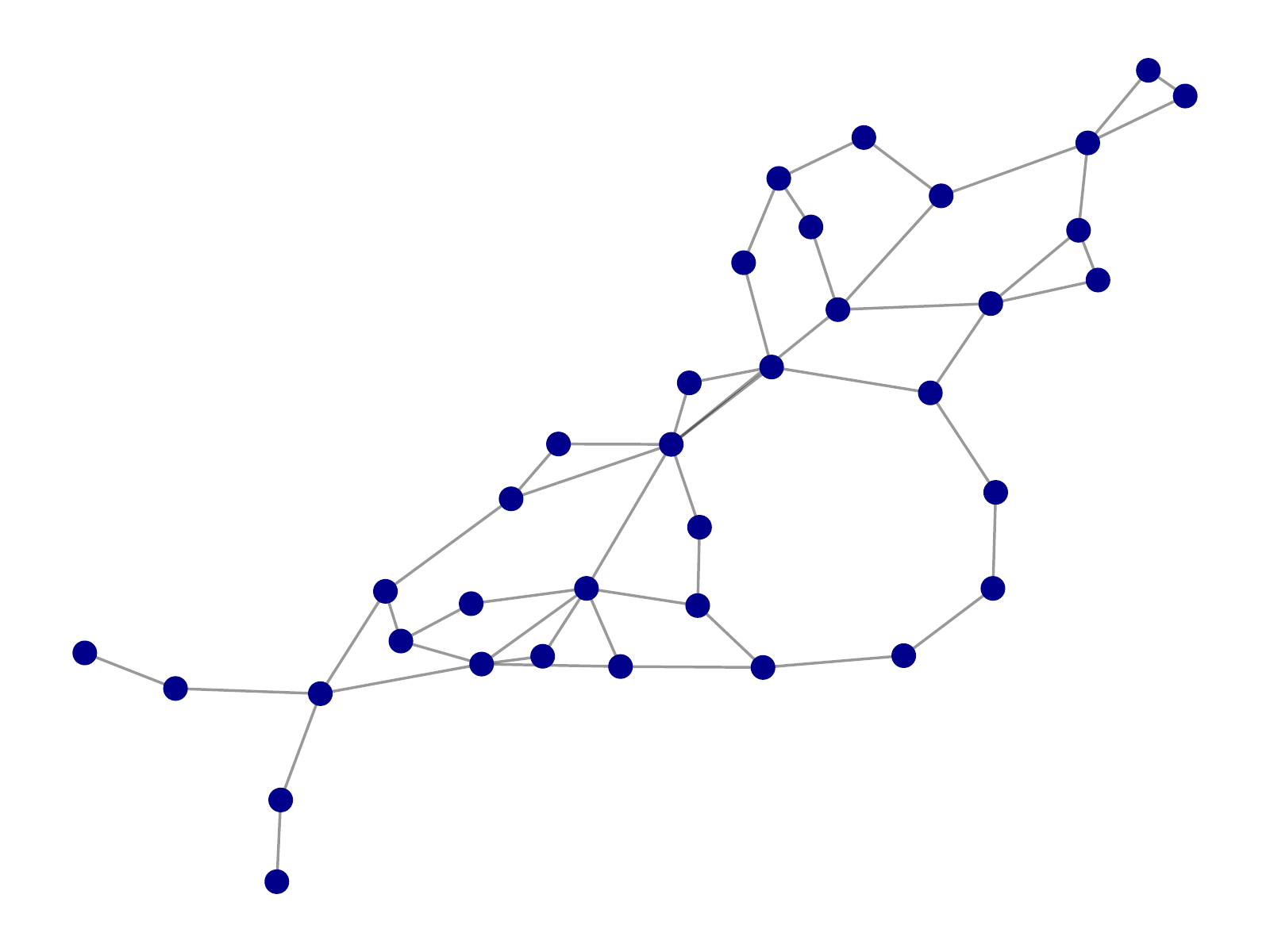}&
    \includegraphics[width=0.09\textwidth]{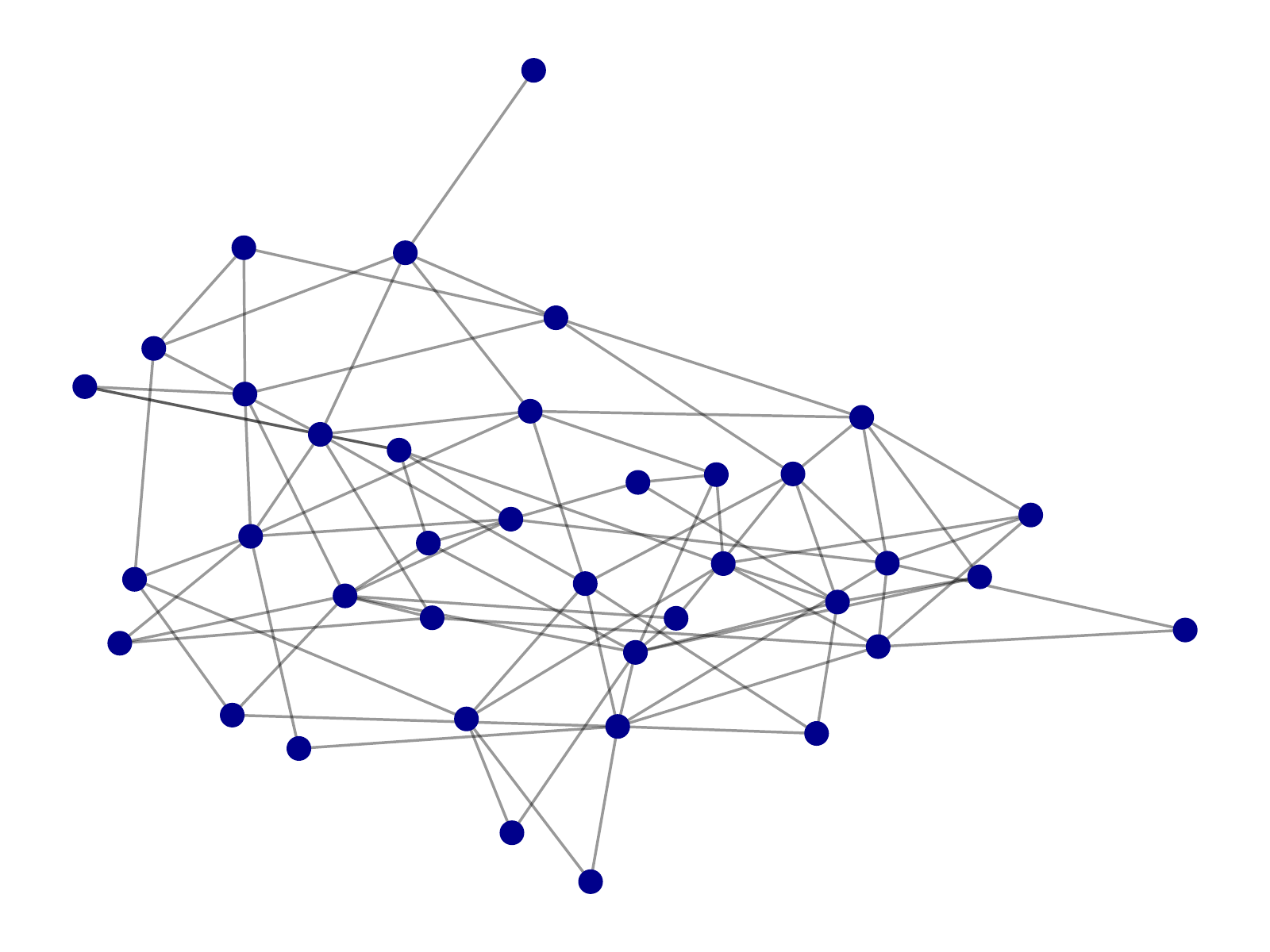}&\includegraphics[width=0.09\textwidth]{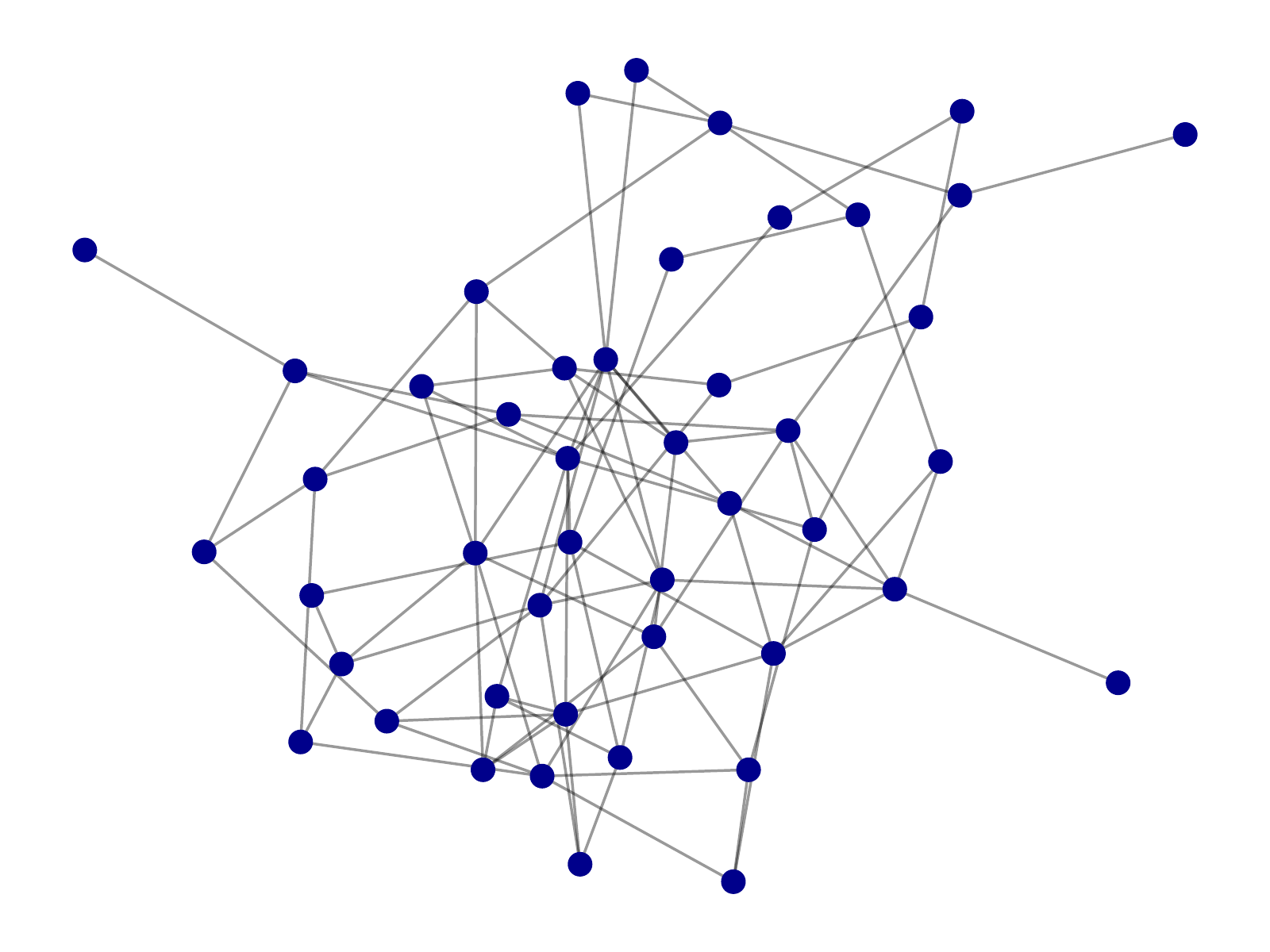}&
    \includegraphics[width=0.09\textwidth]{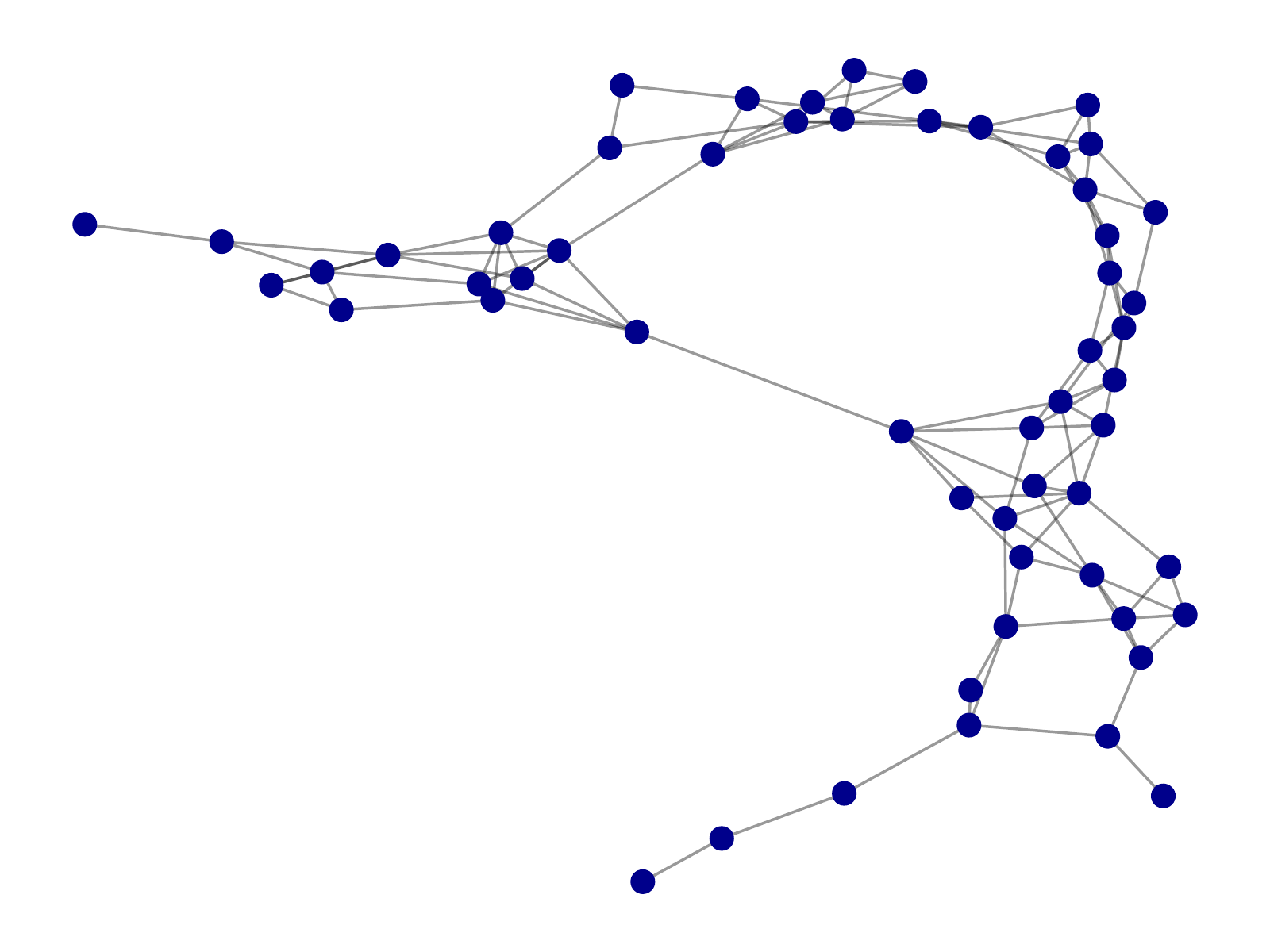}&\includegraphics[width=0.09\textwidth]{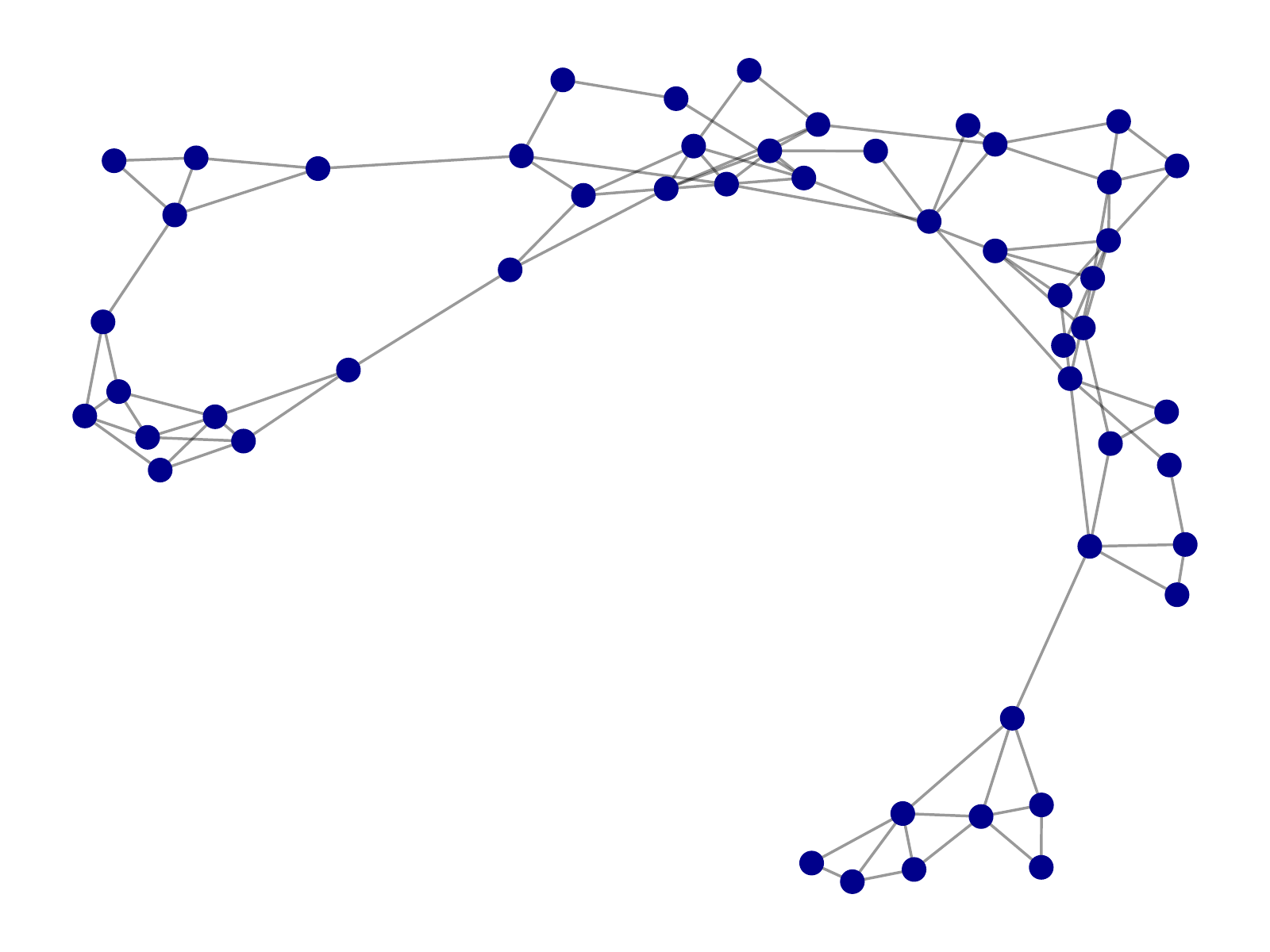}\\
    \includegraphics[width=0.09\textwidth]{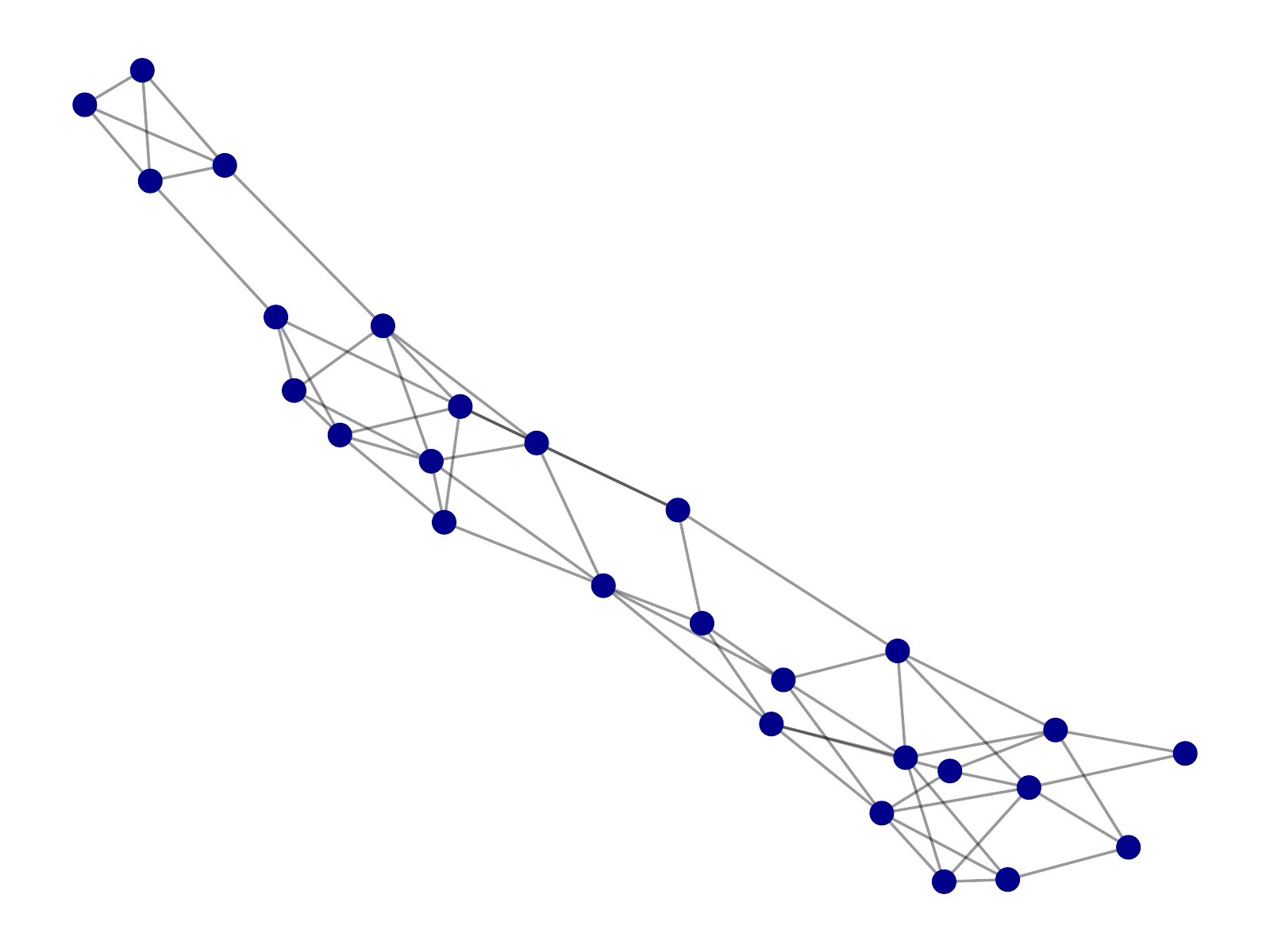}&\includegraphics[width=0.09\textwidth]{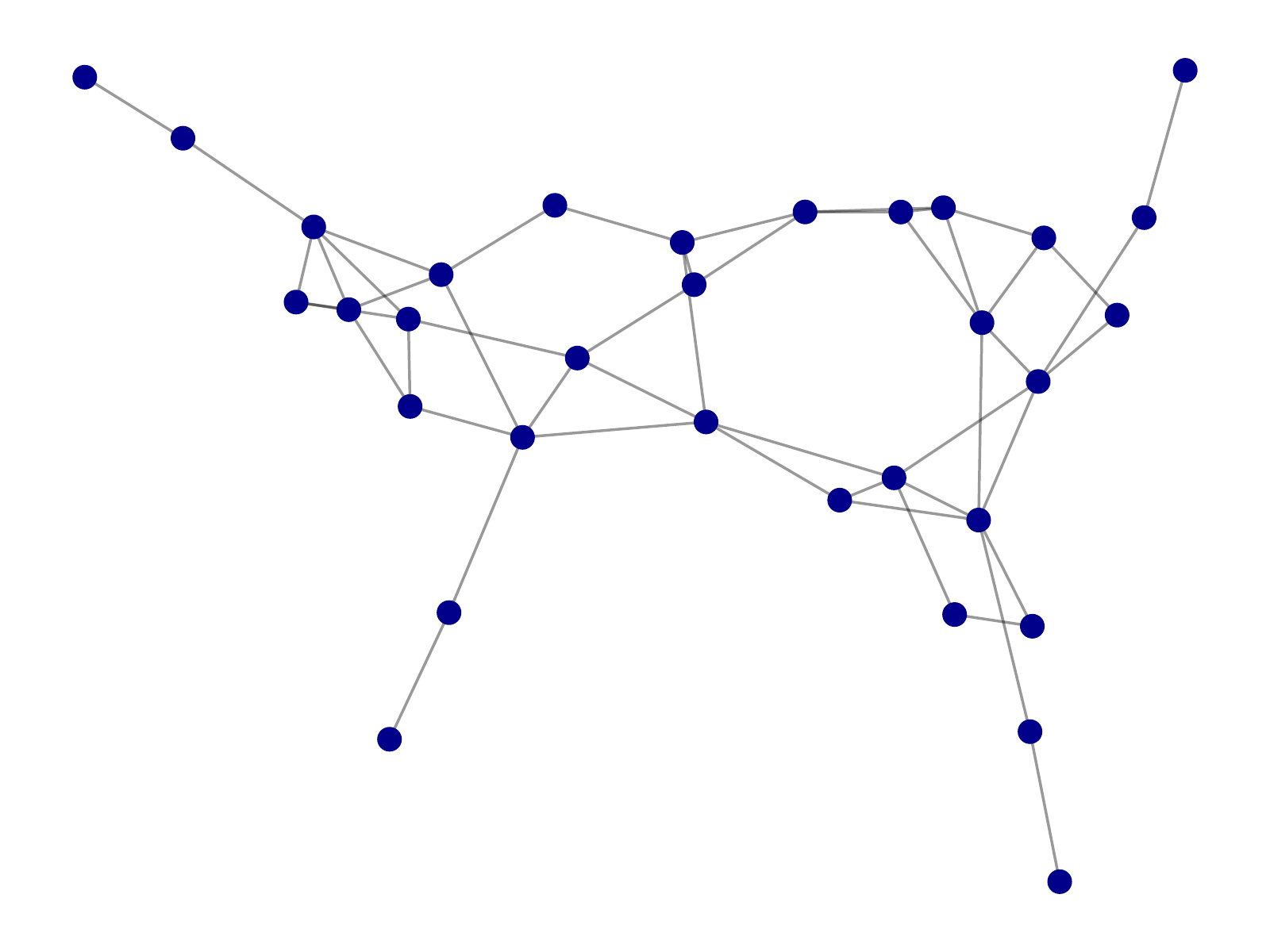}&
    \includegraphics[width=0.09\textwidth]{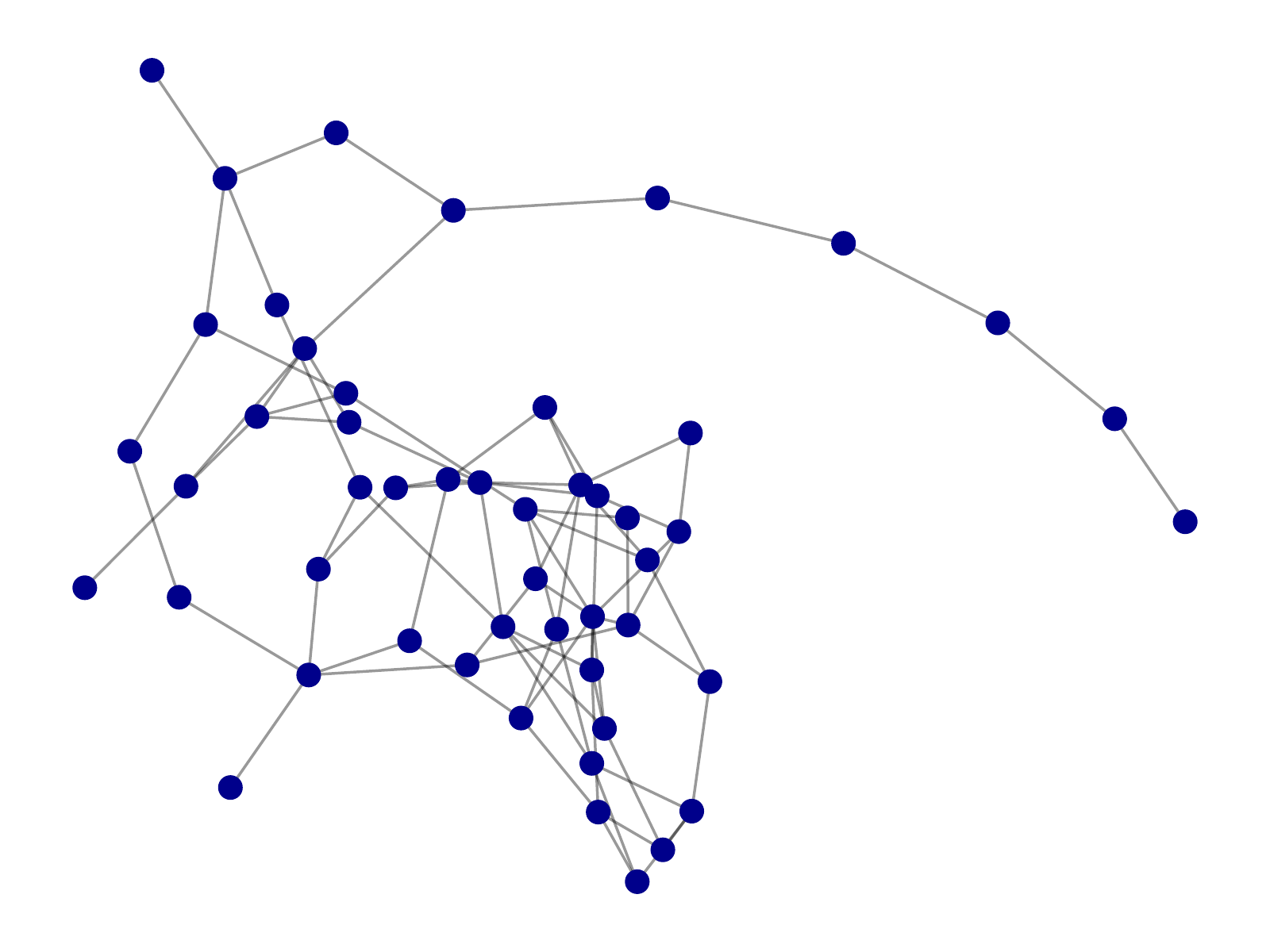}&\includegraphics[width=0.09\textwidth]{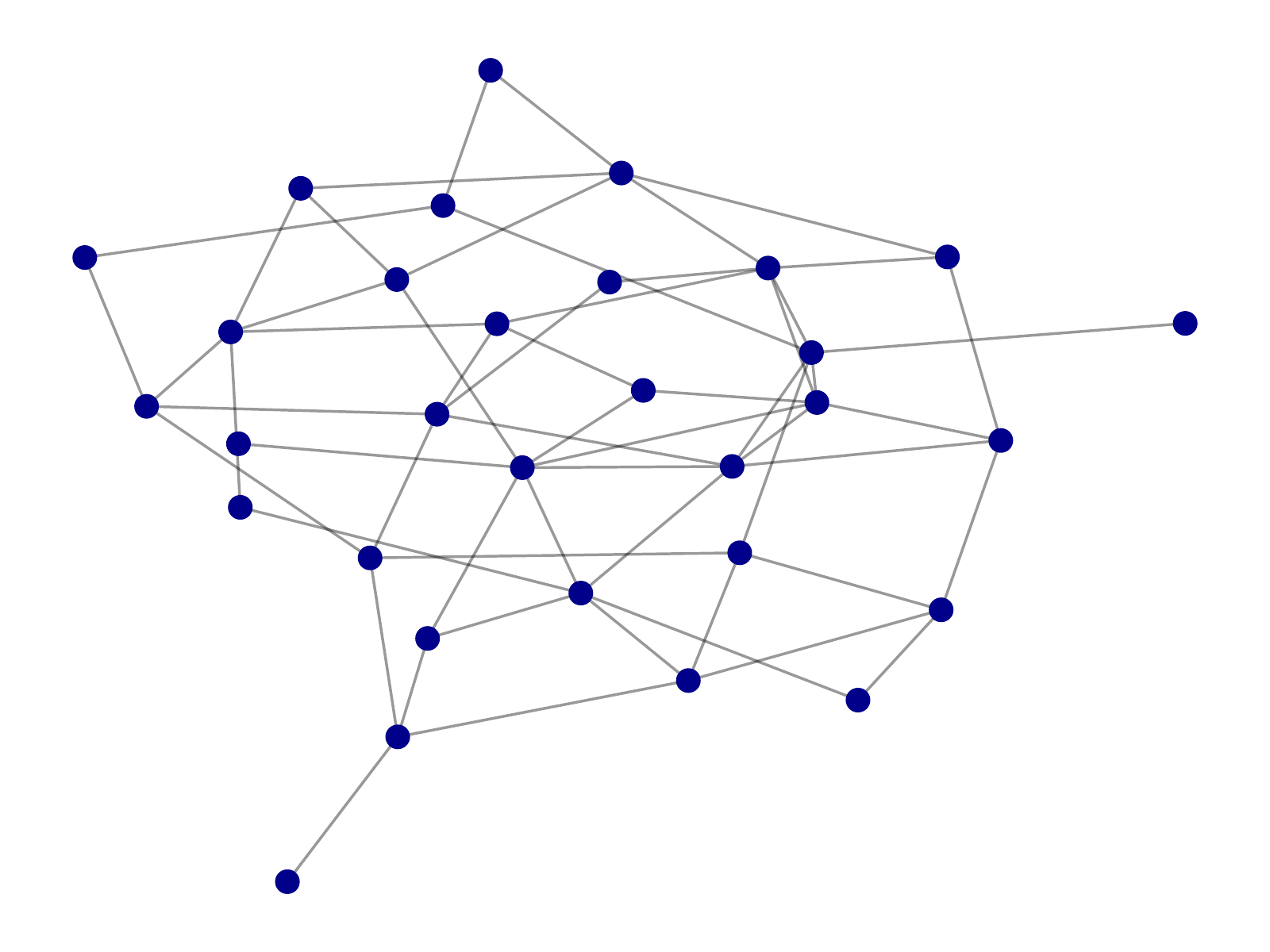}&
    \includegraphics[width=0.09\textwidth]{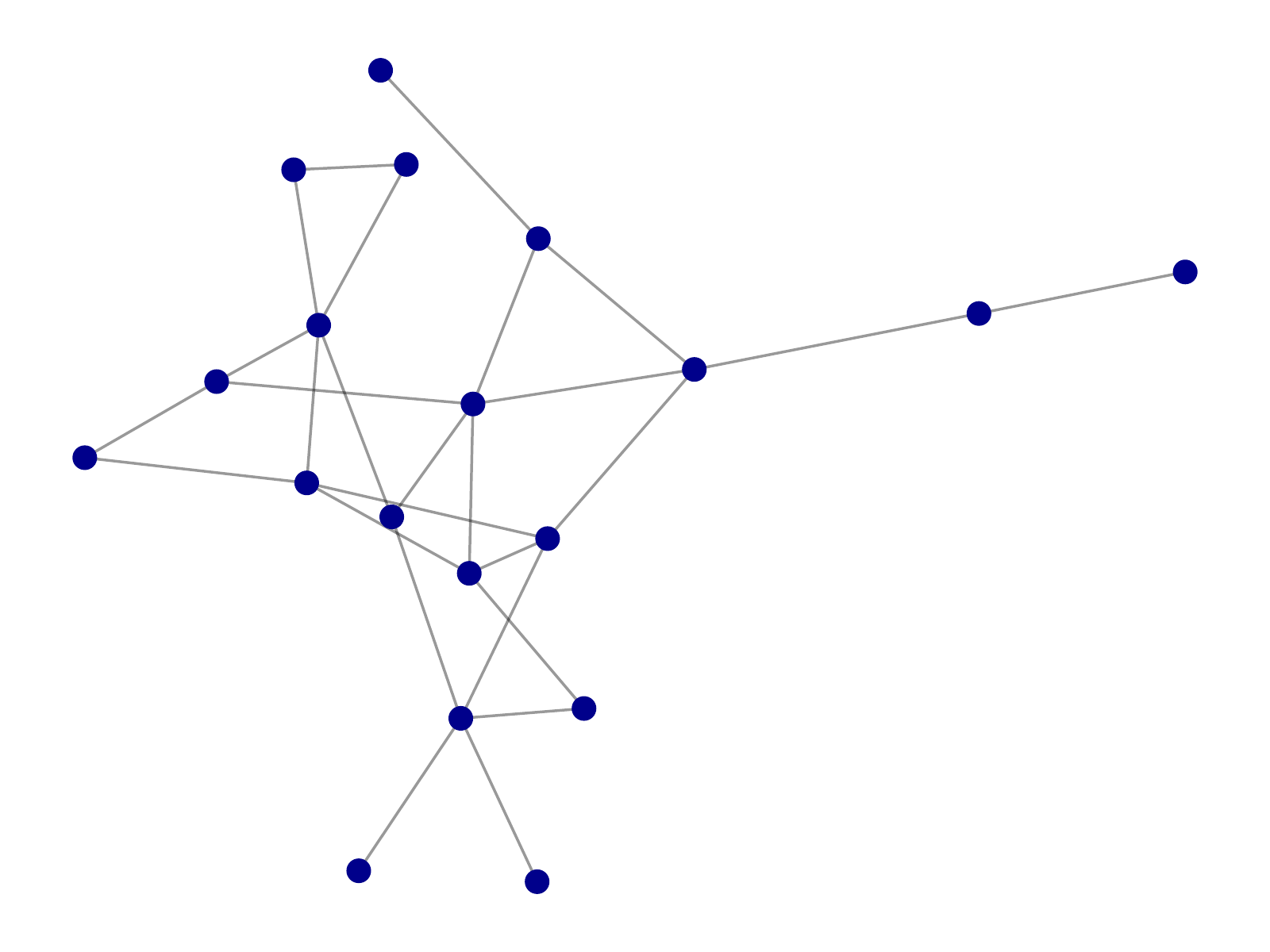}&\includegraphics[width=0.09\textwidth]{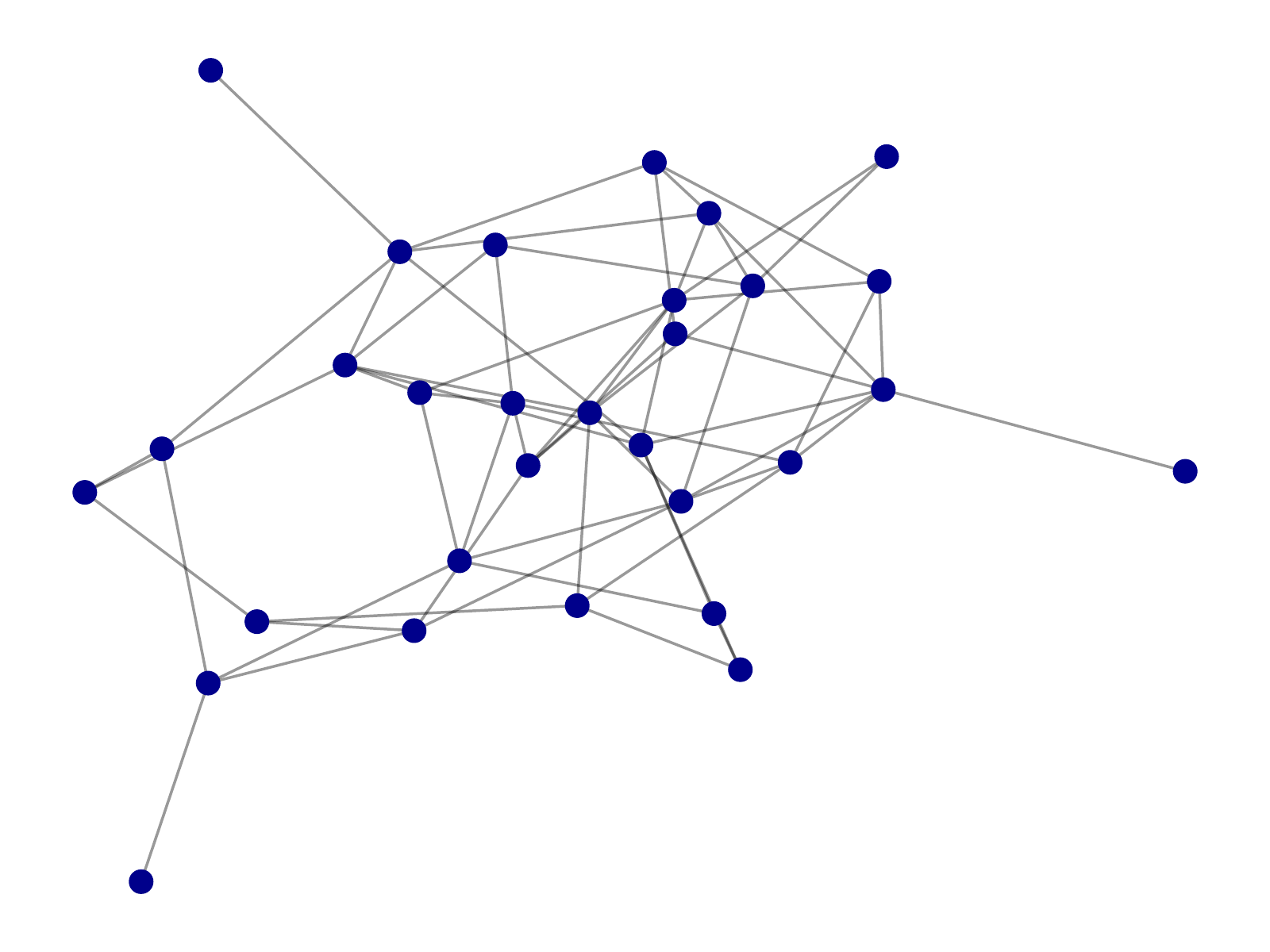}&
    \includegraphics[width=0.09\textwidth]{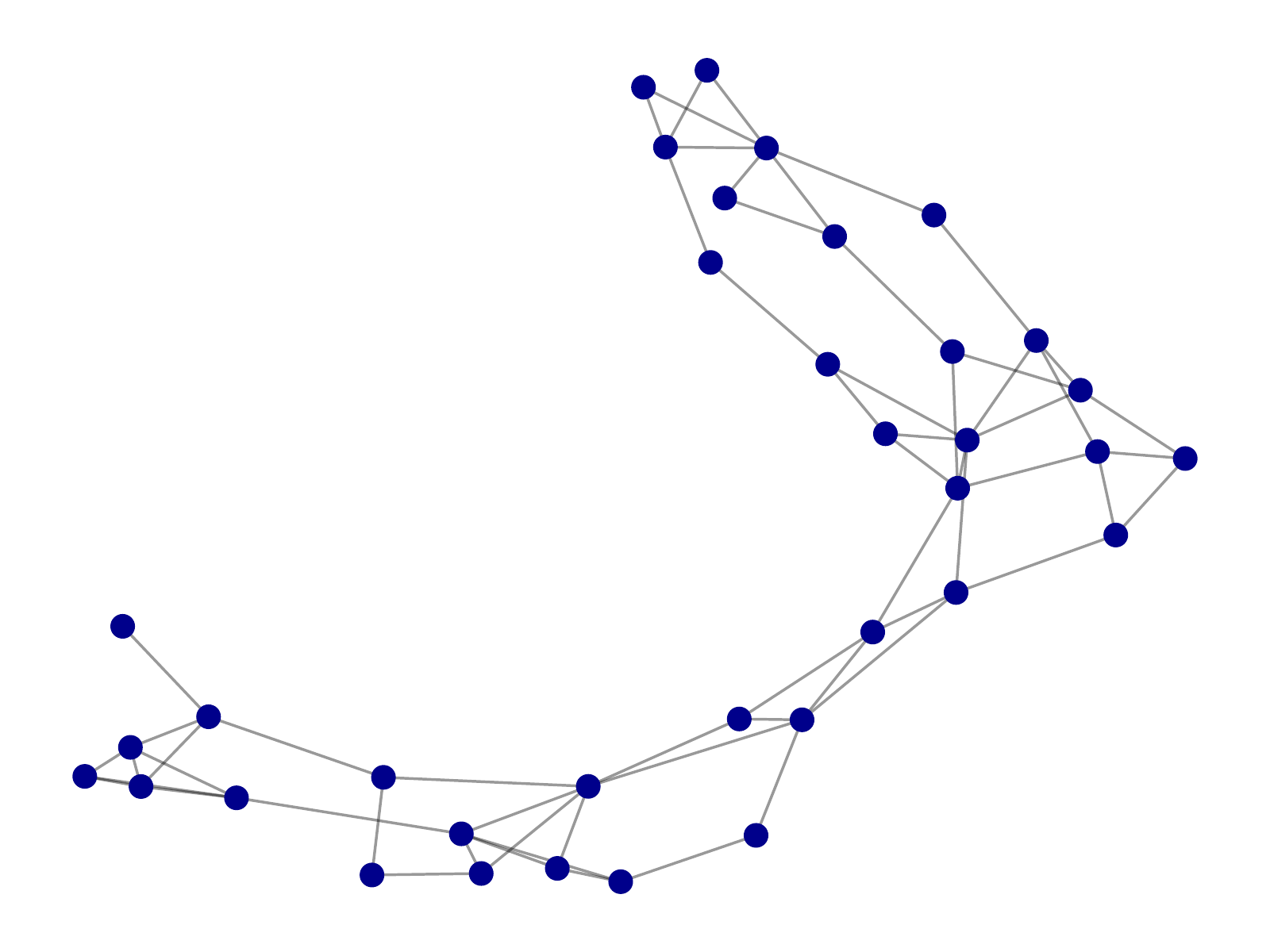}&\includegraphics[width=0.09\textwidth]{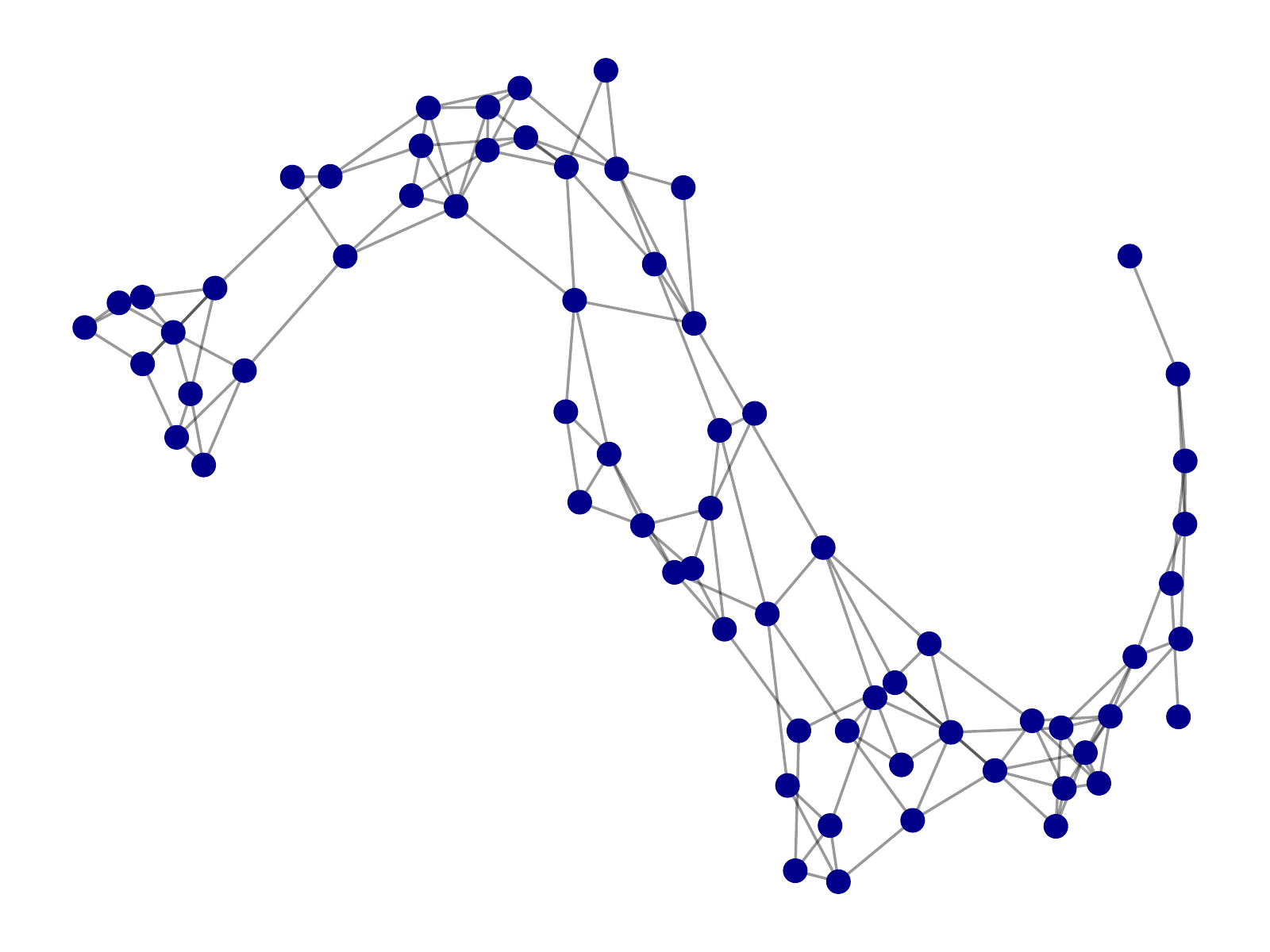}\\
    \multicolumn{8}{c}{Enzymes}
    \end{tabular}
    \includegraphics[width=\textwidth]{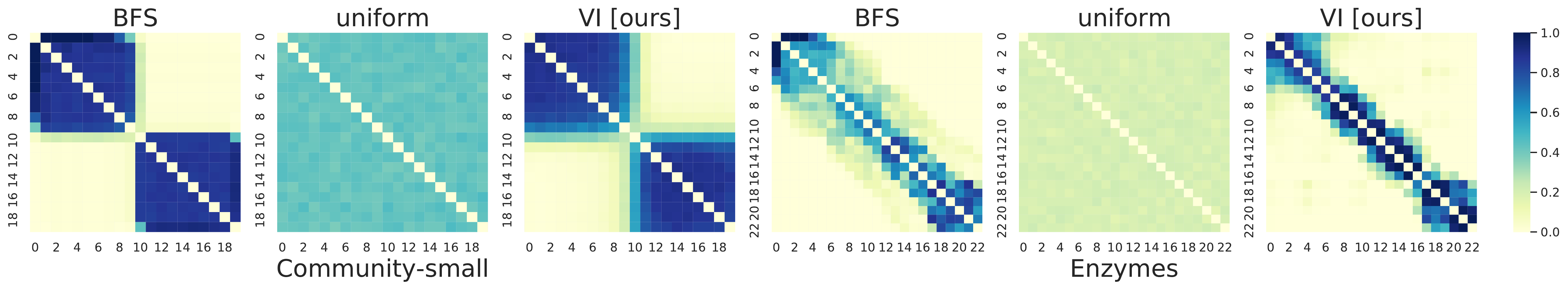}
    \vspace*{-20pt}
    \caption{(Top) Graph samples from different models trained on Community-small and Enzymes. The model fitted with VI learns to generate graphs with the same structural patterns as the real data. (Bottom) Averaged adjacency matrices for a graph with different samples from the node ordering. Our VI approach uncovers the underlying structure of the graph.}
    \label{fig:generation-comm}
    \vspace*{-4pt}
\end{figure*}

\begin{table*}[t]
    \small
    \centering
    \caption{Graph quality on the considered datasets (MMD on three metrics). Models fitted with VI tend to produce higher-quality samples.}
    \label{tab:unlabel-graph-qualtiy}
    
\begin{tabular}{lc ccc c ccc c ccc}
    \hline
        & & \multicolumn{3}{c}{Community-small} && \multicolumn{3}{c}{Citeseer-small}&& \multicolumn{3}{c}{Enzymes}\\
        
        \cline{3-5}  \cline{7-9}\cline{11-13}
        & & Deg. &  Clus. & Orbit  && Deg. &  Clus. & Orbit && Deg. &  Clus. & Orbit\\
    \hline
        \multirow{2}{*}{DeepGMG}& uniform &0.2 & 0.978 & 0.40 &&  0.052&0.06 &\textbf{0.005} && 1.51& 0.95&0.29  \\
        & VI [ours] & \textbf{0.178}& \textbf{0.921}& \textbf{0.338} && \textbf{0.028}&\textbf{0.014} &\textbf{0.005} &&\textbf{1.01}& \textbf{0.48}&\textbf{0.27} \\
    \hline
        \multirow{3}{*}{GraphRNN}
        & BFS &0.034 &0.11 &0.009 &&  0.016&\textbf{0.05} &0.004 && 0.03&0.085&0.043\\
        & uniform &0.096 & 0.091 & 0.021 && \textbf{0.009}& 0.09 & 0.003  && 0.042& 0.104 & 0.074\\
        & VI [ours] & \textbf{0.018}& \textbf{0.01}& \textbf{0.008} && 0.08&\textbf{0.05} &\textbf{0.002} && \textbf{0.015}&\textbf{0.067}&\textbf{0.02}\\
    \hline
        \multirow{2}{*}{GraphGEN}& DFS &0.695 &0.931 &0.178 && 0.047 &\textbf{0.032} & 0.017 && 0.716 & 0.456 & 0.078\\
        & VI [ours] &\textbf{0.143} & \textbf{0.248}& \textbf{0.068} && \textbf{0.032} & 0.078& \textbf{0.008} && \textbf{0.346} & \textbf{0.440} &  \textbf{0.020}\\
    \hline
        &&  \multicolumn{3}{c}{Lung} && \multicolumn{3}{c}{Yeast}&& \multicolumn{3}{c}{Cora}\\
    \cline{3-5}  \cline{7-9}\cline{11-13}
         && Deg. &  Clus. & Orbit && Deg. &  Clus. & Orbit && Deg. &  Clus. & Orbit \\
    \hline 
    \multirow{2}{*}{DeepGMG}& uniform &0.206 &\textbf{0.023} &0.224 && 0.547&0.242&0.470 &&\textbf{0.35} &0.27 & 0.11\\
        & VI [ours] & \textbf{0.189} & \textbf{0.023}&\textbf{0.2} && \textbf{0.324}&\textbf{0.118}&\textbf{0.258} && 0.36& \textbf{0.22}& \textbf{0.04}\\
    \hline
        \multirow{3}{*}{GraphRNN}
        & BFS & 0.103 & 0.301& 0.043 && 0.512 & 0.153 & 0.026  && 1.125 & 1.002 & 0.427\\
        & uniform  & 1.213 &  \textbf{0.002} &  0.081&& 0.746 & 0.351 & 0.070 &&0.188 & 0.206 & 0.200 \\
        & VI [ours] & \textbf{0.074} & 0.060 & \textbf{0.004} && \textbf{0.097} & \textbf{0.092} & \textbf{0.005}  && \textbf{0.066} & \textbf{0.171} & \textbf{0.052}\\
    \hline
        \multirow{2}{*}{GraphGEN}& DFS &0.049 &0.017 &\textbf{0.000} && 0.014&\textbf{0.003} &\textbf{0.000} &&0.099 &0.167 &0.122 \\
        & VI [ours] & \textbf{0.022} & \textbf{0.008}& \textbf{0.000} &&\textbf{0.012} &\textbf{0.003} &\textbf{0.000} && \textbf{0.056} &\textbf{0.103} & \textbf{0.069}\\
    \hline
    \end{tabular}
    \vspace*{-5pt}
\end{table*}

\section{Experiments}
\label{subsec:exp}

In this section, we design a set of experiments to investigate: (i) the tightness of the variational lower bound, (ii) the performance of a model fitted with the proposed method based on VI, (iii) the quality of the approximate posterior learned by the variational distribution, and (iv) the quality of graphs generated with the fitted model.


\subsection{Experimental setup}

\parhead{Datasets.} We use 6 datasets: (1) \emph{Community-small}: 500 community graphs with $12\leq|V|\leq20$. Each graph has two communities generated by the model of \citet{erdHos1960evolution}. (2) \emph{Citeseer-small}: 200 subgraphs with $5\leq|V|\leq20$, extracted from Citeseer network \citep{sen2008collective} using random walk. (3) \emph{Enzymes}: 563 protein graphs from BRENDA database \citep{schomburg2004brenda} with $10\leq|V|\leq125$. (4) \emph{Lung}: 400 chemical graphs with $6\leq|V|\leq50$, sampled from \citet{10.1093/nar/gky1033}. (5) \emph{Yeast}: 400 chemical graphs with $5\leq|V|\leq50$, sampled from \citet{10.1093/nar/gky1033}. (6) \emph{Cora}: 400 subgraphs with $9\leq|V|\leq97$, extracted from the Cora network \citep{sen2008collective} using random walk.\looseness=-1


\parhead{Methods.}
We choose three recent graph generative models, DeepGMG \cite{li2018learning}, GraphRNN \cite{you2018graphrnn}, and GraphGEN \cite{goyal2020graphgen}. We use their original training methods with default hyperparameters as baselines, and compare them with the proposed VI method. For our method, we use the Nauty package \citep{mckay2013nauty} to compute $|\Pi[A]|$ and the color refinement algorithm to approximate $|\Pi[G_{1:n}]|$, and we parameterize the variational distribution with a GAT \citep{velivckovic2017graph} with 3 layers, 6 attention heads, and residual connections.


\subsection{Predictive performance in terms of log-likelihood}

Here we compare the different methods in terms of the log-likelihood on test data.
We approximate the log-likelihood using importance sampling \citep{murphy2012machine}. We use the variational distribution $q_{\phi}(\pi | G)$ as the proposal distribution and draw $L$ samples $\{\pi^{(l)}\}$ from it. The importance sampling approximation of $\log p(G)$ is  
\begin{equation}
    \label{eq:approx-loglike}
    \log p_{\theta}(G) \simeq \log \Big(\frac{1}{L}\sum_{l=1}^{L} \frac{p_{\theta}(G,\pi_l^{(l)})}{q_{\phi}(\pi^{(l)}|G)}\Big).
\end{equation} 
Here $\pi^{(l)} \sim q_{\phi}(\pi|G)$ for $l = 1, \ldots, L$.
The estimation is unbiased only when $L$ approaches infinity; nevertheless, we found that $L=1{,}000$ gives an accurate estimation (see \Cref{app:est-lik}).

For our method, we use the learned $q_{\phi}(\pi|G)$ distribution as the proposal in the importance sampling approximation. For DeepGMG and GraphRNN, we use a uniform proposal $q_{\phi}(\pi|G)$,
because these methods are trained with node orderings sampled from the uniform distribution (as mentioned before, this is equivalent to using a uniform variational distribution). We use $L = 1{,}000$ samples for each graph in the test set, except for GraphGEN, for which we only use the canonical order $\pi^\star$ to estimate the log-likelihood.




The results are in \Cref{tab:marginal-likelihood}. We compare the results from each baseline and from our approach using a paired $t$-test at the $5\%$ significance level. We see that the proposed VI method exhibits better predictive performance on most datasets, and the improvements are often very significant. To assess the quality of the variational lower bound, we also show its value in \Cref{tab:marginal-likelihood} (the bound was estimated with $1000$ samples from $q_{\phi}(\pi|G)$). We can see that the bound is relatively tight for most cases. These results indicate that our training procedure based on VI can significantly improve the performance of a graph generative model.

On the Yeast dataset, the result of the VI approach is very close to the DeepGMG baseline. We checked the node orderings sampled from the learned variational distribution and observed that they are very similar to DFS orders. We hypothesize that, for this dataset, the posterior $p_{\theta}(\pi|G)$ is higher for DFS orders, and that $q_{\phi}(\pi|G)$ can find this. On the Community-small dataset, the gap with the baseline is much larger; this is because the graphs in this dataset have a special structure that always connects two communities with one edge. The variational distribution seems to be able to exploit this structure to improve the model fitting. For the Citeseer-small and Cora datasets, the gap is smaller---these datasets are generated from random walks, so the graphs have less structure for the VI algorithm to exploit.

\subsection{Qualitative analysis}

We now analyze qualitatively the graphs generated by each approach. Here we focus on GraphRNN models. \Cref{fig:generation-comm} (top left) shows four graphs from the Community-small dataset and four graphs from the Enzymes dataset. We then show graphs generated by variants of GraphRNNs that are trained with different node orderings (BFS, uniform, and our VI approach); these samples are representative and not cherry-picked. For Community-small, our method can capture the specific graph pattern---only one edge exists between two communities---with only one exception. The model trained with BFS orderings learns to generate two communities, but it does not generally use a single edge to connect them. The model fitted with uniform orderings fails to generate two communities.
These results can be explained by the plot of adjacency matrices in \Cref{fig:generation-comm} (bottom). In this figure, we choose one graph, sample node orderings from different distributions, and plot the average of their corresponding adjacency matrices. On Community-small, the BFS order produces an adjacency matrix whose two anti-diagonal blocks are near zero. We hypothesize that this pattern across all node orderings is easier for the model to learn.
The variational distribution discovers this pattern.



We perform the same analysis for the Enzymes dataset. In \Cref{fig:generation-comm} (top), the samples from the VI training method are more similar to the ground truth data than for the baseline training methods---they have the shape of long strips, and two of them contains large cycles. \Cref{fig:generation-comm} (bottom) shows the averaged adjacency matrices; we can see that the variational distribution learns to form band matrices that have most non-zeros around the diagonal. In contrast, BFS orderings scatter non-zeros to a wider range. In \Cref{app:generation-sequences}, we provide a similar analysis for DeepGMG (which is based on graph sequences) on the Enzymes dataset.

\subsection{Quality of generated graphs}

Here we quantitatively assess the quality of generated graphs. Following previous works \citep{you2018graphrnn,liao2019efficient, goyal2020graphgen}, we measure the quality in terms of their similarity to a test set using different metrics: the degree distribution, clustering coefficients and occurrences of 4-node orbits. Then, we measure the difference between the test set and a set of generated graphs using the maximum mean discrepancy (MMD) between their respective distributions (lower MMD indicates a better model).


\Cref{tab:unlabel-graph-qualtiy} shows the MMD evaluation on the six datasets. The VI training method improves the performance of the three models in four datasets (Community-small, Enzymes, Yeast, and Cora), with some minor performance drops on the other two datasets. On Citeseer-small, the VI method exhibits a performance drop on only one metric when it is applied on GraphRNN or GraphGEN; this is somewhat consistent with our previous results that the log-likelihood improvement on this dataset is less significant. Overall, the results indicate that an autoregressive generative model trained with VI produces higher-quality graphs.

\section{Conclusion}
\label{sec:conclusion}

In this paper, we analyze autoregressive graph generative models that are based on either the adjacency matrix or the graph sequence. We provide an in-depth discussion of the automorphism issue that raises when calculating the marginal likelihood of the graph. Using VI, we also address the intractable marginalization over node orderings for fitting a graph generative model. The experiment results show that the variational distribution learns reasonable orderings that improve the generative model's performance. Our variational lower bound is tighter than existing bounds on the marginal log-likelihood. We evaluate models based on their test log-likelihood and find that models fitted with our VI approach exhibit better predictive performance and are able to generate higher-quality graphs than previous methods.
The main limitation of our method is its scalability; thus it is not designed for large graphs. We expect future work will accelerate the algorithm to improve its scalability.


\section*{Acknowledgements}
We thank Yujia Li for his insightful comments, and the anonymous reviewers for their constructive feedback.
The work was supported by NSF 1850358 and NSF 1908617. Xu Han was also supported by NSF 1934553.

\bibliography{references.bib}

\begin{thebibliography}{32}
\providecommand{\natexlab}[1]{#1}
\providecommand{\url}[1]{\texttt{#1}}
\expandafter\ifx\csname urlstyle\endcsname\relax
  \providecommand{\doi}[1]{doi: #1}\else
  \providecommand{\doi}{doi: \begingroup \urlstyle{rm}\Url}\fi

\bibitem[Arvind et~al.(2017)Arvind, K{\"o}bler, Rattan, and
  Verbitsky]{arvind2017graph}
Arvind, V., K{\"o}bler, J., Rattan, G., and Verbitsky, O.
\newblock Graph isomorphism, color refinement, and compactness.
\newblock \emph{computational complexity}, 26\penalty0 (3):\penalty0 627--685,
  2017.

\bibitem[Beals et~al.(1999)Beals, Chang, Gasarch, and
  Tor{\'a}n]{beals1999finding}
Beals, R., Chang, R., Gasarch, W., and Tor{\'a}n, J.
\newblock On finding the number of graph automorphisms.
\newblock \emph{Chicago J. Theor. Comput. Sci}, 1999.

\bibitem[Blei et~al.(2017)Blei, Kucukelbir, and McAuliffe]{blei2017variational}
Blei, D.~M., Kucukelbir, A., and McAuliffe, J.~D.
\newblock Variational inference: A review for statisticians.
\newblock \emph{Journal of the American statistical Association}, 112\penalty0
  (518):\penalty0 859--877, 2017.

\bibitem[Cai et~al.(2016)Cai, Campbell, and Broderick]{cai2016edge}
Cai, D., Campbell, T., and Broderick, T.
\newblock Edge-exchangeable graphs and sparsity.
\newblock In \emph{Advances in Neural Information Processing Systems}, pp.\
  4249--4257, 2016.

\bibitem[Carbonetto et~al.(2009)Carbonetto, King, and Hamze]{Carbonetto2009}
Carbonetto, P., King, M., and Hamze, F.
\newblock A stochastic approximation method for inference in probabilistic
  graphical models.
\newblock In \emph{Advances in Neural Information Processing Systems}, 2009.

\bibitem[Dai et~al.(2020)Dai, Nazi, Li, Dai, and Schuurmans]{dai2020scalable}
Dai, H., Nazi, A., Li, Y., Dai, B., and Schuurmans, D.
\newblock Scalable deep generative modeling for sparse graphs.
\newblock \emph{arXiv preprint arXiv:2006.15502}, 2020.

\bibitem[Erd{\H{o}}s \& R{\'e}nyi(1960)Erd{\H{o}}s and
  R{\'e}nyi]{erdHos1960evolution}
Erd{\H{o}}s, P. and R{\'e}nyi, A.
\newblock On the evolution of random graphs.
\newblock \emph{Publ. Math. Inst. Hung. Acad. Sci}, 5\penalty0 (1):\penalty0
  17--60, 1960.

\bibitem[Godsil \& Royle(2001)Godsil and Royle]{godsil2001algebraic}
Godsil, C. and Royle, G.~F.
\newblock \emph{Algebraic graph theory}, volume 207.
\newblock Springer Science \& Business Media, 2001.

\bibitem[Goyal et~al.(2020)Goyal, Jain, and Ranu]{goyal2020graphgen}
Goyal, N., Jain, H.~V., and Ranu, S.
\newblock Graph{GEN}: A scalable approach to domain-agnostic labeled graph
  generation.
\newblock In \emph{Proceedings of The Web Conference 2020}, pp.\  1253--1263,
  2020.

\bibitem[Guo \& Zhao(2020)Guo and Zhao]{guo2020systematic}
Guo, X. and Zhao, L.
\newblock A systematic survey on deep generative models for graph generation.
\newblock \emph{arXiv preprint arXiv:2007.06686}, 2020.

\bibitem[Kim et~al.(2018)Kim, Chen, Cheng, Gindulyte, He, He, Li, Shoemaker,
  Thiessen, Yu, Zaslavsky, Zhang, and Bolton]{10.1093/nar/gky1033}
Kim, S., Chen, J., Cheng, T., Gindulyte, A., He, J., He, S., Li, Q., Shoemaker,
  B.~A., Thiessen, P.~A., Yu, B., Zaslavsky, L., Zhang, J., and Bolton, E.~E.
\newblock {PubChem 2019 update: improved access to chemical data}.
\newblock \emph{Nucleic Acids Research}, 47\penalty0 (D1):\penalty0
  D1102--D1109, 10 2018.

\bibitem[Klicpera et~al.(2018)Klicpera, Bojchevski, and
  G{\"u}nnemann]{klicpera2018predict}
Klicpera, J., Bojchevski, A., and G{\"u}nnemann, S.
\newblock Predict then propagate: Graph neural networks meet personalized
  pagerank.
\newblock \emph{arXiv preprint arXiv:1810.05997}, 2018.

\bibitem[Li et~al.(2018)Li, Vinyals, Dyer, Pascanu, and
  Battaglia]{li2018learning}
Li, Y., Vinyals, O., Dyer, C., Pascanu, R., and Battaglia, P.
\newblock Learning deep generative models of graphs.
\newblock \emph{arXiv preprint arXiv:1803.03324}, 2018.

\bibitem[Liao et~al.(2019)Liao, Li, Song, Wang, Hamilton, Duvenaud, Urtasun,
  and Zemel]{liao2019efficient}
Liao, R., Li, Y., Song, Y., Wang, S., Hamilton, W., Duvenaud, D.~K., Urtasun,
  R., and Zemel, R.
\newblock Efficient graph generation with graph recurrent attention networks.
\newblock In \emph{Advances in Neural Information Processing Systems}, pp.\
  4255--4265, 2019.

\bibitem[Liu et~al.(2019)Liu, Chan, Luk, and Borealis]{liu2019auto}
Liu, C.-C., Chan, H., Luk, K., and Borealis, A.
\newblock Auto-regressive graph generation modeling with improved evaluation
  methods.
\newblock In \emph{NeurIPS’2019 Workshop on Graph Representation Learning},
  2019.

\bibitem[McKay \& Piperno(2013)McKay and Piperno]{mckay2013nauty}
McKay, B.~D. and Piperno, A.
\newblock Nauty and traces user’s guide (version 2.5).
\newblock \emph{Computer Science Department, Australian National University,
  Canberra, Australia}, 2013.

\bibitem[Mohamed et~al.(2019)Mohamed, Rosca, Figurnov, and
  Mnih]{mohamed2019monte}
Mohamed, S., Rosca, M., Figurnov, M., and Mnih, A.
\newblock Monte {C}arlo gradient estimation in machine learning.
\newblock \emph{arXiv preprint arXiv:1906.10652}, 2019.

\bibitem[Murphy(2012)]{murphy2012machine}
Murphy, K.~P.
\newblock \emph{Machine learning: a probabilistic perspective}.
\newblock MIT press, 2012.

\bibitem[Nowicki \& Snijders(2001)Nowicki and Snijders]{nowicki2001estimation}
Nowicki, K. and Snijders, T. A.~B.
\newblock Estimation and prediction for stochastic blockstructures.
\newblock \emph{Journal of the American Statistical Association}, 96\penalty0
  (455):\penalty0 1077--1087, 2001.

\bibitem[Paisley et~al.(2012)Paisley, Blei, and Jordan]{Paisley2012}
Paisley, J.~W., Blei, D.~M., and Jordan, M.~I.
\newblock Variational {B}ayesian inference with stochastic search.
\newblock In \emph{International Conference on Machine Learning}, 2012.

\bibitem[Ranganath et~al.(2014)Ranganath, Gerrish, and Blei]{Ranganath2014}
Ranganath, R., Gerrish, S., and Blei, D.~M.
\newblock Black box variational inference.
\newblock In \emph{Artificial Intelligence and Statistics}, 2014.

\bibitem[Schomburg et~al.(2004)Schomburg, Chang, Ebeling, Gremse, Heldt, Huhn,
  and Schomburg]{schomburg2004brenda}
Schomburg, I., Chang, A., Ebeling, C., Gremse, M., Heldt, C., Huhn, G., and
  Schomburg, D.
\newblock Brenda, the enzyme database: updates and major new developments.
\newblock \emph{Nucleic acids research}, 32\penalty0 (suppl\_1):\penalty0
  D431--D433, 2004.

\bibitem[Sen et~al.(2008)Sen, Namata, Bilgic, Getoor, Galligher, and
  Eliassi-Rad]{sen2008collective}
Sen, P., Namata, G., Bilgic, M., Getoor, L., Galligher, B., and Eliassi-Rad, T.
\newblock Collective classification in network data.
\newblock \emph{AI magazine}, 29\penalty0 (3):\penalty0 93--93, 2008.

\bibitem[Shi et~al.(2020)Shi, Xu, Zhu, Zhang, Zhang, and Tang]{shi2020graphaf}
Shi, C., Xu, M., Zhu, Z., Zhang, W., Zhang, M., and Tang, J.
\newblock Graphaf: a flow-based autoregressive model for molecular graph
  generation.
\newblock \emph{arXiv preprint arXiv:2001.09382}, 2020.

\bibitem[Vaswani et~al.(2017)Vaswani, Shazeer, Parmar, Uszkoreit, Jones, Gomez,
  Kaiser, and Polosukhin]{vaswani2017attention}
Vaswani, A., Shazeer, N., Parmar, N., Uszkoreit, J., Jones, L., Gomez, A.~N.,
  Kaiser, L., and Polosukhin, I.
\newblock Attention is all you need.
\newblock \emph{arXiv preprint arXiv:1706.03762}, 2017.

\bibitem[Veitch \& Roy(2015)Veitch and Roy]{veitch2015class}
Veitch, V. and Roy, D.~M.
\newblock The class of random graphs arising from exchangeable random measures.
\newblock \emph{arXiv preprint arXiv:1512.03099}, 2015.

\bibitem[Veli{\v{c}}kovi{\'c} et~al.(2017)Veli{\v{c}}kovi{\'c}, Cucurull,
  Casanova, Romero, Lio, and Bengio]{velivckovic2017graph}
Veli{\v{c}}kovi{\'c}, P., Cucurull, G., Casanova, A., Romero, A., Lio, P., and
  Bengio, Y.
\newblock Graph attention networks.
\newblock \emph{arXiv preprint arXiv:1710.10903}, 2017.

\bibitem[Watts \& Strogatz(1998)Watts and Strogatz]{watts1998collective}
Watts, D.~J. and Strogatz, S.~H.
\newblock Collective dynamics of ‘small-world’networks.
\newblock \emph{nature}, 393\penalty0 (6684):\penalty0 440--442, 1998.

\bibitem[Williams(1992)]{williams1992simple}
Williams, R.~J.
\newblock Simple statistical gradient-following algorithms for connectionist
  reinforcement learning.
\newblock \emph{Machine learning}, 8\penalty0 (3-4):\penalty0 229--256, 1992.

\bibitem[Wu et~al.(2019)Wu, Souza, Zhang, Fifty, Yu, and
  Weinberger]{wu2019simplifying}
Wu, F., Souza, A., Zhang, T., Fifty, C., Yu, T., and Weinberger, K.
\newblock Simplifying graph convolutional networks.
\newblock In \emph{International conference on machine learning}, pp.\
  6861--6871. PMLR, 2019.

\bibitem[You et~al.(2018)You, Ying, Ren, Hamilton, and
  Leskovec]{you2018graphrnn}
You, J., Ying, R., Ren, X., Hamilton, W.~L., and Leskovec, J.
\newblock Graph{RNN}: Generating realistic graphs with deep auto-regressive
  models.
\newblock \emph{arXiv preprint arXiv:1802.08773}, 2018.

\bibitem[Yuan et~al.(2020)Yuan, Tang, Hu, and Ji]{yuan2020xgnn}
Yuan, H., Tang, J., Hu, X., and Ji, S.
\newblock Xgnn: Towards model-level explanations of graph neural networks.
\newblock \emph{arXiv preprint arXiv:2006.02587}, 2020.

\end{thebibliography}
\bibliographystyle{icml2021}
\clearpage
\appendix
\section{Appendix}

\subsection{Variance of the Gradient Estimators}
\label{app:variance}

We use the score function estimator \citep{williams1992simple}  to obtain the gradients. In some applications, this estimator may suffer from high variance and make the training process unstable. Here, we study the variance of the score function estimator to make sure that it does not cause optimization issues in our application. To show the behavior of the optimizer, we plot the objective (the ELBO)  in \Cref{fig:score-function-variance}(right)  
and the variance of the gradient estimator in  \Cref{fig:score-function-variance}(left); both against training epochs. We can see that the objective decreases smoothly throughout optimization, indicating that the algorithm is stable. The three curves in the left plot show the variance of the gradients for different number of Monte Carlo samples; as expected, the variance decreases as the number of samples increases. Moreover, the variance from a relatively small sample size ($S=8$) is already decently low. This is because the variational distribution $q_{\phi}(\pi | G)$ tends to concentrate its probability mass to a small number of node orders, which can be seen from our analysis of the variational distribution (Figure~3 and \Cref{fig:generation-sequences-community}). Considering the tradeoff between computation time and variance, we set $S=8$ in all our experiments.

\begin{figure}[h]
    \centering

    \includegraphics[width=0.47\textwidth]{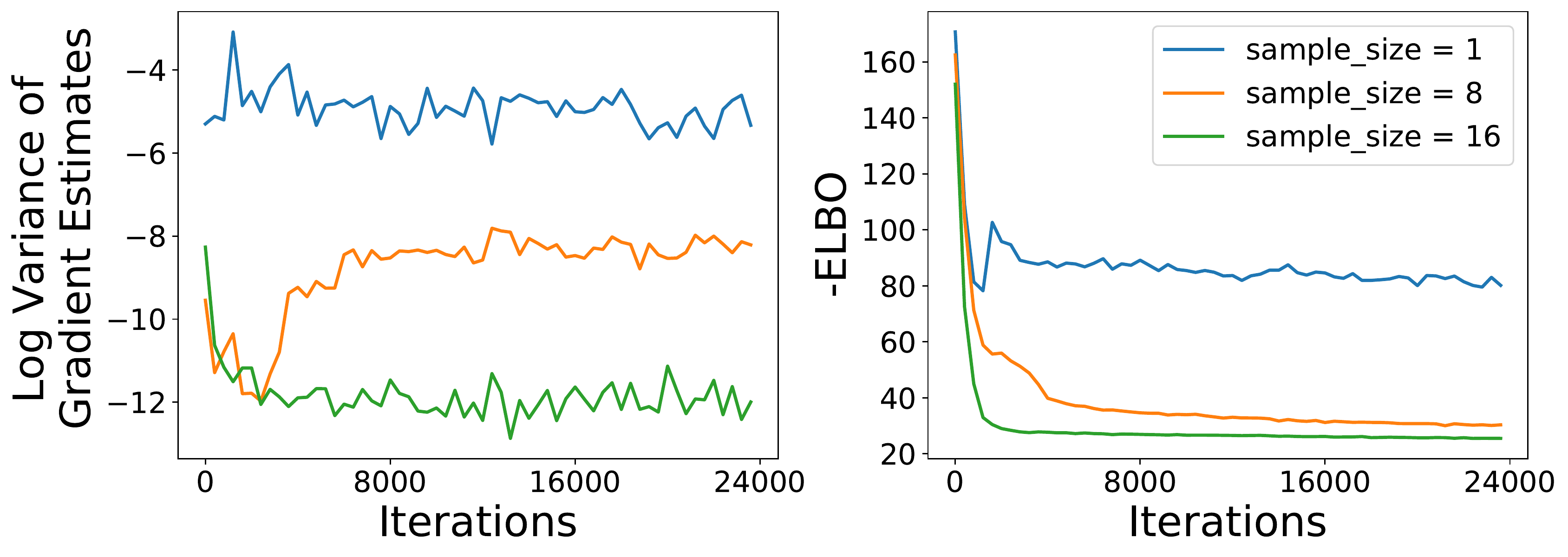}

    \caption{(Left) Training objective (ELBO) against epochs for GraphGEN on the Community-small dataset. The objective decreases smoothly throughout optimization. (Right) Log-variance of the score function gradient estimator for different number of Monte Carlo samples. Using $S=8$ samples is enough to estimate the gradient.}
    \label{fig:score-function-variance}
\end{figure}

\subsection{Proof of Lemma~1}

\setcounter{lemma}{0}
\begin{lemma}
    Let $G[V\backslash\{u\}]$ and  $G[V\backslash\{v\}]$ respectively denote the subgraphs induced by $V\backslash\{u\}$ and   $V\backslash\{v\}$, then $u$ and $v$ are in the same orbit if and only if $G[V\backslash\{u\}]$ and $G[V\backslash\{v\}]$ are isomorphic.
\end{lemma}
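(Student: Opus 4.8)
The plan is to prove the biconditional by handling each implication on its own; I expect the forward implication to be essentially immediate and the backward implication to hold all the difficulty. For the forward direction, I would assume $u$ and $v$ share an orbit, so that some automorphism $f\in\mathrm{Aut}(G)$ has $f(u)=v$. As a bijection of $V$ with $f(u)=v$, the map $f$ sends $V\backslash\{u\}$ onto $V\backslash\{v\}$, and since $f$ preserves adjacency between every pair of vertices it certainly preserves adjacency between pairs lying in $V\backslash\{u\}$; hence the restriction of $f$ to $V\backslash\{u\}$ is an isomorphism from $G[V\backslash\{u\}]$ to $G[V\backslash\{v\}]$. This should take only a couple of lines.

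For the backward direction I would begin with an isomorphism $g:G[V\backslash\{u\}]\to G[V\backslash\{v\}]$ and attempt to promote it to a global automorphism by setting $f(u)=v$ and $f(w)=g(w)$ for $w\ne u$. By construction $f$ already respects every edge that avoids $u$, so the only remaining requirement is that $f$ also respect the edges incident to $u$, i.e., that $g$ carry the neighborhood $N_G(u)$ onto $N_G(v)$ (with the pair $\{u,v\}$ treated separately). This is the step I expect to be the main obstacle: the hypothesis controls $g$ only on adjacencies internal to $V\backslash\{u\}$ and says nothing a priori about how the neighborhoods of the two deleted vertices correspond. In fact, plain isomorphism of the vertex-deleted subgraphs does not in general force $u$ and $v$ into the same orbit --- this is the classical phenomenon of pseudosimilar vertices --- so a correct argument cannot rely on an arbitrary $g$ and must use extra structure.

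The way I would close this gap is to exploit the stronger information actually available where the lemma is applied: there $G[V\backslash\{u\}]$ and $G[V\backslash\{v\}]$ do not merely happen to be isomorphic but coincide as the same labeled graph $G_{n-1}$ along a common generation sequence, so that the bijection matching the two orderings already aligns the neighborhoods of $u$ and $v$. Concretely, I would argue that one can take a neighborhood-preserving $g$, check directly that the resulting $f$ satisfies the adjacency condition at $u$ as well, and conclude that $f$ is an automorphism with $f(u)=v$, placing $u$ and $v$ in the same orbit. The delicate point that any rigorous write-up must pin down is precisely the existence of such a neighborhood-respecting isomorphism.
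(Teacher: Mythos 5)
Your proof of the forward direction is correct and is the same as the paper's (restrict an automorphism $f$ with $f(u)=v$ to $V\backslash\{u\}$). On the converse, your diagnosis is exactly right, and it deserves to be stated plainly: the converse direction of this lemma is \emph{false} for general graphs. Vertices $u,v$ with $G[V\backslash\{u\}]\cong G[V\backslash\{v\}]$ that lie in different orbits are the classical \emph{pseudosimilar} vertices (Harary--Palmer 1966; Kimble--Schwenk--Stockmeyer 1981; Godsil--Kocay 1982), and graphs containing such pairs exist; the smallest known examples have eight vertices. The paper nevertheless presents a complete-looking induction proof, and its error is precisely at the step you predicted would be the obstacle. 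The paper extends an isomorphism $h\colon V\backslash\{v\}\to V\backslash\{u\}$ to a permutation of $V$ by setting $h(v)=u$, notes that $(i,j)\in E\Leftrightarrow(h(i),h(j))\in E$ holds for pairs with $i,j\neq v$, and then runs the chain $(u,i)\in E\Leftrightarrow(h(u),h(i))\in E\Leftrightarrow\cdots\Leftrightarrow(h^{K}(u),h^{K}(i))\in E$ to conclude that $h$ preserves edges at $v$ as well. But this chain necessarily applies the equivalence to pairs containing $v$ (since $h(v)=u$, the vertices $u$ and $v$ lie on the same cycle of $h$, so some intermediate iterate of $u$ or of $i$ equals $v$), where edge preservation has never been established --- and cannot be, precisely because of pseudosimilarity. (The induction step has a second flaw: from $G\backslash\{u'\}\cong G\backslash\{v\}$ it deduces $G\backslash\{u,u'\}\cong G\backslash\{u,v\}$ by ``removing $u$ from both graphs'', which presumes the isomorphism fixes $u$.) So your refusal to assert a general proof is the mathematically sound position; the lemma as stated is incorrect.

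Where your proposal falls short is the repair. You hope that, at the point of application, $G[V\backslash\{u\}]$ and $G[V\backslash\{v\}]$ ``coincide as the same labeled graph $G_{n-1}$'' so that a neighborhood-aligning isomorphism is available. In the paper's setting this is not so: membership in $\Pi[G_{1:n}]$ can only mean that each prefix $G[\pi_{1:t}]$ is isomorphic to $G_t$ as an unlabeled graph. (If it meant equality of position-labeled graphs, then $\Pi[G_{1:n}]$ would collapse to $\Pi[\bA]$, contradicting \Cref{fig:framework}; concretely, for a path on five vertices the theorem's product evaluates to $16$, the number of orderings with connected prefixes, whereas $|\Pi[\bA]|=|\mathrm{Aut}(G)|=2$.) Thus where the lemma is invoked one knows only $G[V\backslash\{\pi_n\}]\cong G_{n-1}\cong G[V\backslash\{\pi'_n\}]$ --- exactly the hypothesis of the false general statement; the position bijection $\pi_i\mapsto\pi'_i$ need not even be an isomorphism of these two subgraphs, let alone align the neighborhoods of the deleted vertices. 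Consequently the obstruction you identified propagates to Theorem~1 itself: if $u,v$ are pseudosimilar in $G_n$ and some ordering ends at $u$, composing with a deleted-subgraph isomorphism yields an ordering with the same unlabeled sequence ending at $v$, so $\Pi[G_{1:n}]$ contains orderings whose last vertex lies outside $r(G_n,u)$, and the product formula undercounts. A correct version would replace each orbit $r(G_t,\pi_t)$ by the removal-equivalence class $\{w\in V(G_t)\colon G_t\backslash\{w\}\cong G_{t-1}\}$; the two coincide exactly when no pseudosimilar pairs arise, which is the typical case in practice and explains why the paper's experiments are unlikely to be affected. In short: your forward direction and your counterexample-based skepticism are correct, but the contextual fix you sketch is not actually available, and no proof of the lemma as stated can exist.
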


\begin{proof}

\label{app:proof_lemma1}
Let `$\equiv$' denote the isomorphic relation. Also denote $E\backslash \{u \} = \{(i, j) \in E: i \neq u, j\neq u\}$ as the subset of edges that do not incident $u$. 

We first show the first direction: ``$u$ and $v$ being in the same orbit'' indicates ``$G[V\backslash\{u\}] \equiv G[V\backslash\{v\}]$''. If $u$ and $v$ are in the same orbit, then $\exists f \in \text{Auto}(G) : f(v) = u$. Then $\forall i,j \in V\backslash\{u\}$, $(i, j) \in E\backslash\{u\} \Longleftrightarrow (f(i), f(j)) \in E\backslash\{v\}$ because $f$ is an automorphism. Then we restrict $f$ to $V\backslash\{u\}$ and get a injection $f': V\backslash\{u\} \rightarrow V\backslash\{v\}$, and $f'(i) = f(i) ~ \forall i \in V\backslash\{u\}$. Then $\forall i,j \in V\backslash\{u\}$, $(i, j) \in E\backslash\{u\} \Longleftrightarrow (f'(i), f'(j)) \in E\backslash\{v\}$. Therefore, $f'$ is an isomorphism  between $G[V\backslash\{u\}]$ and $G[V\backslash\{v\}]$.

We then prove by induction the second direction: ``$G[V\backslash\{u\}] \equiv G[V\backslash\{v\}]$'' indicates that ``$u$ and $v$ being in the same orbit'' .  

In the base case, we consider graphs with two nodes. Let $G$ be $(V=\{u,v\}, E=\emptyset)$ or $(V=\{u, v\}, E=(u,v))$. In either case, we always have $G \backslash \{u\} \equiv G \backslash \{u\}$. The two nodes $u$ and $v$ are also in the same orbit in both cases. So the second direction holds in the base case.

Then in the induction step, we assume the second direction is true for any graph of size $n$, then we show that it is also true for a graph of size $n+1$. Let $f \in \text{Auto}(G)$, and $f(u) = u'$. There are three cases: $u'$ is $v$, $u'$ is $u$, or $u'$ is neither of them. If it is the first case, then we have the conclusion directly: $u$ and $v$ are in the same orbit. 

Then we check the third case. With the same argument in the proof of the first direction, we restrict $f$ to $V\backslash \{u\}$ and get an isomorphism: $G \backslash \{u\} \equiv G \backslash \{u'\}$. By the condition $G \backslash \{u\} \equiv G \backslash \{v\}$, we also have $G \backslash \{u'\} \equiv G \backslash \{v\}$. We then remove $u$ from both graphs and get $G \backslash \{u, u'\} \equiv G \backslash \{u, v\}$. With the induction rule, we have that $u'$ and $v$ in the same orbit in $G \backslash \{u\}$. Let $g(\cdot) \in \text{Auto}(G \backslash \{u\})$ and $g(u') = v$. We extend $g(\cdot)$ to $V$ and let $g(u) = u$, then $g \circ f $ creates an automorphism on $G$, and $(g\circ f)(u) = v$. Therefore, $u$ is in the same orbit as $v$.

Finally, we show how to construct an $f(\cdot)$ such that $f(u) = u' \neq u$.  
Since $G[V\backslash \{v\}] \equiv G[V\backslash \{u\}]$, there is a isomorphism $h: V\backslash \{v\} \rightarrow  V\backslash \{u\}$, and $h(u) = u'$. Note that $u'$ cannot be $u$ because $u$ is not in the range of $h$. We extend $h$ to the domain $V$ and let $h(v) = u$, so $h$ is a permutation of $V$. For any $i, j \in E\backslash\{v\}$, $(i, j) \in E\backslash \{v\} \Longleftrightarrow (h(i), h(j)) \in E\backslash \{u\}$ because $h$ is an isomorphism. It is also true that $(i, j) \in E \Longleftrightarrow (h(i), h(j)) \in E$ because $(i, j)$ does not incident $v$, and $(f(i), f(j))$ does not incident $u$. Since $h$ is a permutation, the composition of $h$ forms a group: $\{h^0, h^{1}, \ldots, h^K\}$. The inverse $h^{-1}$ is the same as $h^{K}$. Let $j \in V \backslash \{v\}$, and $j = h^{-1}(i), ~ i \in V\backslash \{u\}$, then $(h(v), h(j)) = (u, i)$. With the previous argument, $(u, i) \in E \Longleftrightarrow (h(u), h(i)) \in E$. By the compisition rule, we further have $ (h(u), h(i)) \in E \Longleftrightarrow \ldots \Longleftrightarrow  (h^K(u), h^K(i)) = (v, j) \in E$. This works for any $j \in V\backslash {v}$, that is, $\forall j \in V \backslash \{v\}, ~ (v, j) \in E \Longrightarrow (h(v), h(j)) \in E$, then $h$ is a non-trivial automorphism on $G$ and $h(u) \neq u'$.
\end{proof}

\begin{figure*}[t]
    \centering
    \includegraphics[width=\textwidth]{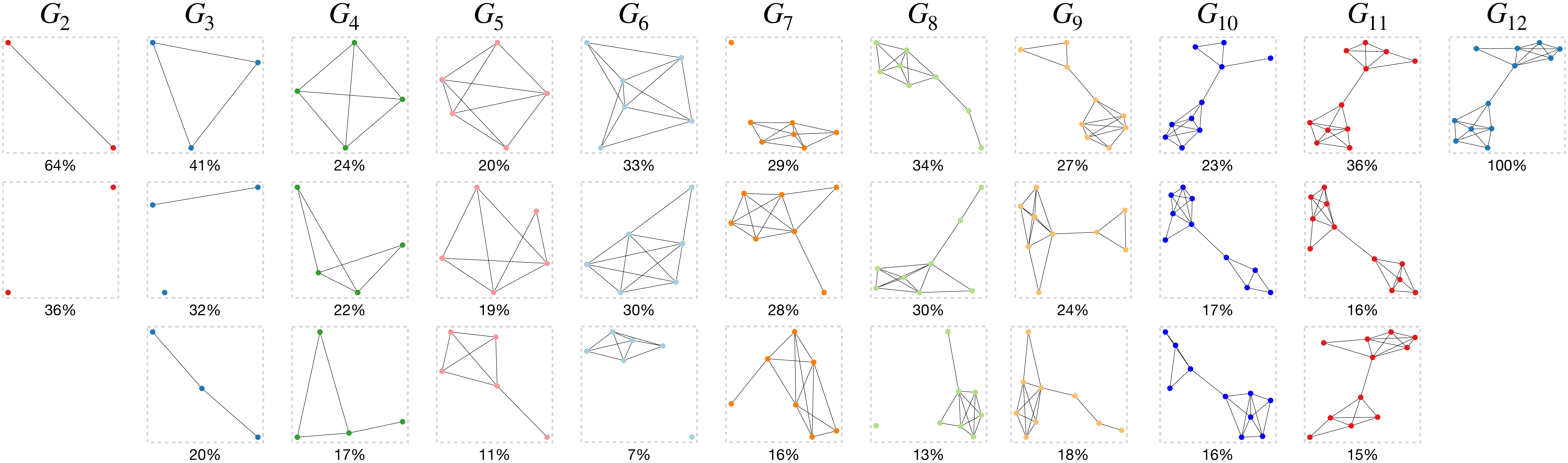}\\
    \caption{Graph generative sequences sampled from $q_{\phi}(\pi|G)$ for a graph from Community-small. The variational distribution prefers sequences of connected graphs. Similary to the distribution indicated in Figure \ref{fig:generation-comm} (bottom left), it first generates a comunity and then adds another one.}
    \label{fig:generation-sequences-community}
\end{figure*}

\subsection{The Accuracy of the Log-Likelihood Estimation}
\label{app:est-lik}

To make sure we give an accurate estimation of the log-likelihood, we compare the estimated log-likelihood using different number of importance samples against the true log-likelihood. We compute the true log-likelihood of a graph by enumerating all possible permutations. We conduct the experiment on two datasets, Yeast and Lung. Since the calculation of the true log-likelihood is only feasible on small graphs, we keep graphs with fewer than $10$ nodes in each of the two datasets.  We use GraphRNN trained by variational inference as the model here, and the proposal distribution is the learned $q_{\phi}(\pi|G)$. \Cref{fig:estimated_vs_exact_loglik} shows the results on the two datasets. We see that when the number of samples is over $1000$, the gap between the true log-likelihood and the estimated log-likelihood becomes  very small (less than $0.1$). When we increase the number of samples to $2200$, the estimation is very accurate for both datasets. We conclude that the importance sampling estimator can be reliably used for model selection and model comparison.
\begin{figure}[t]
    \includegraphics[width=0.45\textwidth]{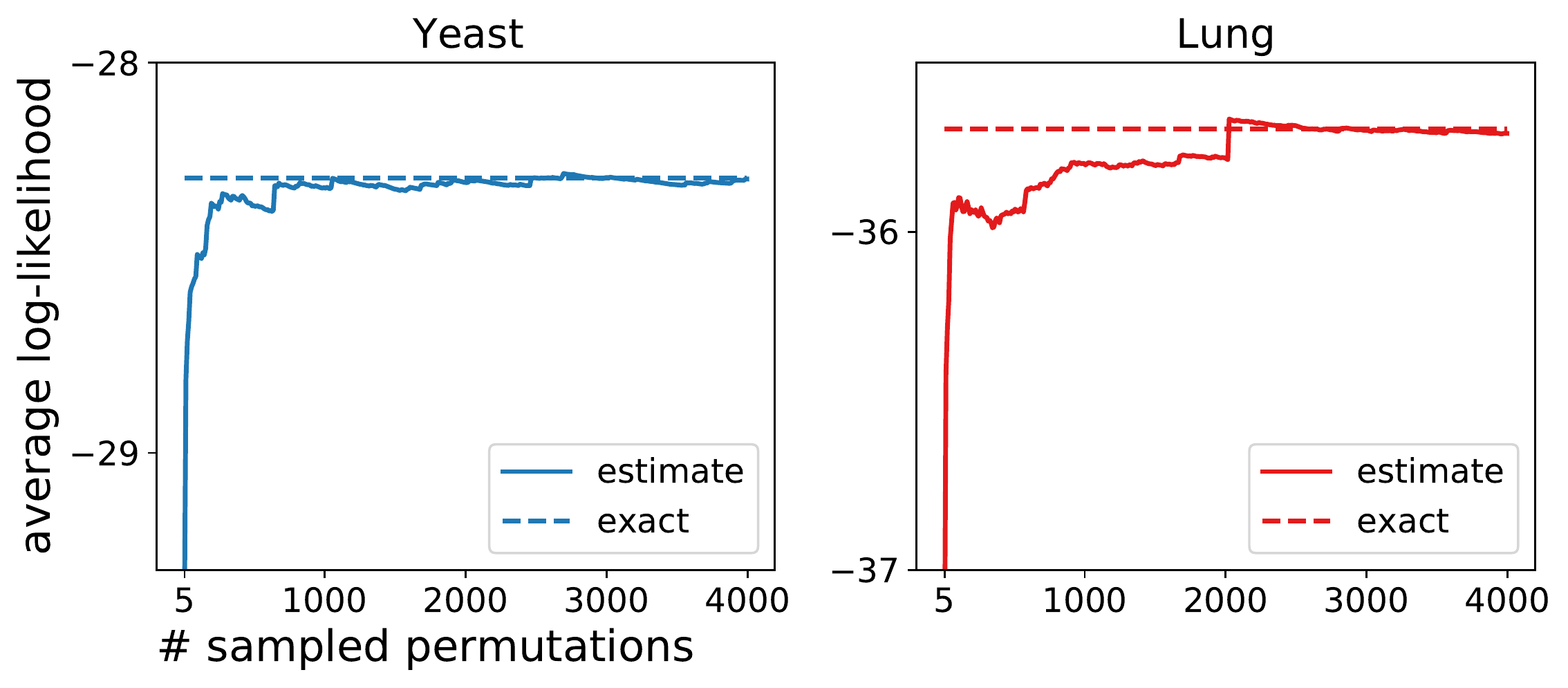}
    \caption{Comparison of estimated log-likelihood and exact log-likelihood for small graphs (fewer than 10 nodes) in the Lung and Yeast test sets. The estimation from $S=2200$ importance samples is very accurate.}
    \label{fig:estimated_vs_exact_loglik}
\end{figure}

\begin{figure*}[t]
    \centering
    \includegraphics[width=0.815\textwidth]{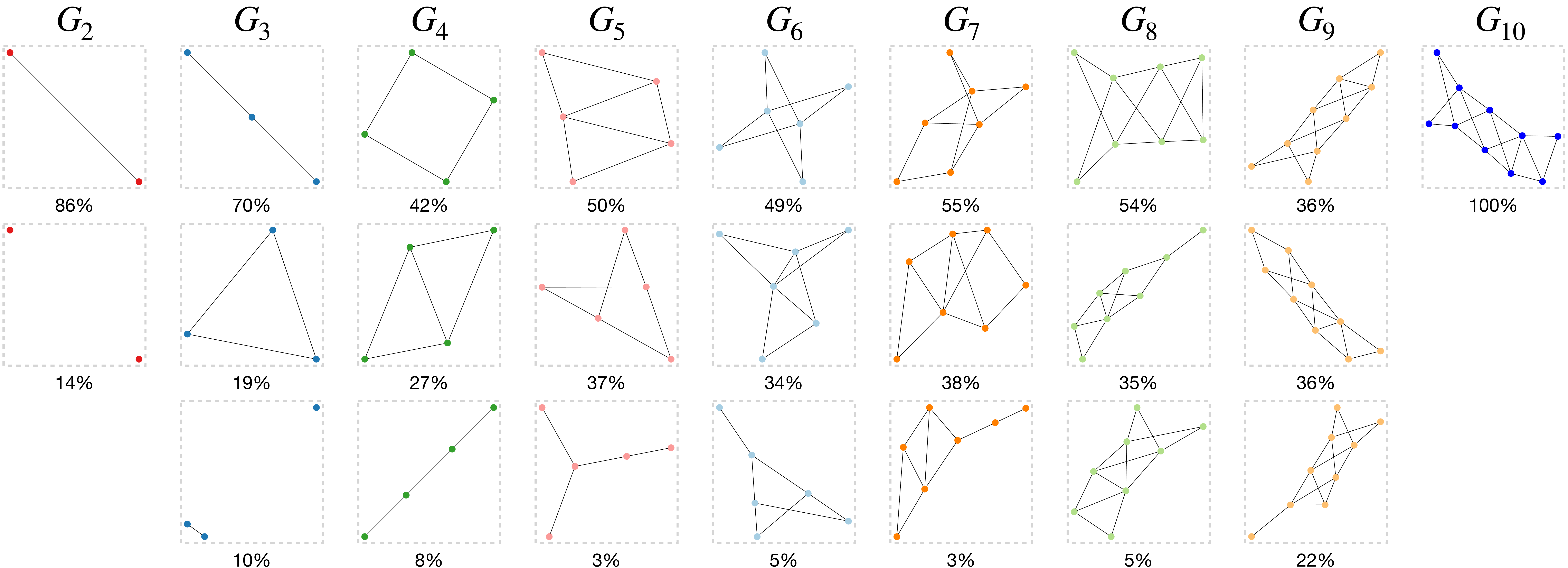}\\
    \caption{Graph generative sequences sampled from $q_{\phi}(\pi|G)$ for a graph from Enzymes. The variational distribution has a strong preference for sequences of connected graphs.}
    \label{fig:generation-sequences-enzymes}
\end{figure*}

\newpage
\subsection{Graph Sequence Pattern in DeepGMG}
\label{app:generation-sequences}

In Section~4, we have investigated the variational distribution when training GraphRNNs. Here we study the variational distribution when training DeepGMG. For this experiment, we also consider the Community-small and the Enzymes dataset in order to show how our model learns a set of preferred orders. We choose the smallest graph from each dataset (a graph with $12$ nodes for Community-small and a graph with $10$ nodes for Enzymes). For each graph, we sample $720$ graph sequences from the trained variational distribution. We show the sampled graph sequences in \Cref{fig:generation-sequences-community} and \Cref{fig:generation-sequences-enzymes}. Without any prior knowledge, the variational distribution has strong preference for sequences of connected graphs. In addition, in Community-small, just like GraphRNN, the model prefers to generate communities one by one.

\end{document}